\newcommand{\remove}[1]{}
\newcommand{\Draft}[1]{\ifnum\draft=1\texttt{ #1} \fi}
\newcommand{\sdotfill}{\textcolor[rgb]{0.8,0.8,0.8}{\dotfill}} 
\newcommand{\mat}[1]{\mathbf{#1}}
\newcommand{\vect}[1]{\boldsymbol{#1}}
\newcommand{\norm}[1]{\left\|#1\right\|}
\newcommand{\abs}[1]{\left|#1\right|}
\newcommand{\expect}{\mathbb{E}}
\newcommand{\indict}{\mathbb{I}}
\newcommand{\vectorize}[1]{\text{vec}\left(#1\right)}
\newcommand{\comment}[1]{}
\newcommand{\relu}[1]{\sigma\left(#1\right)}
\newcommand{\R}{{\mathbb R}}
\newcommand{\eps}{\varepsilon}
\newcommand{\E}{\mathbb{E}}
\newcommand{\poly}{\operatorname{poly}}
\renewcommand{\cref}{\Cref}
\newtheorem{theorem}{Theorem}[section]
\newtheorem*{theorem*}{Theorem}
\newenvironment{customthm}[1]
  {\innercustomthm}
  {\endinnercustomthm}
\newaliascnt{lemma}{theorem}
\newtheorem{lemma}[lemma]{Lemma}
\crefname{lemma}{Lemma}{Lemmas}
\newaliascnt{assumption}{theorem}
\newtheorem{assumption}[assumption]{Assumption}
\crefname{assumption}{Assumption}{Assumptions}
\newenvironment{customasm}[1]
  {\innercustomasm}
  {\endinnercustomasm}
\newaliascnt{problem}{theorem}
\crefname{problem}{Open Problem}{Open Problems}
\newaliascnt{claim}{theorem}
\crefname{claim}{Claim}{Claims}
\newaliascnt{question}{theorem}
\crefname{question}{Question}{Questions}
\newaliascnt{corollary}{theorem}
\newtheorem{corollary}[corollary]{Corollary}
\crefname{corollary}{Corollary}{Corollaries}
\newenvironment{customcor}[1]
  {\innercustomcor}
  {\endinnercustomcor}
\newaliascnt{construction}{theorem}
\crefname{construction}{Construction}{Constructions}
\newaliascnt{fact}{theorem}
\crefname{fact}{Fact}{Facts}
\newaliascnt{proposition}{theorem}
\crefname{proposition}{Proposition}{Propositions}
\newaliascnt{conjecture}{theorem}
\crefname{conjecture}{Conjecture}{Conjectures}
\newaliascnt{definition}{theorem}
\newtheorem{definition}[definition]{Definition}
\crefname{definition}{Definition}{Definitions}
\newaliascnt{remark}{theorem}
\crefname{remark}{Remark}{Remarks}
\newaliascnt{proto}{theorem}
\newtheorem{proto}[proto]{Protocol}
\crefname{proto}{protocol}{protocols}
\newaliascnt{algo}{theorem}
\crefname{algorithm}{algorithm}{algorithms}
\newaliascnt{expr}{theorem}
\newtheorem{expr}[expr]{Experiment}
\crefname{experiment}{experiment}{experiments}
\crefname{descriptiona}{description}{descriptions}
\def\FullBox{$\Box$}
\def\qed{\ifmmode\qquad\FullBox\else{\unskip\nobreak\hfil
\penalty50\hskip1em\null\nobreak\hfil\FullBox
\parfillskip=0pt\finalhyphendemerits=0\endgraf}\fi}
\def\qedsketch{\ifmmode\Box\else{\unskip\nobreak\hfil
\penalty50\hskip1em\null\nobreak\hfil$\Box$
\parfillskip=0pt\finalhyphendemerits=0\endgraf}\fi}
\newcommand{\calD}{\mathcal{D}}
\newcommand{\calR}{\mathcal{R}}
\newcommand{\Erc}[1]{\E_{{\bm\eps} \sim \{\pm 1\}^n }\left[ #1 \right]}
\def\vw{\mathbf{w}}
\def\vW{\mathbf{W}}
\def\va{\mathbf{a}}
\def\vZ{\mathbf{Z}}
\def\vX{\mathbf{X}}
\def\vx{\mathbf{x}}
\def\vy{\mathbf{y}}
\def\vbeta{\bm{\beta}}
\def\va{\vect{a}}
\def\vV{\mat{V}}
\def\vU{\mat{U}}
\def\vv{\vect{v}}
\def\vu{\vect{u}}
\def\vLambda{\mat{\Lambda}}
\def\vH{\mat{H}}
\newcommand{\calN}{\mathcal{N}}
\def\vTheta{\mat{\Theta}}
\def\f{f\left(\vx;\vTheta(t)\right)}
\def\vI{\mat{I}}
\def\R{\mathbb{R}}
\def\cF{\mathcal{F}}
\newcommand{\tr}{\mathrm{tr}}
\newcommand{\unif}{\mathrm{unif}}
\newcommand{\bone}[1]{\indict\left\{#1\right\}}
\def\vtheta{\vect{\theta}}
\def\valpha{\vect{\gamma}}
\def\thetas{\vtheta_S}
\def\thetat{\vtheta_T}
\def\thetash{\hat{\vtheta}_S}
\def\thetath{\hat{\vtheta}_T}
\def\Ppar{\mat{P}_{\parallel}}
\def\Pper{\mat{P}_{\perp}}
\def\vbeta{\vect{\beta}}
\def\init{\text{init}}
\def\vu{\vect{u}}
\def\vtildey{\tilde{\vy}}
\def\vSigma{\mat{\Sigma}}
\def\dist{\calD}
\def\lambdas{\lambda_{0_S}}
\title{A Theoretical Analysis of Fine-tuning with Linear Teachers}
\author{
  Gal Shachaf \\
  Blavatnik School of Computer Science,\\
  Tel Aviv University, Israel
  \And
  Alon Brutzkus \\
  Blavatnik School of Computer Science,\\
  Tel Aviv University, Israel
  \And
  Amir Globerson \\
  Blavatnik School of Computer Science,\\
  Tel Aviv University, Israel\\
  and Google Research
}
\begin{document}

\maketitle

\begin{abstract}
    Fine-tuning is a common practice in deep learning, achieving excellent generalization results on downstream tasks using relatively little training data. Although widely used in practice, it is lacking strong theoretical understanding. Here we analyze the sample complexity of this scheme for regression with linear teachers in several architectures. Intuitively, the success of fine-tuning depends on the similarity between the source tasks and the target task, however measuring this similarity is non trivial. We show that generalization is related to a measure that considers the relation between the source task, target task and covariance structure of the target data. In the setting of linear regression, we show that under realistic settings a substantial sample complexity reduction is plausible when the above measure is low. For deep linear regression, we present a novel result regarding the inductive bias of gradient-based training when the network is initialized with pretrained weights. Using this result we show that the similarity measure for this setting is also affected by the depth of the network. We further present results on shallow ReLU models, and analyze the dependence of sample complexity on source and target tasks in this setting. 
\end{abstract}

\section{Introduction}

In recent years fine-tuning has emerged as an effective 
approach to learning tasks with relatively little labeled data. In this setting, a model is first trained on a source task where much data is available (e.g., masked language modeling for BERT), and then it is further tuned using gradient descent methods on labeled data of a target task \cite{devlin2019bert, lee2019latent, geva2020injecting, gururangan2020don}. Furthermore, it has been observed that fine-tuning can outperform the strategy of fixing the representation learned on the source task, mainly in natural language processing \cite{devlin2019bert, peters2019tune}. 
Despite its empirical success, fine-tuning is poorly understood from a theoretical perspective. One apparent conundrum is that fine-tuned models can be much larger than the number of target training points, resulting in a heavily overparameterized model that is prone to overfitting and poor generalization. Thus, the answer must lie in the fact that fine-tuning is performed with gradient descent and not an arbitrary algorithm that could potentially ``ignore'' the source task \cite{ZhangBHRV17}. Here we set out to formalize this problem and understand the factors that determine whether fine-tuning will succeed. We note that this question can be viewed as part of the general quest to understand the implicit bias of gradient based methods \cite{ZhangBHRV17,lyu2019gradient, woodworth2020kernel, chizat2020implicit, wu2021direction, moroshko2020implicit, arora2019implicit, sarussi2021towards}, but in the particular context of fine-tuning.

We begin by highlighting the obvious link between fine-tuning and initialization. Namely, the only difference between ``standard'' training of a target task and fine-tuning on it, is the initial value of the model weights before beginning the gradient updates. Our goal is to understand the interplay between the model parameters at initialization (namely the source task), the target distribution, and the accuracy of the fine-tuned model. A natural hypothesis is that the distance between the pretrained and fine-tuned model weights is what governs the success of fine-tuning. Indeed, some argue that this is both the key to bound the generalization error of a model and the implicit regularization of gradient-based methods \cite{nagarajan2019generalization, li2020gradient, neyshabur2019towards, dziugaite2017computing}. However, this approach has been discouraged both by empirical testing of the generalization bounds inspired by it \cite{jiang2019fantastic} and by theoretical works showing this cannot be the inductive bias in deep neural networks \cite{razin2020implicit}. Our results further establish the hypothesis that the success of fine-tuning is affected by other factors.

In this paper we focus on the case in which both source and target regression tasks are linear functions of the input. We start by considering one layer linear networks, and derive novel sample complexity results for fine-tuning. We then proceed to the more complex case of deep linear networks, and prove a novel result characterizing the fine-tuned model as a function of both the weights after pretraining and the depth of the network, and use it to derive corresponding generalization results. 

Our results provide several surprising insights. First, we show that the covariance structure of the target data has a significant effect on the success of fine-tuning. In particular, sample complexity is affected by the degree of alignment between the source-target weight difference and the eigenvectors of the target covariance. Second, we find a strong connection between the depth of the network and the results of the fine-tuning process, since deeper networks will serve to cancel the effect of scale differences between source and target tasks. Our results are corroborated by empirical evaluations.

We conclude with results on ReLU networks, providing the first sample complexity result for fine-tuning. For the case of linear teachers, this asserts a simple connection between the source and target models and the test error of fine-tuning.

Taken together, our results demonstrate that fine-tuning is affected not only by some notion of distance between the source and target tasks, but also by the target covariance and the architecture of the model. These results can potentially lead to improved accuracy in this setting via appropriate design of the tasks used for pretraining and the choice of the model architecture.

\section{Related work}

Empirical work \cite{neyshabur2020being} has shown that two instances of models initialized from pre-trained weights are more similar in features space than those initialized randomly.
Other works \cite{chua2021fine, du2021fewshot, mcnamara2017risk} have shown that fine-tuned models generalize well when the representation used by the target task is  similar to the one used by the source tasks.

In linear regression, \cite{gunasekar2018characterizing} showed that gradient descent finds the solution with minimal distance to the initial weights. More recently, attention has turned towards the phenomenon of ``benign overfitting'' \cite{bartlett2020benign,hastie2019surprises} in high dimensional linear regression, where despite fitting noise in training data, population risk may be low. Theoretical analysis of this setting \cite{bartlett2020benign} studied how it is affected by the data covariance structure. Benign overfitting was also recently analyzed in the context of ridge-regression \cite{tsigler2020benign} and online stochastic gradient descent \cite{zou2021benign}. Our work continues this line of work on high dimensional regression, but differs from the above papers as we start from a source task, then train on a fixed training set from a target task and consider the global optimum of the this training loss (unlike online SGD). Furthermore, we go beyond the linear regression framework, and obtain surprising characteristics of fine-tuning in deep linear networks.

For linear regression with deep linear models, \cite{azulay2021implicit} have recently shown an implicit bias for a two-layer network with deterministic initialization, and \cite{yun2020unifying} have shown an implicit bias for a network with arbitrary depth and near-zero random initialization. Our work generalizes the inductive bias found by \cite{azulay2021implicit} to a network of arbitrary depth, and analyses the generalization error of such networks for infinite depth. For linear regression with shallow linear networks \cite{arora2019fine} have shown a generalization bound that depends only on the norm of the target task, which we use in Section \ref{sec:shallow_finetune}.

\section{Preliminaries and settings}\label{sec:pre}

\paragraph{Notations} Let $\|\cdot\|$ be the $L^2$ norm for vectors and the spectral norm for matrices. For a vector $\vv$ we denote $\hat{\vv} \triangleq \frac{\vv}{\norm{\vv}}$. For a matrix $\mat{M} \in \R^{d \times d}$ and some $0 \le m \le d$, we define $\mat{M}_{\le m} \in \R^{d \times m}$ to be the matrix containing the first $m$ columns of $\mat{M}$. Similarly, we let $\mat{M}_{>m}$ denote the matrix containing the columns from $m+1$ to $d$ in $\mat{M}$.

Let $\dist$ be a distribution over $\R^d$. Let $\mat{\Sigma}$ be the covariance matrix of $\dist$ and let $\mat{V\Lambda V}^\top$ be its eigenvalue decomposition such that $\lambda_1 \ge \ldots \ge \lambda_d$. We define the projection matrices:
\begin{align*}
    \mat{P}_{\le k} \triangleq \mat{V}_{\le k}\mat{V}_{\le k}^\top; \quad \mat{P}_{> k} \triangleq \mat{V}_{> k}\mat{V}_{> k}^\top,
\end{align*}
projecting onto the span of the top $k$ eigenvectors of $\vSigma$, onto the span of the $d-k$ bottom eigenvectors of $\vSigma$, respectively. We will refer to the former as the ``top-$k$ span'' of $\vSigma$, and to the latter as the ``bottom-$k$ span'' of $\vSigma$.

Let $\mat{X} \in \R^{n \times d}$ be the row matrix of $n<d$ samples drawn from $\dist$, and denote the empirical covariance matrix $\tfrac{1}{n}\vX^T \vX$ by $\tilde{\vSigma}$. Define $\mat{P}_{\parallel}$ to be the projection matrix into the row space of $\mat{X}$, and $\mat{P}_{\perp}$ to be the projection matrix into its orthogonal complement, i.e.:
\begin{align*}
    \Ppar \triangleq \mat{X}^\top(\mat{XX}^\top)^{-1}\mat{X}, \quad
    \Pper \triangleq \mat{I} - \Ppar.
\end{align*}

Consider a set of parameters $\vTheta$, and let $\vTheta(t)$ denote the set of parameters at time $t$. We denote the output of a model whose weights are $\vTheta(t)$ on a vector $\vx$ by $\f \in \R$. In the different sections of this work we will overload $f$ with different architectures.

We consider the problem of fine-tuning based transfer learning in regression tasks with linear teachers.
Let $\thetat \in \R^{d}$ be the ground-truth parameters of the target task, i.e. the linear teacher which we wish to learn, and $\vy \in \R^{n}$ be the target labels of $\mat{X}$, s.t. $\vy = \vX \thetat$.

We define $L(\vTheta)$ to be the empirical MSE loss on $\vX, \vy$ and define $R(\vTheta)$ as the $\dist$ population loss:
\begin{align*}
    L\left(\vTheta\right) \triangleq \frac{1}{n}\norm{f\left(\vX, \vTheta\right) - \vy}_2^2, \quad R(\vTheta) \triangleq \E_{\vx \sim \dist}\left[\left({\vx^\top\vect{\theta}_T - f\left(\vx, \vTheta\right)}\right)^2\right].
\end{align*}

We separate the training procedure into two parts. In the first ``pretraining'' part, we train a model on $n_S$ pretraining samples $\vX_S \in \R^{n_S \times d}$ labeled by a linear teacher $\thetas$ (i.e., $\vy_S = \vX_S \thetas \in \R^{n_S}$), resulting in the set of model weights $\vTheta_S$. In the second part, which we call fine-tuning, we initialize a model with the pretrained weights $\vTheta(0) = \vTheta_S$ and learn the target task by optimizing $L(\vTheta(t))$. 

Optimization is done by either gradient descent (GD) or gradient flow (GF). Let $\vtheta(t)$ be some weight vector or weight matrix in $\vTheta(t)$. The dynamics for gradient descent optimization with some learning rate $\eta >0$ are $
\vtheta(t+1) = \vtheta(t) -\eta \frac{\partial L\left(\vTheta(t)\right)}{\partial \vtheta(t)}$, and the dynamics for gradient flow are ${\dot\vtheta(t)} = - \frac{\partial L\left(\vTheta(t)\right)}{\partial \vtheta(t)}$. Next we state several assumptions about our setup.

\begin{assumption} \label{asm:non-singular}
    $\mat{X X^T}$ is non-singular. i.e. the rows of $\vX$ are linearly-independent.
\end{assumption}
This assumption holds with high probability for, e.g., a continuous distribution with support over a non-zero measure set. This assumption is only used for simplicity, as the high probability can be incorporated into the analysis.  

\begin{assumption}[Perfect pretraining] \label{asm:pretrain}
    The pretraining optimization process learns the linear teacher perfectly, e.g. for linear regression we assume that $
        f\left(\vx, \vTheta_S\right) = \vx^\top \thetas$, for $\vx \sim \dist$.
\end{assumption}
Notice that for linear and deep linear models, perfect pretraining can be achieved when $n_S \ge d$. Our results can be easily extended to the case where the equality $f\left(\vx, \vTheta_S\right) = \vx^\top \thetas$ holds approximately and with high probability, but for simplicity we assume equality.

\begin{assumption}[Zero train loss] \label{asm:train_loss}
    The fine-tuning converges, i.e. $\lim_{t \to \infty} L\left(\vTheta(t)\right) = 0.$
\end{assumption}
We note that when $f$ is standard linear regression, arbitrarily small train loss can be obtained via gradient descent. For deep linear networks, it can be shown \cite{arora2018convergence} that under suitable initialization a global optimum can be reached, and thus \cref{asm:train_loss} holds for this framework as well.

\section{Analyzing fine-tuning in linear regression} \label{sec:linear}
In this section we analyze fine-tuning for the case of linear teachers for linear regression when using gradient descent for optimization. We define $\vTheta(t) = \vw(t) \in \R^d$ and overload $f(\vx, \vTheta(t)) \triangleq \vx^\top \vw(t)$. In what follows we denote
the parameter learned in the fine-tuning process by $\valpha \triangleq \lim_{t \to \infty} \vw(t)$.

\subsection{Results}

The following known results (e.g.,  \cite{gunasekar2018characterizing, bartlett2020benign, wu2021direction}) show the inductive bias of gradient descent with non-zero initialization in under-determined linear regression and the corresponding population loss. 
\begin{theorem}\cite{gunasekar2018characterizing, bartlett2020benign, wu2021direction}\label{thm:linear_inductive_pop_risk}
    When $f(\vx, \vTheta)$ is a linear function, fine-tuning with GD under \cref{asm:non-singular}, \cref{asm:pretrain} and \cref{asm:train_loss} results in the following model:
    \begin{align} \label{eq:linear_sol}
        \valpha = \mat{P}_{\perp}\thetas + \mat{P}_{\parallel}\thetat,
    \end{align}
    and
    \begin{align} \label{eq:linear_risk}
        R(\valpha) = \norm{\vSigma^{\nicefrac{1}{2}}\mat{P}_{\perp}\left(\thetat - \thetas\right)}^2.
    \end{align}
\end{theorem}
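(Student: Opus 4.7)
The plan is to prove the two equalities in order: first establish the closed form \eqref{eq:linear_sol} for $\valpha$ by combining the zero-training-loss condition with an invariant of gradient descent, and then derive \eqref{eq:linear_risk} by a direct substitution into the population loss of a linear model.

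For \eqref{eq:linear_sol}, the key observation is that for $L(\vw) = \tfrac{1}{n}\norm{\vX\vw - \vy}^2$, the gradient $\tfrac{2}{n}\vX^\top(\vX\vw-\vy)$ lies in the row space of $\vX$ at every step. Hence, by induction on $t$, the increment $\vw(t) - \vw(0)$ stays in the row space of $\vX$, i.e., in the range of $\Ppar$. Combined with \cref{asm:pretrain}, which gives $\vw(0)=\thetas$, this yields $\Pper \valpha = \Pper \thetas$. On the other hand, \cref{asm:train_loss} implies $\vX\valpha = \vy = \vX\thetat$, and \cref{asm:non-singular} ensures $\Ppar = \vX^\top(\vX\vX^\top)^{-1}\vX$ is well-defined, from which $\Ppar\valpha = \vX^\top(\vX\vX^\top)^{-1}\vX\valpha = \vX^\top(\vX\vX^\top)^{-1}\vX\thetat = \Ppar\thetat$. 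Decomposing $\valpha = \Ppar\valpha + \Pper\valpha$ then gives \eqref{eq:linear_sol}.

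For \eqref{eq:linear_risk}, I would expand the population loss for a linear predictor:
\begin{align*}
R(\valpha) = \E_{\vx \sim \dist}\left[(\vx^\top(\thetat - \valpha))^2\right] = (\thetat-\valpha)^\top \vSigma (\thetat-\valpha) = \norm{\vSigma^{1/2}(\thetat-\valpha)}^2.
\end{align*}
Substituting the form of $\valpha$ from \eqref{eq:linear_sol} gives $\thetat - \valpha = (\mat{I} - \Ppar)\thetat - \Pper\thetas = \Pper(\thetat - \thetas)$, which plugged into the above yields \eqref{eq:linear_risk}.

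Since both steps reduce to standard linear algebra and a one-line gradient-descent invariant, I do not expect any serious obstacle; the one subtlety worth stating explicitly is the justification that $\vw(t)-\vw(0) \in \mathrm{range}(\Ppar)$ under discrete GD, which is why the proof writes the increment as a sum of row-space vectors and appeals to \cref{asm:train_loss} only to pin down the $\Ppar$-component. The result is stated in previous works, so the proposal is essentially to assemble these two observations cleanly rather than to introduce any new machinery.
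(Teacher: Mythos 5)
Your proof is correct and follows essentially the same route as the paper: both rely on the observation that GD increments lie in the row space of $\vX$, use zero training loss to pin down the $\Ppar$-component, and then expand the population loss as $\norm{\vSigma^{1/2}(\thetat-\valpha)}^2$ and substitute. The paper explicitly solves for the coefficient vector $\va$ in $\valpha = \thetas + \vX^\top\va$ and then rewrites in terms of projections, whereas you bypass this by projecting $\valpha$ onto $\Ppar$ and $\Pper$ directly; this is a cosmetic difference, not a different argument.
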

\cref{thm:linear_inductive_pop_risk} provides two interesting observations: the first is that $\valpha$ consists of two parts, one which is the projection of the initial weights $\thetas$ into the null space of $\vX$, and the other which is the projection of $\thetat$ into the span of $\vX$. The second observation is that the population risk depends solely on the difference $\thetat - \thetas$ that is projected to the null space of the data. For completeness, the proof of \cref{thm:linear_inductive_pop_risk} is given in the supplementary.

\cref{thm:linear_inductive_pop_risk} depends on the data matrix $\vX$ (via $\mat{P}_{\parallel},\mat{P}_{\perp}$). However, to better understand the properties of fine-tuning, a high probability bound on $R$ that does not depend on $\vX$ is desirable. We provide such a bound, highlighting the dependence of the population risk on the source and target tasks, and the target covariance $\Sigma$.
\begin{theorem} \label{thm:linear_upper}
    Assume the conditions of \cref{thm:linear_inductive_pop_risk} hold, and
    assume that the rows of $\vX$ are i.i.d. subgaussian centered random vectors. Then, there exists a constant $c>0$, such that, for all $\delta \ge 1$, and for all $1 \le m \le d$ such that $\lambda_m > 0$, with probability at least $1-e^{-\delta}$ over $\vX$, the population risk $R(\valpha)$ is bounded by:
    \begin{align} \label{eq:thm_upper_cauchy}
        2g(\vect{\lambda}, \delta, n)^{3}\frac{\|\mat{P}_{\le m}(\thetat - \thetas)\|^2}{\lambda_{m}^2} + 2g(\vect{\lambda}, \delta, n)\|\mat{P}_{> m}(\thetat - \thetas)\|^2,
    \end{align}
    where $
         g(\vect{\lambda}, \delta, n) = c \lambda_1\max\{\sqrt{\tfrac{\sum_i \lambda_i}{n\lambda_1}},\tfrac{\sum_i \lambda_i}{n\lambda_1}, \sqrt{\tfrac{\delta}{n}}, \tfrac{\delta}{n}\}
    $  and $\norm{\tilde{\vSigma} - \vSigma} \le g(\vect{\lambda}, \delta, n)$.
\end{theorem}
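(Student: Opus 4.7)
The starting point is \cref{thm:linear_inductive_pop_risk}, which gives $R(\valpha) = \|\vSigma^{\nicefrac{1}{2}}\Pper(\thetat - \thetas)\|^2$; my plan is to replace the data-dependent projection $\Pper$ by deterministic quantities on a high-probability event for the sample covariance. First, I would invoke a standard spectral-norm concentration bound for the sample covariance of $n$ i.i.d.\ centered subgaussian vectors (of the Koltchinskii--Lounici type, as stated e.g.\ in Vershynin's \emph{High-Dimensional Probability}): the four regimes $\sqrt{\tr(\vSigma)/(n\lambda_1)}$, $\tr(\vSigma)/(n\lambda_1)$, $\sqrt{\delta/n}$, $\delta/n$ are exactly the four terms in the $\max$ defining $g$, so that there is an absolute constant $c>0$ with $\|\tilde{\vSigma}-\vSigma\| \le g(\vect{\lambda},\delta,n)$ holding with probability at least $1-e^{-\delta}$ for every $\delta\ge 1$. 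I condition on this event throughout.

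Decompose $\vbeta := \thetat-\thetas = \mat{P}_{\le m}\vbeta + \mat{P}_{>m}\vbeta$ and use $(a+b)^2 \le 2(a^2+b^2)$ to split
\[
R(\valpha) \;\le\; 2\|\vSigma^{\nicefrac{1}{2}}\Pper\mat{P}_{\le m}\vbeta\|^2 + 2\|\vSigma^{\nicefrac{1}{2}}\Pper\mat{P}_{>m}\vbeta\|^2.
\]
The core identity is that any $u$ in the null space of $\vX$ satisfies $\tilde{\vSigma}u = \tfrac{1}{n}\vX^\top\vX u = 0$, and therefore
\[
\|\vSigma^{\nicefrac{1}{2}}u\|^2 \;=\; u^\top(\vSigma-\tilde{\vSigma})u \;\le\; g\|u\|^2.
\]
Applied to $u = \Pper\mat{P}_{>m}\vbeta$ with $\|\Pper\|\le 1$, this immediately yields $\|\vSigma^{\nicefrac{1}{2}}\Pper\mat{P}_{>m}\vbeta\|^2 \le g\|\mat{P}_{>m}\vbeta\|^2$, which is the bottom-span contribution in the target bound. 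Applied to $u = \Pper\mat{P}_{\le m}\vbeta$, it reduces the top-span term to controlling $\|\Pper\mat{P}_{\le m}\vbeta\|^2$.

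For the latter I would invoke a Davis--Kahan $\sin\Theta$ estimate: $\Pper$ is the spectral projection of $\tilde{\vSigma}$ onto eigenvalue $0$ while $\mat{P}_{\le m}$ is the spectral projection of $\vSigma$ onto $[\lambda_m,\lambda_1]$; the two spectral sets are separated by $\lambda_m$, so together with $\|\vSigma-\tilde{\vSigma}\|\le g$ the $\sin\Theta$ theorem gives $\|\Pper\mat{P}_{\le m}\| \le g/\lambda_m$ and hence $\|\Pper\mat{P}_{\le m}\vbeta\|^2 \le (g/\lambda_m)^2\|\mat{P}_{\le m}\vbeta\|^2$. Chaining this with the null-space inequality produces the factor $g^3/\lambda_m^2$ on the top-span term, and summing the two contributions (with the prefactor $2$) yields exactly \eqref{eq:thm_upper_cauchy}.

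The main obstacle is the $g/\lambda_m$ estimate on $\|\Pper\mat{P}_{\le m}\|$. The only obviously available facts about $\Pper$ are $\tilde{\vSigma}\Pper = 0$ and $\|\Pper\|\le 1$; combining these with the spectral lower bound $u^\top\vSigma u \ge \lambda_m\|\mat{P}_{\le m}u\|^2$ and Cauchy--Schwarz appears to lose a square root, producing only $\sqrt{g/\lambda_m}$, and even a self-improving iteration of this direct argument stalls at $(g/\lambda_m)\|\mat{P}_{\le m}\vbeta\|^2$, giving an $R(\valpha)$ bound with a weaker $g^2/\lambda_m$ factor on the top-span term. Recovering the sharp $g^3/\lambda_m^2$ rate genuinely requires the subspace-perturbation viewpoint of Davis--Kahan, which exploits the block structure of $\vSigma$ in the $\Ppar/\Pper$ decomposition rather than operating purely at the level of norm bounds.
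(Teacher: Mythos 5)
Your proposal is correct and follows essentially the same route as the paper's proof: both hinge on the observation that $\tilde{\vSigma}\Pper = 0$ (so $\vSigma$ acting on null$(\vX)$ can be replaced by $\vSigma-\tilde{\vSigma}$), both control $\|\Pper\mat{P}_{\le m}\|$ via the Davis--Kahan $\sin\Theta$ theorem with spectral gap $\lambda_m$, and both invoke the same Koltchinskii--Lounici concentration bound for $\|\tilde{\vSigma}-\vSigma\|$. The only cosmetic difference is ordering: the paper first peels off the factor $\|\vSigma-\tilde{\vSigma}\|$ via Cauchy--Schwarz and then decomposes $\|\Pper(\thetat-\thetas)\|$ into top/bottom pieces, whereas you decompose $\thetat-\thetas$ first and apply the null-space identity to each piece; both orderings yield the identical constants $2g^3/\lambda_m^2$ and $2g$, so this is the same argument.
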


In the proof, we address the randomness of $\Pper(\thetat - \thetas)$ in \eqref{eq:linear_risk}, by decomposing $\thetat - \thetas$ into its top-$k$ span and bottom-$k$ span components, and then applying the Davis-Kahan sin($\Theta$) theorem \cite{davis1970rotation} to bound the norm of the projection of the former to the null space of the data. The full proof is given in the supp.

The bound in \cref{thm:linear_upper} has two key components. The first is the function $g(\vect{\lambda},\delta,n)$ that captures how well the covariance $\vSigma$ is estimated, and shows the dependence of the bound on the number of train samples used (as it depends on $n^{-0.5}$). The second relates to the two matrix norms of $\thetat - \thetas$ with respect to different parts of the covariance $\vSigma$. Notice that the term relating to the \textit{top-k} span decreases like $n^{-1.5}$, while the term relating to \textit{bottom-k} span decreases like $n^{-0.5}$.

This theorem highlights the conditions under which fine-tuning is expected to perform well. For small enough $n$ s.t. $g(\vect{\lambda},\delta,n) > 1$, the bound mainly depends on $\|\mat{P}_{\le m}(\thetat - \thetas)\|$. In this case, the bound will be low if $\thetat$ and $\thetas$ are close in the span of the \textit{top} eigenvectors of the target distribution. On the other hand, for large enough $n$ s.t. $g(\vect{\lambda},\delta,n) < 1$, the bound mainly depends on $\|\mat{P}_{> m}(\thetat - \thetas)\|$. Thus, the bound will be low if $\thetat$ and $\thetas$ are close in the span of the \textit{bottom} eigenvectors of the target distribution.

We conclude with a remark regarding the integer $m$ appearing in the bound, in the case where $g(\vect{\lambda},\delta,n) < 1$. While finding the exact $m$ that minimizes the bound is not straightforward, the trade-off in selecting it suggests taking the largest $m$ which holds $\lambda_{m+1} \approx \lambda_m$. This will ``cover'' more of $\mat{P}_{>m}\left(\thetat - \thetas\right)$ without greatly increasing the left part of \eqref{eq:thm_upper_cauchy}. 

\subsection{Experiments}
In Figure \ref{fig:linear_synthetic} we empirically verify the conclusions from the bound in \eqref{eq:thm_upper_cauchy}. We set $d=1000$ and design the target covariance $\vSigma$ s.t. the first $m=50$ eigenvalues are significantly larger than the rest (1.5 vs. 0.3).
We then consider two settings for $\thetat - \thetas$. In the first, which we call ``Top Eigen Align'', we select $\thetat$ and $\thetas$ such that $\mat{P}_{\le m}(\thetat - \thetas) = 0$. In the second which we call ``Bottom Eigen Align'' we set $\mat{P}_{> m}(\thetat - \thetas) = 0$. In both settings we use the same norm $\|\thetat - \thetas\|_2$, to show that the bound is not affected by this norm. 

As discussed above, our bound suggests better generalization performance of ``Bottom Eigen Align'' for large $n$ and better performance of ``Top Eigen Align'' for small $n$. Indeed, we see that while for very few samples ``Top Eigen Align'' has a lower population loss than ``Bottom Eigen Align'', the population loss of "Bottom Eigen Align" drops significantly as $n$ grows, and drops to zero well before $n=d$. 

\comment{
The conclusion on the preferred alignment of $\thetat$ and $\thetas$ from \cref{thm:linear_upper} is shown in $\cref{fig:linear_synthetic}$. For $d=1000$, we design the target covariance s.t. the first $m=50$ eigenvalues are much larger than the rest (1.5 vs. 0.3). We initialize two predictors: "Top Eigen Align" is initialized with $\mat{P}_{\le m}(\thetat - \thetas) = 0$ and "Bottom Eigen Align" is the linear predictor initialized with $\mat{P}_{> m}(\thetat - \thetas) = 0$. We also initialize the two predictors s.t. the norm of the difference from each one of them to $\thetat$ is approximately the same. As discussed above, our bound suggests better generalization for "Bottom Eigen Align" as $n$ grows, since its convergence depends more on it. Indeed, we see that while for very samples "Top Eigen Align" has a lower population loss than "Bottom Eigen Align", the population loss of "Bottom Eigen Align" drops significantly as $n$ grows, and drops to zero well before $n=d$. The better generalization of "Top Eigen Align" for very few samples is also explained by this dependence on the estimation of $\vSigma$, since for a small $n$ this factor can be much larger than 1, which effects "Bottom Eigen Align" more than "Top Eigen Align".
}

We next evaluate the bound on fine-tuning tasks taken from the MNIST dataset \cite{lecun-mnisthandwrittendigit-2010}, and compare it to alternative bounds. Specifically, since we do not expect bounds to be numerically accurate, we calculate the correlation between the actual risk in the experiment and the risk predicted by the bounds. The task we consider (both source and target) is binary classification, which we model as regression to outputs $\{-1,+1\}$. We generate $K$ source-target  task pairs (e.g., source task is label $2$ vs label $3$ and target tasks is label $5$ vs label $6$). For each such pair we perform source training followed by fine-tuning to target. We then record both the 0-1 error on an independent test set and the value predicted by the bounds. This way we obtain $K$ pairs of points (i.e., actual error vs bound), and calculate the $R^2$ for these pairs, indicating the level to which the bound agrees with the actual error. In addition to our bound in \eqref{eq:thm_upper_cauchy}, we consider the following: the norm of source-target difference $\norm{\thetat - \thetas}^2$ and a bound adapted from \cite{bartlett2020benign} to the case of fine-tuning.\footnote{The adaptation is straightforward: since the population loss for non-random initialization depends on $\thetat - \thetas$ instead of $\thetat$, we can replace the ground-truth expression $\vtheta^\star$ in Theorem 4 from \cite{bartlett2020benign} with $\thetat - \thetas$.} The results in Table \ref{tab:MNIST_linear} show that there is a strong correlation between our bound and the actual error, and the correlation is weaker for the other bounds.
\comment{
\cref{tab:MNIST_linear} demonstrates the efficacy of \eqref{eq:thm_upper_cauchy} as a predictor of accuracy in various transfer tasks in MNIST \cite{lecun-mnisthandwrittendigit-2010}. We choose 5 non-overlapping sets of labels and train a 1-accuracy predictor on each set. We then construct 20 transfer experiments, such that at each experiment we pick two sets, initialize the weights with the weights of the predictor of the source task and train it on $k$ train samples from the train set of the target (labeled by the "ground truth" of the target task). We then evaluate the fine-tuned predictor on the test set of the target. We run each experiment 25 times for different sampled train sets, and then calculate the correlation between the mean accuracy of each experiment the bounds given by $\norm{\thetat - \thetas}^2$, a bound adapted to the case of fine-tuning from \cite{bartlett2020benign}, and \cref{thm:linear_upper}. We run this overall experiment 10 times and show the mean and standard deviation of each $R^2$ value. We clearly see that there is a strong correlation between the connection between $\thetat - \thetas$ and $\vSigma$, as shown in our bound, to the generalization of fine-tuning.
}
\begin{table}
  \caption{Correlation coefficient $R^2$ between the accuracy on different transfer tasks in MNIST and various population risk upper bounds. Each value is a mean over 10 calculations of $R^2$ with different initialization, and each $R^2$ is calculated from $20$ points, each one representing a mean accuracy value of 25 random samples.}
  \label{tab:MNIST_linear}
  \centering
  \begin{tabular}{llllll}
    \toprule
    Number of Samples & 10 & 15 & 20 & 25 & 30  \\
    \midrule
    $\norm{\thetat - \thetas}^2$ & 0.69 $\pm$ 0.03 & 0.68 $\pm$ 0.04 & 0.66 $\pm$ 0.04 & 0.64 $\pm$ 0.03 & 0.62 $\pm$ 0.02 \\
    Bound from \cite{bartlett2020benign} & 0.73 $\pm$ 0.03 & 0.75 $\pm$ 0.03 & 0.74 $\pm$ 0.03 & 0.71 $\pm$ 0.02 & 0.67 $\pm$ 0.02\\
    Ours for $m=2$ & \textbf{0.86 $\pm$ 0.02}  & \textbf{0.89 $\pm$ 0.02} & \textbf{0.84 $\pm$ 0.02} & \textbf{0.75 $\pm$ 0.01} & \textbf{0.69 $\pm$ 0.02}\\
    \bottomrule
  \end{tabular}
\end{table}

\begin{figure}[ht]
\centering
\begin{subfigure}{0.55\textwidth}
  \centering
  \includegraphics[width=1\linewidth]{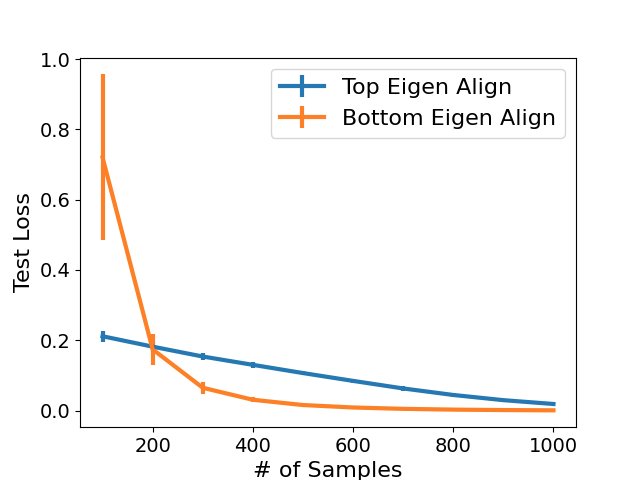}
\end{subfigure}
\caption{Comparison between different $\thetat - \thetas$. "Top Eigen Align" is the linear predictor initialized with $\mat{P}_{\le m}(\thetat - \thetas) = 0$ and "Bottom Eigen Align" is the linear predictor initialized with $\mat{P}_{> m}(\thetat - \thetas) = 0$, for $m$=50. The top $m$ eigenvalues have the value 1.5, compared to the rest which have the value 0.3.}
\label{fig:linear_synthetic}
\end{figure}

\section{Analyzing fine-tuning in deep linear networks} \label{sec:deep_linear}
In this section we focus on the setting of overparameterized deep linear networks. Although the resulting function is linear in its inputs, like in the previous section, we shall see that the effect of fine-tuning is markedly different. Previous works (e.g. \cite{arora2018optimization,ji2019gradient}) have shown that linear networks exhibit many interesting properties which make them a good study case towards more complex non-linear networks.

We consider networks with $L$ layers, given by the following matrices: $\vTheta(t) = \{\vW_1(t), \cdots, \vW_L(t)\}$ s.t. $\vW_j(t) \in \R^{d_{j-1} \times d_{j}}$, $d_0 = d$, $d_L = 1$ and for $1 \le j \le L-1: d_j \ge d$. We also define:
\begin{align*}
    &\vbeta(t) = \vW_1(t) \cdot \vW_2(t) \cdots \vW_{L}(t),
\end{align*}
such that $\f(t) = \vx^\top \vbeta(t)$. From \cref{asm:pretrain}, we have that $\vbeta(0) = \thetas$.

We recall the condition of perfect balancedness (or 0-balancedness) \cite{arora2018convergence}:
\begin{definition}
    The weights of a depth $L$ deep linear network at time $t$ are called 0-balanced if:
    \begin{align} \label{eq:balancedness}
        \vW_j(t)^\top \vW_j(t) = \vW_{j+1}(t) \vW_{j+1}(t)^\top \quad \text{for} \quad j \in [L-1].
    \end{align}
\end{definition}

Our analysis requires the initial random initialization (prior to pretraining) to be 0-balanced, which can be achieved with a near zero random initialization, as discussed in \cite{arora2018convergence}.
We provide three results on the effect of fine-tuning in this setting. The first result shows the inductive bias of fine-tuning a depth $L$ deep linear network (\cref{thm:deep_linear_solution}), which holds for arbitrary $L$ and generalizes known results for $L=1$ (\cref{thm:linear_inductive_pop_risk}) and $L=2$ \cite{azulay2021implicit}. The second result analyzes the population risk of such a predictor when $L \rightarrow \infty$ for certain settings (\cref{thm:deep_rescaled} and \cref{thm:deep_infty_rand_proj_generalization}). The third result shows why fixing the first layer (or any set of layers containing the first layer) after pretraining can harm fine-tuning (\cref{thm:deep_freeze}).

The next theorem characterizes the model learned by fine-tuning in the above setting (it can thus be viewed as the deep-linear version of the $L=1$ result in \cref{thm:linear_inductive_pop_risk}):
\begin{theorem} \label{thm:deep_linear_solution}
    Assume that before pretraining, the weights of the model were 0-balanced and that \cref{asm:non-singular}, \cref{asm:pretrain} and \cref{asm:train_loss} hold. Then:
    \begin{align} \label{eq:deep_L_sol}
        \lim_{t \to \infty} \vbeta(t) = \left(\frac{\|\lim_{t \to \infty}\vbeta(t)\|}{\norm{\thetas}}\right)^{\frac{L-1}{L}}\Pper\thetas + \Ppar\thetat
    \end{align}
    and:
    \begin{align} \label{eq:deep_infty_sol}
        \lim_{L \to \infty}\lim_{t \to \infty} \vbeta(t) = \frac{\|\Ppar\thetat\|}{\|\Ppar\thetas\|}\Pper\thetas + \Ppar\thetat.
    \end{align}
\end{theorem}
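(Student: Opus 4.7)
The plan is to use the fact that 0-balancedness is preserved by gradient flow \cite{arora2018convergence}, which forces every $\vW_j(t)$ to remain rank one with a common singular value $\sigma(t)=\|\vbeta(t)\|^{1/L}$ throughout pretraining and fine-tuning. Writing $\vW_j(t)=\sigma(t)\,u_j(t)\,v_j(t)^\top$ with $v_j=u_{j+1}$ (by balancedness, modulo signs) and $\hat\vbeta(t)=u_1(t)$, one checks that $A_jA_j^\top=\sigma^{2(j-1)}\hat\vbeta\hat\vbeta^\top$ for $j\ge 2$ (and $I$ for $j=1$) and $\|B_j\|^2=\sigma^{2(L-j)}$, where $A_j=\vW_1\cdots\vW_{j-1}$ and $B_j=\vW_{j+1}\cdots\vW_L$. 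Plugging these into the chain-rule identity $\dot\vbeta=-\sum_j \|B_j\|^2\,A_jA_j^\top\,\vect{r}$, with $\vect{r}=\nabla_\vbeta L(\vbeta)=\tfrac{2}{n}\vX^\top(\vX\vbeta-\vy)$ lying in the row space of $\vX$, telescopes the sum into the clean ODE
\begin{align*}
\dot\vbeta \;=\; -\|\vbeta\|^{2(L-1)/L}\bigl[\vect{r}+(L-1)(\hat\vbeta^\top\vect{r})\,\hat\vbeta\bigr].
\end{align*}

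Next I would decompose $\vbeta(t)=\Ppar\vbeta(t)+\Pper\vbeta(t)$. Since $\Pper\vect{r}=0$, projecting the ODE above onto the null space of $\vX$ leaves $\dot{\vbeta_\perp}=\alpha(t)\vbeta_\perp$ with scalar $\alpha(t)=-(L-1)(\hat\vbeta^\top\vect{r})\|\vbeta\|^{(L-2)/L}$, so $\vbeta_\perp(t)$ never changes direction and equals $c(t)\Pper\thetas$ for a positive scalar $c(t)$ with $c(0)=1$. By \cref{asm:train_loss}, $\vX\vbeta(t)\to\vy$, which forces $\Ppar\vbeta(t)\to\Ppar\thetat$, so the limit already takes the claimed form $c_\infty\Pper\thetas+\Ppar\thetat$.

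To pin down $c_\infty$ I would compare the scalar ODEs for $\log\|\vbeta_\perp\|$ and $\log\|\vbeta\|$: from $\dot{\vbeta_\perp}=\alpha(t)\vbeta_\perp$ one reads off $\tfrac{d}{dt}\log\|\vbeta_\perp\|=\alpha(t)$, while a direct computation from the formula for $\dot\vbeta$ gives $\tfrac{d}{dt}\log\|\vbeta\|=-L(\hat\vbeta^\top\vect{r})\|\vbeta\|^{(L-2)/L}$, so the two are related by $\tfrac{d}{dt}\log\|\vbeta_\perp\|=\tfrac{L-1}{L}\tfrac{d}{dt}\log\|\vbeta\|$. Integrating from $0$ to $\infty$ with $\|\vbeta(0)\|=\|\thetas\|$ and $\|\vbeta_\perp(0)\|=\|\Pper\thetas\|$ and cancelling the common factor $\|\Pper\thetas\|$ yields $c_\infty=(\|\lim_t\vbeta(t)\|/\|\thetas\|)^{(L-1)/L}$, which is exactly \eqref{eq:deep_L_sol}. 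For \eqref{eq:deep_infty_sol} I would substitute this into $\|\vbeta^*\|^2=c_\infty^2\|\Pper\thetas\|^2+\|\Ppar\thetat\|^2$ and send $L\to\infty$: since $(L-1)/L\to 1$, the implicit equation reduces to $\|\vbeta^*\|^2(1-\|\Pper\thetas\|^2/\|\thetas\|^2)=\|\Ppar\thetat\|^2$, equivalently $c_\infty=\|\Ppar\thetat\|/\|\Ppar\thetas\|$.

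The main obstacle is making the opening paragraph rigorous: one must choose SVD representations of the $\vW_j(t)$ so that the identification $v_j=u_{j+1}$ persists for all $t$ without sign flips, and verify that the rank-one balanced structure is maintained throughout (which in particular requires checking that $\|\vbeta(t)\|$ stays strictly positive so that $\hat\vbeta$ is well-defined). Once the ODE and the rank-one structure are in hand, the row-space/null-space decomposition, the logarithmic integration, and the $L\to\infty$ limit are all essentially routine algebraic manipulations.
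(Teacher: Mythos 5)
Your proof is correct, but it follows a genuinely different route than the paper. The paper works at the level of the \emph{first weight matrix} $\vW_1(t)$: it shows $\dot{\vW}_1(t)$ is rank one with columns in the row space of $\vX$, concludes $\Pper\vW_1(t)$ is frozen, then proves a separate lemma (\cref{lm:v_1_finegrained}) that the top right singular vector $\vect v_1(t)$ of $\vW_1(t)$ also stays constant (requiring $\|\Pper\vW_1(0)\|_F>0$ and a case split on sign flips), and finally multiplies $\vW_1^\perp(t)\vW_2(t)\cdots\vW_L(t)$ out algebraically to obtain the rescaling factor $\left(\|\vbeta(t)\|/\|\thetas\|\right)^{(L-1)/L}$. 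You instead work directly with the end-to-end parameter $\vbeta(t)$: you derive the standard deep-linear ODE $\dot\vbeta=-\|\vbeta\|^{2(L-1)/L}\bigl[\vect r+(L-1)(\hat\vbeta^\top\vect r)\hat\vbeta\bigr]$, project onto $\ker\vX$ to get the direction-preserving scalar ODE $\dot{\vbeta_\perp}=\alpha(t)\vbeta_\perp$, and then observe the clean relation $\tfrac{d}{dt}\log\|\vbeta_\perp\|=\tfrac{L-1}{L}\tfrac{d}{dt}\log\|\vbeta\|$ and integrate it. This dispenses with the $\vect v_1$-constancy lemma entirely, and since your ODE only involves the rank-one projector $\hat\vbeta\hat\vbeta^\top$, the sign-ambiguity you worry about in your final paragraph actually does not arise in your formulation (the projector is invariant under $\hat\vbeta\mapsto -\hat\vbeta$). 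Both proofs rest on the same two preserved invariants — balancedness and the frozen orthogonal component — and both tacitly assume $\|\vbeta(t)\|$ stays bounded away from zero so the rank-one SVD structure remains well-defined, but your log-comparison trick packages the argument more tightly and avoids the per-layer bookkeeping. The $L\to\infty$ algebra at the end is the same in both.

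One minor point: when you justify $\Ppar\vbeta(t)\to\Ppar\thetat$, you should note that this uses \cref{asm:non-singular} (so that $\vX$ restricted to its row space is injective); the paper makes this explicit via the pseudo-inverse computation $\vX^\top(\vX\vX^\top)^{-1}\vy$.
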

To prove this, we focus on $\vW_1$, and notice that the gradients $\dot{\vW_1}(t)$ are in the span of $\vX$, and hence $\Pper\vW_1(0)$ and its norm remain static during the GF optimization (\cite{yun2020unifying}). We then analyze the norm of the fine-tuned model by using the 0-balancedness property of the weights and the min-norm solution to the equivalent linear regression problem, and achieve \eqref{eq:deep_L_sol}. \eqref{eq:deep_infty_sol} is achieved by calculating the limit w.r.t. $L$. The proof of \cref{thm:deep_linear_solution} is given in the supplementary.

Although the expression in \eqref{eq:deep_L_sol} is not a closed form expression for $\lim_{t \to \infty} \vbeta(t)$ (because $\norm{\lim_{t \to \infty} \vbeta(t)}$ appears on the RHS), taking $L$ to infinity \eqref{eq:deep_infty_sol} does result in a closed form expression and demonstrates the effect of increasing model depth. As in \eqref{eq:linear_sol}, we see that the end-to-end equivalent has two components: one which is parallel to the data and one which is orthogonal to it. However, while in \eqref{eq:linear_sol} the orthogonal component has the original norm of the orthogonal projection of $\thetas$, the expression in \eqref{eq:deep_infty_sol} offers a re-scaling of the norm of this component by some ratio that also depends on $\thetat$. Presenting this phenomenon for the infinity depth limit might look impractical, but the empirical results given in this section show that the effect of depth is apparent even for models of relatively small depth.

\subsection{When Does Depth Help Fine-Tuning?}
In this subsection we wish to understand the effect of depth on the population risk of the fine-tuned model.
For simplicity we focus on the limit in \eqref{eq:deep_infty_sol}, and denote $\vbeta = \lim_{L \to \infty}\lim_{t \to \infty} \vbeta(t)$.

Since the linear network is a linear function of $\vect{x}$, we can derive an expression for the population risk of the network, similar to \eqref{eq:linear_risk}:
\begin{align} \label{eq:deep_population}
    R(\vbeta) = \left\| \mat{\Sigma}^{\frac{1}{2}}\Pper\left(\thetat - \frac{\|\Ppar\thetat\|}{\|\Ppar\thetas\|}\thetas\right) \right\|^2.
\end{align}
However, since $\Ppar$ depends on the random matrix $\vX$, without further assumptions this expression by itself is not enough to understand the behaviour of $R(\vbeta)$. \cref{thm:deep_rescaled} and \cref{thm:deep_infty_rand_proj_generalization} analyze cases for which a bound on \eqref{eq:deep_population} can be achieved, showing that it depends on $\norm{\thetat}(\thetath - \thetash)$, i.e. the product of the norm of $\thetat$ and the difference of the {\em normalized} $\thetat$ and $\thetas$, compared to \eqref{eq:linear_risk} which depends on the difference between the un-normalized vectors. This observation further highlights the fact that the distance between source and target vectors is not a good predictor of fine-tuning accuracy for some architectures, as fine-tuning can still succeed even if the source and target are very far as long as they are aligned.

We formalize this in the following result, where $\thetat$ is identical to $\thetas$ in direction, but not in norm.
\begin{theorem}\label{thm:deep_rescaled}
Assume that the conditions of \cref{thm:deep_linear_solution} hold, and that $\thetath = \thetash$. Namely:
\begin{align*}
    \thetat = \alpha \thetas, \quad \text{for} \quad \alpha > 0,
\end{align*}
then for $L \rightarrow \infty$ the risk of the end-to-end solution $\vbeta$ is
\begin{align*}
    R(\vbeta) = 0,
\end{align*}
while for the $L=1$ solution $\valpha$, the risk  is:
\begin{align} \label{eq:rescaled_linear}
    R(\valpha) = \left(\frac{\alpha-1}{\alpha}\right)^2\|\vSigma^{\nicefrac{1}{2}}\mat{P}_{\perp}\thetat\|^2 \neq 0 \quad \text{for} \quad \alpha \neq 1, \alpha > 0.
\end{align}
\end{theorem}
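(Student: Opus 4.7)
The plan is to substitute the assumption $\thetat = \alpha \thetas$ directly into the closed-form expressions provided by \cref{thm:linear_inductive_pop_risk} and \cref{thm:deep_linear_solution}; no additional machinery is needed, so the work is essentially verification of two scalar identities.

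For the infinite-depth case, I would start from the identity
\begin{align*}
    \vbeta = \frac{\|\Ppar \thetat\|}{\|\Ppar \thetas\|}\Pper \thetas + \Ppar \thetat
\end{align*}
provided by \eqref{eq:deep_infty_sol}. Under the assumption $\thetat = \alpha \thetas$ with $\alpha > 0$, linearity of $\Ppar$ gives $\Ppar \thetat = \alpha \Ppar \thetas$, so the scaling ratio $\|\Ppar \thetat\|/\|\Ppar \thetas\|$ equals $\alpha$. Plugging this back yields $\vbeta = \alpha \Pper \thetas + \alpha \Ppar \thetas = \alpha \thetas = \thetat$. Since the end-to-end linear predictor coincides with the target teacher everywhere, the population MSE vanishes, establishing $R(\vbeta) = 0$.

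For the shallow ($L=1$) case, I would invoke \eqref{eq:linear_risk}, which states $R(\valpha) = \|\vSigma^{1/2}\Pper(\thetat - \thetas)\|^2$. Writing $\thetas = \thetat/\alpha$, the difference telescopes to $\thetat - \thetas = \frac{\alpha - 1}{\alpha}\thetat$, and pulling the scalar outside the norm yields the stated expression $R(\valpha) = \left(\frac{\alpha-1}{\alpha}\right)^2 \|\vSigma^{1/2}\mat{P}_\perp \thetat\|^2$. This quantity is strictly positive whenever $\alpha \neq 1$ and $\vSigma^{1/2}\Pper \thetat \neq 0$, with the latter holding generically in the underdetermined regime of \cref{asm:non-singular} where $\thetat$ is not contained in the row-span of $\vX$.

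The main subtlety, rather than an obstacle, is the boundary case $\Ppar \thetas = 0$ in which \eqref{eq:deep_infty_sol} is ill-defined: this would require $\thetas$ (and therefore $\thetat$) to lie entirely in the null space of $\vX$, reducing the labels to $\vy = \vX \thetat = 0$ and trivializing the task, so this case can be safely excluded. The contrast between the two computed risks illustrates the central intuition of the section: depth cancels the scale mismatch between source and target teachers, and the natural measure of similarity governing fine-tuning behavior for deep linear networks is directional alignment rather than Euclidean distance.
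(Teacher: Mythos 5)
Your proof is correct and follows essentially the same route as the paper's: in both cases one substitutes $\thetat = \alpha\thetas$ into \eqref{eq:deep_infty_sol} to get $\|\Ppar\thetat\|/\|\Ppar\thetas\| = \alpha$ and hence $\vbeta = \thetat$, and into \eqref{eq:linear_risk} to pull out the scalar $(\alpha-1)/\alpha$. Your additional remarks on the degenerate case $\Ppar\thetas = 0$ and on the requirement $\vSigma^{1/2}\Pper\thetat \neq 0$ for strict positivity are reasonable caveats that the paper leaves implicit.
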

This setting highlights our conclusion on the role of alignment in deep linear models: if the tasks are aligned, the deep linear predictor achieves zero generalization even with a single sample, while the population risk of the $L=1$ predictor still depends on $n$.

Another example for this behaviour can be seen when $\vX$ is i.i.d Gaussian (i.e., $\dist = \calN(0, 1)^d$).
\begin{theorem}\label{thm:deep_infty_rand_proj_generalization}
    Assume that the conditions of \cref{thm:deep_linear_solution} hold, and let $\vX \sim \calN(0,1)^{d}$. Suppose $n\le d$, then there exists a constant $c>0$ such that for any $\epsilon > 0$ with probability at least $1 - 4\exp(-c\epsilon^2 n) -4\exp\left(-c\epsilon^2 (d-n)\right)$ the population risk for the $L \to \infty$ end-to-end predictor $\vbeta$ is bounded as follows:
    \begin{align}\label{eq:rand_proj_deep}
    R(\vbeta) 
    &\le \frac{d-n}{d}(1+\epsilon)^2\norm{\thetat}^2\norm{\thetath - \thetash}^2 + \frac{d-n}{d}\zeta(\norm{\thetat})^2,
\end{align}
    for $\zeta(\norm{\thetat}) \approx \epsilon\norm{\thetat}$.
    For the $L=1$ linear regression solution $\valpha$ this risk is bounded by
    \begin{align}\label{eq:rand_proj_lin}
        R(\valpha) \le \frac{d-n}{d}(1+\epsilon)^2\norm{\thetat - \thetas}^2.
    \end{align}
\end{theorem}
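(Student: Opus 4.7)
\medskip

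\noindent\textbf{Proof proposal.} The plan is to exploit the rotational invariance of i.i.d.\ Gaussian data: when the rows of $\vX$ are $\calN(0,\vI)$, the projector $\Ppar$ onto the row-space of $\vX$ is distributed uniformly on the Grassmannian of $n$-dimensional subspaces of $\R^d$, so $\Pper$ is a uniformly random $(d-n)$-dimensional projector. Consequently, for any \emph{fixed} vector $v \in \R^d$, standard Gaussian concentration (equivalently, Johnson--Lindenstrauss / Hanson--Wright applied to the coordinates of $v$ in a uniformly random orthonormal basis) gives $\|\Pper v\|^2 \in (1\pm\epsilon)\tfrac{d-n}{d}\|v\|^2$ with failure probability at most $2\exp(-c\epsilon^2(d-n))$, and similarly $\|\Ppar v\|^2 \in (1\pm\epsilon)\tfrac{n}{d}\|v\|^2$ with failure probability at most $2\exp(-c\epsilon^2 n)$. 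Applying this to the four fixed vectors $\thetat,\ \thetas,\ \thetat-\thetas,\ \thetath-\thetash$ and taking a union bound produces the advertised failure probability $4e^{-c\epsilon^2 n}+4e^{-c\epsilon^2(d-n)}$. Condition on this event $\mathcal E$ throughout.

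For the $L=1$ case, by \eqref{eq:linear_risk} with $\vSigma=\vI$ we have $R(\valpha)=\|\Pper(\thetat-\thetas)\|^2$, and applying the concentration bound to the single vector $v=\thetat-\thetas$ directly yields \eqref{eq:rand_proj_lin} (using $1+\epsilon \le (1+\epsilon)^2$). For the $L\to\infty$ case, I would start from \eqref{eq:deep_population} with $\vSigma=\vI$, namely $R(\vbeta)=\|\Pper(\thetat-r\,\thetas)\|^2$ where $r:=\|\Ppar\thetat\|/\|\Ppar\thetas\|$, and perform the algebraic decomposition
\begin{align*}
\thetat - r\,\thetas \;=\; \|\thetat\|\,(\thetath-\thetash) \;+\; \bigl(\|\thetat\|-r\|\thetas\|\bigr)\,\thetash,
\end{align*}
so that projecting by $\Pper$ and applying the triangle inequality bounds $R(\vbeta)$ by a combination of $\|\thetat\|^2\,\|\Pper(\thetath-\thetash)\|^2$, which is controlled by the concentration of $\Pper$ on the fixed vector $\thetath-\thetash$, and the scalar error $c:=\|\thetat\|-r\|\thetas\|$, which I will argue is of order $\epsilon\|\thetat\|$.

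The scalar $c$ is the key bridge: on $\mathcal E$, writing $\|\Ppar\thetat\|^2=\tfrac{n}{d}\|\thetat\|^2(1+\delta_1)$ and $\|\Ppar\thetas\|^2=\tfrac{n}{d}\|\thetas\|^2(1+\delta_2)$ with $|\delta_i|\le\epsilon$ gives $r=\tfrac{\|\thetat\|}{\|\thetas\|}\sqrt{\tfrac{1+\delta_1}{1+\delta_2}}$, hence $|c|=\|\thetat\|\,\bigl|\sqrt{(1+\delta_1)/(1+\delta_2)}-1\bigr|$, which linearizes to a quantity of order $\epsilon\|\thetat\|$ and is exactly the source of $\zeta(\|\thetat\|)\approx\epsilon\|\thetat\|$. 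After combining the two pieces via $\|u+v\|^2\le \|u\|^2 + \|v\|^2 + 2\|u\|\|v\|$ (and folding the cross-term into $\zeta$), and noting that $\|\Pper\thetash\|^2\le (1+\epsilon)\tfrac{d-n}{d}$, one obtains \eqref{eq:rand_proj_deep}.

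The main obstacle is the scalar bound on $c$: the two concentration events on $\|\Ppar\thetat\|$ and $\|\Ppar\thetas\|$ give only multiplicative control on each norm separately, and turning their ratio into a tight additive bound on $\|\thetat\|-r\|\thetas\|$ requires linearizing $\sqrt{(1+\delta_1)/(1+\delta_2)}-1$ carefully while keeping track of the exact $(1+\epsilon)^2$ prefactor in \eqref{eq:rand_proj_deep}. The mild ``$\approx$'' in $\zeta(\|\thetat\|)\approx\epsilon\|\thetat\|$ is presumably where any leftover $O(\epsilon^2)$ or $O(1)$ constants from that linearization and from the $2\|u\|\|v\|$ cross-term are absorbed.
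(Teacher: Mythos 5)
Your proposal follows essentially the same route as the paper's proof: both reduce to $R(\vbeta)=\|\Pper(\thetat - r\,\thetas)\|^2$ with $r=\|\Ppar\thetat\|/\|\Ppar\thetas\|$, both perform the identical algebraic decomposition $\thetat - r\,\thetas = \|\thetat\|(\thetath-\thetash) + \bigl(\|\thetat\|-r\|\thetas\|\bigr)\thetash$ (the paper writes the second piece as $\bigl(\tfrac{\|\thetat\|}{\|\thetas\|}-r\bigr)\thetas$, which is the same thing), and both invoke Vershynin's Lemma 5.3.2 for uniformly random projections to control $\|\Pper v\|$ and the ratio $r$. One small observation: the paper's written proof passes from $\|\Pper(\thetat-r\thetas)\|^2$ to a sum of the two squared pieces by (implicitly) dropping the cross term $2\langle\cdot,\cdot\rangle$, whereas you correctly retain the cross term $2\|u\|\|v\|$; by AM--GM this costs at most a factor of two (or it can be absorbed into the ``$\approx$'' in $\zeta$), so the discrepancy is cosmetic, but your version is the one that actually closes.
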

The above result is a direct analysis of  \eqref{eq:deep_population}  when $\vSigma = \mat{I}$ by using Lemma 5.3.2 from \cite{vershynin2018high} to analyze the effects of $\Ppar, \Pper$. Comparing \eqref{eq:rand_proj_deep} and \eqref{eq:rand_proj_lin}, we see that while \eqref{eq:rand_proj_lin} depends on the distance between the two un-normalized tasks, \eqref{eq:rand_proj_deep} depends on the norm of the target task and the alignment of the tasks, but not at all on the norm of the source task. The proofs of \cref{thm:deep_rescaled} and \cref{thm:deep_infty_rand_proj_generalization} are given in the supp.

\begin{figure*}[ht]
\centering
\begin{subfigure}{.4\textwidth}
  \includegraphics[width=1\linewidth]{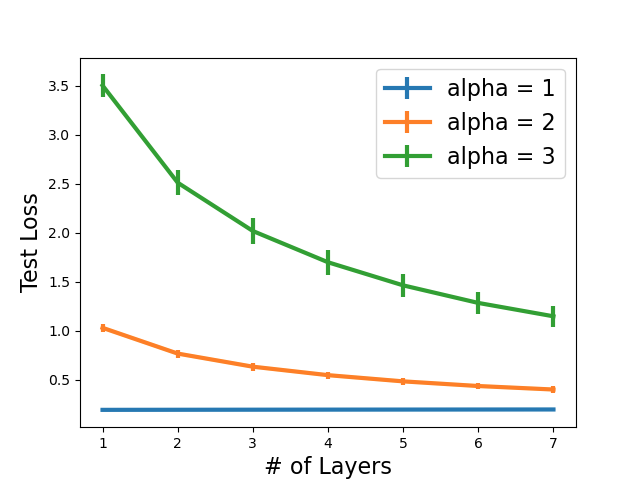}
  \caption{}
\label{fig:deep_linear_scale}
\end{subfigure}
\begin{subfigure}{.4\textwidth}
    \includegraphics[width=1\linewidth]{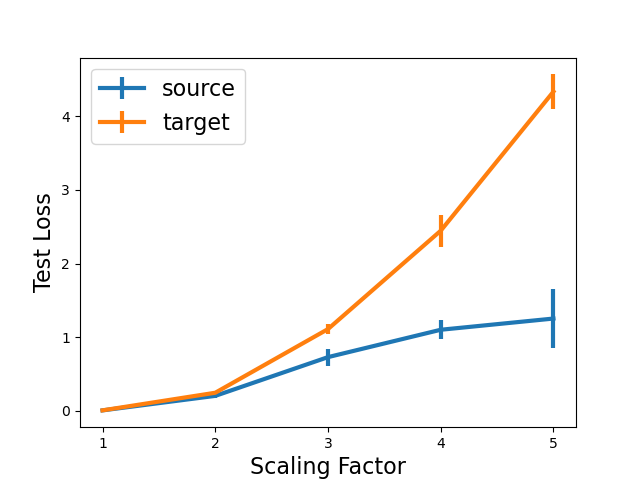}
    \caption{}
    \label{fig:deep_source_vs_target}
\end{subfigure}
\caption{(a) The effect of depth on fine-tuning when $\thetat$ is a $\alpha$ scaled, $\epsilon$ noised version of $\thetas$ with $\nicefrac{d}{10}$ samples. (b) The effect of changing the scale of either source weights or target weights in a 7-layers model.}
\end{figure*}

\subsection{Deep linear fine-tuning with fixing the first layer(s)}
A common trick when performing fine-tuning is to fix, or ``freeze'' (i.e. not train), the first $k$ layers of a model during the optimization on the target task. This method reduces the risk of over-fitting these layers to the small training set.\footnote{This over-fitting is sometimes referred to as ``catastrophic forgetting'' of the source task.} The next theorem shows that for deep linear networks this method degenerates the training process.
\begin{theorem} \label{thm:deep_freeze}
    Assume the setting of \cref{thm:deep_linear_solution}. Then, if we freeze the first layer (or any number $k$ of first layers) during fine-tuning, the fine-tuned model will be given by
        $\langle\vbeta(t), \vect{x}\rangle = c \langle \vx, \thetas \rangle$, for some constant $c$.
\end{theorem}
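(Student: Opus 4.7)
The plan is to exploit the fact that, under 0-balancedness and the requirement $d_L = 1$, every pretrained weight matrix $\vW_j(0)$ has rank at most one. Once this is established, freezing the first $k$ layers pins the fine-tuning dynamics inside the one-dimensional column space of the fixed product $\mat{A} \triangleq \vW_1(0)\cdots \vW_k(0)$, and the pretraining assumption forces this column space to coincide with $\mathrm{span}(\thetas)$.

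The rank-one cascade is the main structural step. Starting from the assumption that the weights prior to pretraining are 0-balanced and invoking the standard result of \cite{arora2018convergence} that 0-balancedness is preserved by gradient flow, the pretrained weights remain 0-balanced, so $\vW_j(0)^\top\vW_j(0) = \vW_{j+1}(0)\vW_{j+1}(0)^\top$ for every $j \in [L-1]$. Because $d_L = 1$, the matrix $\vW_L(0) \in \R^{d_{L-1}\times 1}$ trivially has rank at most one, hence so does $\vW_L(0)\vW_L(0)^\top$, and 0-balancedness then forces $\vW_{L-1}(0)$ to have rank at most one. A downward induction on $j$ yields $\mathrm{rank}(\vW_j(0)) \le 1$ for every $j \in [L]$, so $\mat{A}$, being a product of matrices of rank at most one, admits a factorization $\mat{A} = \vu \vv^\top$ with $\vu \in \R^d$ and $\vv \in \R^{d_k}$.

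From here the conclusion is immediate: writing $\vbeta(t) = \mat{A}\cdot\vW_{k+1}(t)\cdots\vW_L(t)$ gives $\vbeta(t) = c(t)\vu$ for the scalar $c(t) \triangleq \vv^\top \vW_{k+1}(t)\cdots \vW_L(t)$. The initial condition $\vbeta(0) = \thetas$ (from \cref{asm:pretrain}), assuming $\thetas \ne 0$, forces $\vu$ to be a nonzero scalar multiple of $\thetas$, so $\vbeta(t)$ is always a scalar multiple of $\thetas$ and $\langle\vbeta(t),\vx\rangle = c \langle\vx,\thetas\rangle$ for some (time-dependent) scalar $c$, as claimed.

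The only delicate step is invoking preservation of 0-balancedness through the pretraining phase, which is standard under gradient flow but needs to be stated carefully since the entire argument rests on it. After that, the proof is purely structural linear algebra: a one-dimensional output layer propagates rank one backward through every balanced layer, so the frozen prefix can only emit vectors in a one-dimensional subspace, which the pretraining condition identifies with $\mathrm{span}(\thetas)$. Notably, no implicit-bias argument about the fine-tuning dynamics is required; the theorem is a statement purely about the feasibility set of $\vbeta(t)$, not about which element of that set the dynamics selects.
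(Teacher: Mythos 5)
Your proof is correct and follows essentially the same route as the paper: you invoke preservation of 0-balancedness under gradient flow, propagate rank one backward from the scalar output layer, and then observe that freezing the first $k$ layers traps $\vbeta(t)$ in the one-dimensional column space of the frozen product, which the pretraining condition $\vbeta(0)=\thetas$ identifies with $\mathrm{span}(\thetas)$. The only difference is presentational — the paper tracks explicit singular-vector/singular-value notation from its Lemma B.1 to write out the scalar $c(t)$, whereas you argue at the level of the rank-one factorization $\mat{A}=\vu\vv^\top$ — but the underlying structure and lemmas are identical.
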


The key idea in the proof is to show that the product of the $k$ first layers is equal to $\thetas$ up to a scaling factor, which is a result of \cite{yun2020unifying}.
The result implies that after fine-tuning the model is still equal to the source task, independently of the target task. Thus, fine-tuning essentially fails completely, and its error cannot be reduced with additional target data.

\begin{figure*}[ht]
\centering
\includegraphics[width=0.55\linewidth]{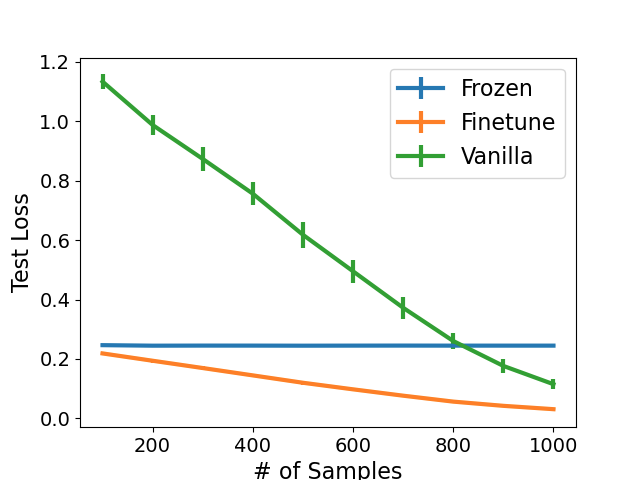}
\caption{A network whose first layer is fixed has a constant generalization loss due to degeneration effect in \cref{thm:deep_freeze}.}
\label{fig:deep_linear_frozen}
\end{figure*}

This result is achieved under the assumption of $0$-balancedness prior to pretraining, which happens e.g. when initializing the weights with an infinitesimally small variance, as this property leads to the degeneracy of the output of the frozen k-layers. Though the proof of \cref{thm:deep_freeze} depends on this $0$-balancedness property of the network, the experiments shown in \cref{fig:deep_linear_frozen} were conducted with a small initialization scale, that is not guaranteed to result in $0$-balancedness, but rather in $\delta$-approximate balancedness \cite{arora2018convergence} when $\delta$ is small. These experiments show empirically that the phenomenon of learning failure is observed even when $\delta > 0$. Intuitively, this is because the effective rank of the weight matrices is close to one, and thus learning the second layer is an ill-conditioned problem, which leads to slower convergence and can prevent the model from fine-tuning on the target data with a constant gradient step.

A possible workaround to this failure of learning would be to initialize the weights prior to pretraining with a larger scale of initialization (e.g. with Xavier \cite{pmlr-v9-glorot10a}), thus increasing the rank of each layer and preventing degeneracy. Pre-training with multiple source tasks (as suggested in e.g. \cite{du2021fewshot}) may also help the fine-tuning optimization.

\subsection{Experiments}

We next describe experiments that support the results in this section. \cref{thm:deep_rescaled} 
predicts that deeper nets will successfully learn a case where source and target vectors are aligned, but with different norms. This is demonstrated in \cref{fig:deep_linear_scale} where source and target tasks are related via $\thetat = \alpha \thetas + \vect{\epsilon}$, where $\vect{\epsilon}$ is a standard Gaussian vector whose norm is approximately $0.5\norm{\thetas}$.
It can be seen that when  $\alpha \approx 1$, there is no difference between models of different depth. However, as $\alpha$ increases, adding depth has a positive effect on fine-tuning accuracy. \cref{thm:deep_infty_rand_proj_generalization} predicts that the test loss for a deep linear model would depend only on the alignment of $\thetas$ and $\thetat$ (i.e. $\norm{\thetath - \thetash})$ and on the  $\norm{\thetat}$, but not on $\norm{\thetas}$. This is demonstrated in \cref{fig:deep_source_vs_target} where source and target task are initialized s.t. $\norm{\thetath - \thetash} \approx 0.1$. In each experiment, either $\thetat = \alpha \thetath$ or $\thetas = \alpha \thetash$, where $\alpha$ is the ``Scaling Factor'', and the other has norm of 1. It can be seen that increasing the norm of the target vector harms generalization much more than increasing the norm of the source vector, as the theorem predicts, even for a relatively shallow model.

\cref{thm:deep_freeze} states that fixing the first layer in deep linear nets can result in failure to fine-tune. We illustrate this empirically in \cref{fig:deep_linear_frozen}, where we compare three two-layer linear models on the same target task: 1) A ``Frozen'' model that fixes the first layer after pretraining. 2) A ``Vanilla'' model that trains the network from scratch on the target, ignoring the source pre-training. 3) A ``Finetune'' model that first trains on source and fine-tunes to target. As predicted by theory, the "frozen" model's performance is poor, and fine-tuning has better sample complexity. 

\section{Analyzing fine-tuning in shallow ReLU networks} \label{sec:shallow_finetune}

Analyzing optimization and generalization in non-linear networks is challenging. However, analysis in the Neural Tangent Kernel (NTK) regime is sometimes simpler \cite{du2018gradient, arora2019fine}. Thus, here we take a first step towards understanding fine-tuning in non-linear networks by analyzing this problem in the NTK regime. Specifically, we consider the setting of a two-layer ReLU  network with $m$ neurons in the hidden layer. Hence, we consider $\vTheta(t) = \{\vW(t), \va\}$ and $\f = \frac{1}{\sqrt{m}}\sum_{r=1}^{m}{a_r \sigma(\vx^\top\vw_r(t)})$
where $\sigma$ is the ReLU function, $\vw_1(t), \ldots, \vw_m(t) \in \R^d$, the rows of $\vW(t)$, are vectors in the first layer, and $\va \in \{-1, 1\}^m$ is the vector of weights in the second layer. We initialize  $\va$ uniformly and fix it during optimization as in \cite{du2018gradient}. Before pretraining, the first layer parameters are initialized from a standard Gaussian with variance $\kappa^2$. We also assume that $\norm{\vx}=1$ for all $\vx$ samples from $\dist$. We let $f\left(\vX, \vTheta\right) \in \mathbb{R}^n$ be the vector of predictions of $f$ on the data $\vX$.

For the next theorem we do not assume linear teachers, and instead assume an arbitrary labeling function $g_S$ such that $\vy_S = g_S(\vX_S)$, for $\vX_S \in \R^{n_S \times d}, \vy_S \in \R^{n_S}$ the pretraining data and labels, respectively. We also assume that $\vy = g_T(\vX)$ for some arbitrary function $g_T$. For simplicity, we assume $|y|_i \le 1$ for  $i \in [n]$.
We consider a setting where the pretraining phase is done using a two-layer network in the NTK regime, under the assumptions of Theorem 4.1 from \cite{arora2019fine} with respect to the variables $m$, $\kappa$, $\eta$ and sufficiently many iterations.\footnote{See the supp for a bound on the number of iterations.} Next, in the fine-tuning phase, we train a network initialized with the weights given by the pretraining phase. We use the same value of $m$ for the fine-tuning phase. We rely on the analysis given in \cite{du2018gradient, arora2019fine} and achieve an upper bound on the population risk of the fine-tuned model:

\begin{theorem}\label{thm:shallow_relu_generalization}
	Fix a failure probability $\delta \in (0, 1)$. We assume that \cref{asm:non-singular} holds. Suppose $\kappa = O\left( \frac{ \lambda_0 \delta}{ n} \right)$, $m\ge \kappa^{-2} \poly\left(n, n_S, \lambda_0^{-1}, \delta^{-1} \right)$.
	Consider any loss function $\ell: \R\times\R \to [0, 1]$ that is $1$-Lipschitz in the first argument such that $\ell(y, y)=0$.  
	Then with probability at least $1-\delta$,\footnote{Over the random initialization of the pretraining network.}  the two-layer neural network $f(\cdot, \vTheta(t))$ fine-tuned by GD for $t\ge \Omega\left( \frac{1}{\eta\lambda_0} \log{\norm{\vtildey}_2^{-1}} \right)$ iterations has population loss:
	\begin{equation} \label{eqn:main_generalization}
        R\left(\vTheta(t)\right) \le 2\sqrt{\frac{\vtildey^\top \left(\mat{H}^{\infty}\right)^{-1}\vtildey}{n}}  + O\left( \sqrt{\frac{\log\frac{n}{\lambda_0\delta}}{n}} \right),
	\end{equation}
    for $\vtildey \equiv  \vy - f\left(\vX, \vTheta(0)\right)$.
\end{theorem}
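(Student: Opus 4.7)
The plan is to recast fine-tuning as a second instance of NTK-regime gradient descent: starting from the pretrained weights $\vTheta(0)=\vTheta_S$, the loss $L(\vTheta)=\tfrac{1}{n}\|f(\vX,\vTheta)-\vy\|^2$ is equivalent to fitting the \emph{residual} labels $\vtildey = \vy - f(\vX,\vTheta(0))$ by the shifted predictor $f(\cdot,\vTheta)-f(\cdot,\vTheta(0))$. Since $\ell(y,y)=0$ and $\ell$ is $1$-Lipschitz, the population loss can be compared to $\E_\vx[|f(\vx,\vTheta(t)) - g_T(\vx)|]$, which is exactly the error of the shifted predictor on targets $g_T(\vx)-f(\vx,\vTheta(0))$. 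Hence I will apply the generalization bound of Arora et al.\ (Theorem~5.1 in their paper) with $\vtildey$ taking the role of the label vector.

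First, I would verify that the NTK regime holds throughout both phases. The pretraining assumption imports Theorem~4.1 of Arora et al., which together with the sufficient overparameterization $m \ge \kappa^{-2}\poly(n,n_S,\lambda_0^{-1},\delta^{-1})$ ensures that $\|\vw_r^{(S)} - \vw_r^{(0)}\|$ is small for every neuron $r$, so that the Gram matrix at $\vTheta_S$ is close to $\vH^\infty$ (and in particular $\ge \tfrac{3}{4}\lambda_0 \vI$). Then I would rerun the Du~et~al.\ (2018) convergence argument with $\vTheta_S$ as the starting point and residual-target $\vtildey$: the bound $|y_i|\le 1$ together with a crude bound on $|f(\vx,\vTheta_S)|$ (which follows from the Gaussian-scale-$\kappa$ initialization plus the small pretraining displacement) gives $\|\vtildey\|_2 = O(\sqrt n)$, so the NTK convergence theorem applies and shows both that training loss decays to $0$ after $t \ge \Omega(\frac{1}{\eta\lambda_0}\log\|\vtildey\|_2^{-1})$ steps and that $\|\vw_r(t)-\vw_r^{(S)}\|$ remains uniformly small.

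Second, I would port over the Rademacher-complexity machinery of Arora et al.\ Section~5. Their key lemma shows that the end-to-end function $f(\cdot,\vTheta(\infty))-f(\cdot,\vTheta(0))$ produced by GD in the NTK regime, when fitting labels $\vz$, lies (with high probability over the base initialization) in a function class whose empirical Rademacher complexity on $\vX$ is at most $\sqrt{\vz^\top(\vH^\infty)^{-1}\vz/(2n)}$ plus an $O(\poly(n,\lambda_0^{-1})/\sqrt{m\kappa})$ error that the chosen $m$ absorbs. Applying this lemma with $\vz = \vtildey$ bounds the Rademacher complexity of the shifted predictor, and a standard symmetrization (together with $\ell(y,y)=0$) converts it to the population-risk bound $2\sqrt{\vtildey^\top(\vH^\infty)^{-1}\vtildey/n}+O(\sqrt{\log(n/(\lambda_0\delta))/n})$.

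The main obstacle is carefully controlling two stacked perturbations: the pretraining displacement $\vW_S - \vW^{(0)}$ and the fine-tuning displacement $\vW(t)-\vW_S$. Both must be small enough that (i) the kernel matrix stays near $\vH^\infty$ throughout both phases and (ii) the function-class Rademacher bound of Arora et al., which is stated for \emph{random} initialization, still applies when the nominal starting point is $\vTheta_S$ rather than a fresh Gaussian. This is what forces the stringent requirement $m \ge \kappa^{-2}\poly(n,n_S,\lambda_0^{-1},\delta^{-1})$, where the polynomial must dominate displacement bounds from \emph{both} the $n_S$-sample pretraining run and the $n$-sample fine-tuning run; getting the exponents right in this polynomial is the bookkeeping-heavy step, whereas the high-level reduction to Arora et al.\ is conceptually immediate.
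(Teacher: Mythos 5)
Your proposal matches the paper's proof structure in all essentials: track displacements through both the pretraining and fine-tuning phases so that the Gram matrix stays near $\vH^{\infty}$, adapt the Du et al./Arora et al.\ NTK machinery with $\vtildey$ as the effective label vector, and then invoke a Rademacher-complexity argument whose main term is $\sqrt{\vtildey^\top(\vH^\infty)^{-1}\vtildey/n}$, with $m$ chosen large enough to absorb the error terms coming from both phases. You also correctly locate the sole genuinely new technical point — the Rademacher lemma of Arora et al.\ is stated for i.i.d.\ Gaussian first-layer weights, whereas fine-tuning starts from $\vW(0)=\vW_S$, which is not Gaussian — though the paper resolves it not merely by ``getting the exponents right'' but by a small lemma (the analogue of your displacement-stacking step) showing that a neuron whose preactivation at the \emph{original} Gaussian $\vW(\init)$ exceeds $R+\|\vw_r(0)-\vw_r(\init)\|$ in magnitude cannot flip under fine-tuning, so the flippable-neuron count (and hence the Rademacher bound) transfers from $\vW(\init)$ to $\vW(0)$ at the cost of a slightly enlarged radius, which is then what the overparameterization requirement must dominate.
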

The above result shows that the true risk of the fine-tuned model is related to the distance of learned outputs $\vy$ from the outputs after pretraining $f\left(\vX, \vTheta(0)\right)$. The proof of \cref{thm:shallow_relu_generalization} is given in the supp.

As in previous NTK regime analyses, this result holds when the weights of the fine-tuned model do not ``move'' too far away from the weights at random initialization. Thus, the proof approach is to bound the distance between the Gram matrix $\vH(t)$ and the infinite-width gram matrix $\vH^\infty$ with a decreasing function in $m$. The main challenge is that the weights $\vW(0)$ are not initialized i.i.d as described above. To address this we provide a careful analysis of the dynamics and show that $\vH(t)$ is close to $\vH$ at random initialization, even when considering the pretraining phase, which in turn is close to $\vH^\infty$.

We next apply our results to the case of linear source and target tasks. We thus assume that $g_S, g_T$ are linear functions with parameters $\thetas, \thetat$.  For simplicity of exposition we assume $f\left(\vx, \vTheta(0)\right) = \vx^\top \thetas$ exactly (Assumption \ref{asm:pretrain}). Before bounding the risk of fine-tuning we bound the RHS of \eqref{eqn:main_generalization} in the linear case:
\begin{corollary} \label{cor:linear_transfer}
    Suppose that $g_S(\vX) \triangleq \vX^\top \thetas$, $g_T(\vX) \triangleq \vX^\top \thetat$,
and assume Assumption \ref{asm:pretrain} holds. Then, $\sqrt{\tilde{\vect y}^\top (\mat H^\infty)^{-1}\tilde{\vect y }} \le  3 \norm{\thetat - \thetas}_2.$
\end{corollary}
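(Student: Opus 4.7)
The plan is to reduce the statement to bounding the RKHS norm of a linear function under the two-layer ReLU NTK. By \cref{asm:pretrain}, $f(\vx, \vTheta(0)) = \vx^\top \thetas$ pointwise, so the residual vector is $\tilde{\vy} = \vy - f(\vX, \vTheta(0)) = \vX(\thetat - \thetas)$. Introducing the linear function $f^*(\vx) := \vx^\top(\thetat - \thetas)$, which interpolates the entries of $\tilde{\vy}$ on the training inputs, the standard Gram-matrix inequality gives
\[
    \sqrt{\tilde{\vy}^\top (\mat{H}^\infty)^{-1}\tilde{\vy}} \le \|f^*\|_{\cH},
\]
where $\cH$ is the RKHS of the infinite-width NTK $K(\vx,\vx') = \vx^\top \vx' \cdot \frac{\pi - \arccos(\vx^\top \vx')}{2\pi}$. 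This follows by observing that $\vect{c} := (\mat{H}^\infty)^{-1}\tilde{\vy}$ defines the minimum-norm interpolant $g(\vx) = \sum_i c_i K(\vx_i,\vx)$ in $\cH$ with $\|g\|_{\cH}^2 = \tilde{\vy}^\top(\mat{H}^\infty)^{-1}\tilde{\vy}$, and this norm is dominated by the norm of any other interpolant. It thus suffices to exhibit a representer of $f^*$ in $\cH$ of norm at most $3\|\thetat - \thetas\|$.

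Since $\va$ is frozen, only the first-layer NTK contributes, and its feature map is $\phi_\vw(\vx) = \sigma'(\vw^\top\vx)\,\vx$ with $\vw \sim \calN(0, I_d)$. Any $G: \R^d \to \R^d$ satisfying $f(\vx) = \E_\vw[\langle G(\vw), \phi_\vw(\vx)\rangle]$ for all unit $\vx$ certifies $\|f\|_{\cH}^2 \le \E_\vw[\|G(\vw)\|^2]$. The key step is to choose the \emph{constant} representer $G(\vw) \equiv 2(\thetat - \thetas)$. Because $\|\vx\| = 1$, the scalar $\vw^\top \vx$ is standard normal and hence symmetric, so $\E_\vw[\sigma'(\vw^\top\vx)] = \Pr[\vw^\top \vx \ge 0] = \tfrac{1}{2}$. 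Therefore
\[
    \E_\vw\bigl[\langle 2(\thetat - \thetas),\, \sigma'(\vw^\top\vx)\vx\rangle\bigr] = 2\,\vx^\top(\thetat - \thetas)\cdot\tfrac{1}{2} = f^*(\vx),
\]
so this $G$ represents $f^*$. Consequently $\|f^*\|_{\cH}^2 \le \E_\vw[\|2(\thetat - \thetas)\|^2] = 4\|\thetat - \thetas\|^2$, giving $\|f^*\|_{\cH} \le 2\|\thetat - \thetas\| \le 3\|\thetat - \thetas\|$.

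The only nontrivial ingredient, and hence the main obstacle to anticipate, is finding a clean representer for a linear function inside the NTK RKHS: one wants to avoid the arccosine nonlinearity that makes entries of $\mat{H}^\infty$ hard to work with directly. The trick is that the integral against an isotropic Gaussian (on unit-norm inputs) absorbs the ReLU-derivative nonlinearity into a factor of $\tfrac{1}{2}$, so a \emph{constant} vector $G$ already reproduces any linear target. A more natural-looking guess such as $G(\vw) = c(\thetat - \thetas)\,\sigma'(\vw^\top(\thetat - \thetas))$ leaves an awkward $\arccos$ factor in the reproducing integral and fails to yield a clean bound, so the simplicity of the constant representer is precisely what makes the corollary go through.
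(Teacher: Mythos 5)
Your proof is correct, and it takes a genuinely different route from the paper's. The paper's proof of this corollary is a one-line reduction: observe that under Assumption~3.2 we have $\tilde{\vy} = \vX(\thetat - \thetas)$, and then invoke Theorem~6.1 of Arora et al.\ \cite{arora2019fine} with $\vbeta = \thetat - \thetas$ as a black box. That cited theorem is proved via a Taylor/Hermite expansion of the arccosine kernel, which decomposes $\mat{H}^\infty$ into a sum of PSD Hadamard-power kernels and picks off the degree-one term. You instead give a direct, self-contained argument in the NTK RKHS: identify $\sqrt{\tilde{\vy}^\top(\mat{H}^\infty)^{-1}\tilde{\vy}}$ as the norm of the minimum-norm interpolant, then certify that the linear interpolant $f^*(\vx) = \vx^\top(\thetat - \thetas)$ lies in the RKHS by exhibiting the constant representer $G(\vw) \equiv 2(\thetat - \thetas)$, using $\E_\vw[\sigma'(\vw^\top\vx)] = \tfrac{1}{2}$ (valid because $\|\vx\|=1$ makes $\vw^\top\vx$ a symmetric random variable). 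This is more elementary, avoids the series machinery entirely, and even yields the slightly tighter constant $2$ rather than $3$; the factor $3$ in the statement just comes from matching the constant in the cited Arora et al.\ result. One small point worth making explicit if you wanted to present this in full: the fact that any representer $G$ with $f(\vx) = \E_\vw[\langle G(\vw),\phi_\vw(\vx)\rangle]$ gives $\|f\|_{\cH}^2 \le \E_\vw[\|G(\vw)\|^2]$ is the standard ``dual'' or ``random features'' characterization of the RKHS of an integral kernel, but it is not proven in the paper or in Arora et al.\ and would need a reference or a short derivation.
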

This is a direct corollary of Theorem 6.1 from \cite{arora2019fine} on $\vtildey$ defined above.
\cref{thm:shallow_relu_generalization} and \cref{cor:linear_transfer} result in the a bound on the risk of the fine-tuned model:
\begin{corollary}
    Under the conditions of \cref{thm:shallow_relu_generalization} and \cref{cor:linear_transfer}, it holds that
    \begin{align*}
        	R(\vTheta(t)) \le \frac{6 \norm{\thetat - \thetas}_2}{\sqrt{n}}   + O\left( \sqrt{\frac{\log\frac{n}{\lambda_0\delta}}{n}} \right).
    \end{align*}
\end{corollary}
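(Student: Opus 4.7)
The plan is to simply chain the two previously established results, since the corollary is essentially a substitution. I would first invoke \cref{thm:shallow_relu_generalization}, whose hypotheses on $\kappa$, $m$, the number of iterations $t$, and the loss function $\ell$ are exactly what is assumed here. This gives the bound
\begin{align*}
R(\vTheta(t)) \le 2\sqrt{\frac{\vtildey^\top (\mat{H}^\infty)^{-1}\vtildey}{n}} + O\left(\sqrt{\frac{\log\tfrac{n}{\lambda_0\delta}}{n}}\right),
\end{align*}
where by definition $\vtildey = \vy - f(\vX, \vTheta(0))$. Under \cref{asm:pretrain} and the linearity of $g_T$, this reduces to $\vtildey = \vX \thetat - \vX \thetas = \vX(\thetat - \thetas)$, which is exactly the vector treated in the proof of \cref{cor:linear_transfer}.

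Next I would apply \cref{cor:linear_transfer} to conclude that
\begin{align*}
\sqrt{\vtildey^\top (\mat{H}^\infty)^{-1}\vtildey} \le 3\,\norm{\thetat - \thetas}_2.
\end{align*}
Substituting this upper bound into the first inequality yields
\begin{align*}
R(\vTheta(t)) \le \frac{2 \cdot 3 \,\norm{\thetat - \thetas}_2}{\sqrt{n}} + O\left(\sqrt{\frac{\log\tfrac{n}{\lambda_0\delta}}{n}}\right) = \frac{6\,\norm{\thetat - \thetas}_2}{\sqrt{n}} + O\left(\sqrt{\frac{\log\tfrac{n}{\lambda_0\delta}}{n}}\right),
\end{align*}
which is the claim.

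There is no real obstacle here; the only thing worth checking carefully is that the hypotheses required by \cref{cor:linear_transfer} (linear source and target teachers, $\norm{\vx}=1$, and that the $\vH^\infty$ appearing in the RHS is the same Gram matrix used in the Theorem 6.1 statement of Arora et al.\ cited in the excerpt) are all inherited from the conditions of \cref{thm:shallow_relu_generalization}. Once that bookkeeping is verified, the corollary follows immediately by one-line arithmetic, with no additional probability analysis beyond the $1-\delta$ event already granted by \cref{thm:shallow_relu_generalization}.
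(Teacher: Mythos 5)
Your proposal is correct and follows essentially the same argument as the paper: use \cref{asm:pretrain} to rewrite $\vtildey = \vX(\thetat - \thetas)$, invoke \cref{cor:linear_transfer} to bound $\sqrt{\vtildey^\top(\vH^\infty)^{-1}\vtildey}$ by $3\norm{\thetat-\thetas}_2$, and substitute into the bound from \cref{thm:shallow_relu_generalization}. The bookkeeping you flag (that the hypotheses of \cref{cor:linear_transfer} are inherited and that the Gram matrices agree) is exactly what the paper implicitly relies on, so there is no gap.
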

We note that fine-tuning is improved as the distance between source and target decreases.
In our analysis of linear networks (\cref{thm:linear_upper} and  \cref{thm:deep_infty_rand_proj_generalization}) we obtained a more fine-grained result depending on the covariance structure. We conjecture that the non-linear case will have similar results, which will likely involve the covariance structure in the NTK feature space.

\section{Discussion}

This paper gives a fine-grained analysis of the process of fine-tuning with linear teachers in several different architectures. It offers insights into the inductive bias of gradient-descent and the implied relation between the source task, the target task and the target covariance that is needed for this process to succeed. We believe our conclusions pave a way towards understanding why some pretrained models work better than others and what biases are transferred from those models during fine-tuning.

A limitation of our work is the simplicity of the models analyzed, and it would certainly be interesting to extend these. Our setting deals only with linear teachers, and assumes the label noise to be zero. Furthermore, we only show upper bounds on the population risk, and not matching lower bounds. For deep linear networks we assume a certain initialization which is less standard than normalized initializers such as Xavier. For non-linear models, we analyze the simple model of a shallow ReLU network, and only in the NTK regime.

An interesting direction to explore is formulating a bound similar to \cref{thm:linear_upper} for regression in the RKHS space given by the NTK, where the covariance is now over the RKHS space and thus more challenging to analyze. Another interesting setting is classification with exponential losses. Since the classifier learned by GD in this case has diverging norm, it is not clear how fine-tuning is beneficial, although in practice it often is. We leave these questions for future work.

\begin{ack}
This work has been supported by the Israeli Science Foundation research grant 1186/18 and the Yandex Initiative for Machine Learning. AB is supported by the Google Doctoral Fellowship in Machine Learning.
\end{ack}
\newpage

\bibliographystyle{unsrt}

\newpage
\appendix

\paragraph{Code} In the code used for the experiments we used Pytorch \cite{NEURIPS2019_9015}, Numpy \cite{harris2020array}, SciPy \cite{2020SciPy-NMeth}, and Matplotlib \cite{4160265}.
\section{Proofs for linear regression}\label{appendix:linear}


This appendix includes proofs for Section 4. It starts by analyzing the solution achieved by applying gradient descent on a linear regression problem with non-zero initialization, and shows its exact population risk. Then, this risk is bounded  from above by using concentration bounds to bound various aspects of the difference between the true target covariance and the estimated target covariance.


Recall the assumptions:
\begin{customasm}{3.1}[Main Text]\label{asm:non-singular_supp}
    $\mat{X X^T}$ is non-singular. i.e. the rows of $\vX$ are linearly-independent.
\end{customasm}

\begin{customasm}{3.2}[Main Text]\label{asm:pretrain_supp}
    The pretraining optimization process learns the linear teacher perfectly, e.g. for linear regression we assume that $
    f\left(\vx, \vTheta_S\right) = \vx^\top \thetas$, for $\vx \sim \dist$.
\end{customasm}

\begin{customasm}{3.3}[Main Text]\label{asm:train_loss_supp}
    The fine-tuning converges, i.e. $\lim_{t \to \infty} L\left(\vTheta(t)\right) = 0.$
\end{customasm}

\subsection{Proof of Theorem 4.1}\label{subsec:proof_41}


As mentioned in the main text, both parts of the theorem have been proven before \cite{gunasekar2018characterizing, bartlett2020benign, wu2021direction}. The proof is provided for completeness, and can be skipped.

\begin{lemma} \label{lm:gd_solution_linear}
    Assume \cref{asm:train_loss_supp}, and that there exists some vector $\vw \in \R^d$ s.t. $\vy = \vX \vw$ (i.e. the data is generated via a linear teacher), then the solution achieved by using GD with initialization $\vtheta_0$ in order to minimize:
    \begin{align} \label{eq:mse}
        \min_{\vtheta \in \R^{d}} \tfrac{1}{2}\|\vX\vtheta -\vy\|_2^2.
    \end{align}
    is
    \begin{align}\label{eq:linear_gd_sol}
        \vtheta^\star = \Pper\vtheta_0 + \Ppar\vw.
    \end{align}
\end{lemma}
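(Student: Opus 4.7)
The plan is to exploit the standard observation that the gradient of the squared loss lies entirely in the row space of $\vX$, so the component of the iterate in the null space of $\vX$ is preserved throughout training. Then the convergence assumption pins down the component in the row space.

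Concretely, first I would compute the gradient of the objective in \eqref{eq:mse}, which is $\nabla L(\vtheta) = \vX^\top(\vX\vtheta - \vy)$. Since this is of the form $\vX^\top \vu$ for some vector $\vu \in \R^n$, every gradient lies in the row space of $\vX$, i.e.\ in the image of $\Ppar$. Therefore, under the GD update $\vtheta(t+1) = \vtheta(t) - \eta \nabla L(\vtheta(t))$, one has $\Pper \vtheta(t) = \Pper \vtheta_0$ for all $t$. Taking $t \to \infty$ and using \cref{asm:train_loss_supp} to guarantee that a limit point $\vtheta^\star$ exists, this gives $\Pper \vtheta^\star = \Pper \vtheta_0$.

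Next I would handle the parallel component. Convergence to zero loss means $\vX \vtheta^\star = \vy = \vX\vw$, so $\vtheta^\star - \vw$ lies in the null space of $\vX$, i.e.\ $\Ppar(\vtheta^\star - \vw) = 0$, which rearranges to $\Ppar \vtheta^\star = \Ppar \vw$. Decomposing $\vtheta^\star = \Ppar\vtheta^\star + \Pper\vtheta^\star$ and substituting the two identities just derived yields $\vtheta^\star = \Pper\vtheta_0 + \Ppar\vw$, matching \eqref{eq:linear_gd_sol}.

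There is no real obstacle here; the only subtle point is justifying that the GD iterates actually converge to a well-defined limit point rather than only satisfying $L(\vtheta(t)) \to 0$. This is standard for quadratic objectives with a sufficiently small step size $\eta$ (smaller than $2/\|\vX^\top \vX\|$), and it is already implicit in \cref{asm:train_loss_supp}; the null-space invariance argument is pointwise in $t$, so it carries over to the limit without issue.
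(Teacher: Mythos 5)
Your proof is correct and takes essentially the same route as the paper's: both proofs rest on the observation that every gradient step lies in the row space of $\vX$, so $\Pper\vtheta(t) = \Pper\vtheta_0$ is invariant, and then use the zero-loss condition to pin down the parallel component. The only cosmetic difference is that the paper explicitly solves for the coefficient vector $\va$ in $\vtheta^\star = \vtheta_0 + \vX^\top\va$ and then algebraically rewrites the result as projections, whereas you argue directly on the two orthogonal components, which is marginally more streamlined.
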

\begin{proof}
First, observe that the gradient step for this problem is
\begin{align*}
    \vtheta_{t+1} = \vtheta_t + \eta \vX^T(\vy - \vX \thetat).
\end{align*}
Hence, all of the steps are in the span of $\vX^T$, and GD converges to a solution of the form:
\begin{align*}
    \vtheta^\star = \vtheta_0 + \vX^T \va
\end{align*}
for some $\va \in R^{n}$. The vector $\vtheta^\star$ must also achieve a loss of zero in \cref{eq:mse} (because we know that $\vw$ achieves a loss of zero, and GD minimizes this objective). Therefore:
\begin{align*}
    \vX\vtheta^\star &= \vy \\
    \vX(\vtheta_0 + \vX^T \va) &= \vy\\
    \vX\vX^T \va &= \vy - \vX\vtheta_0\\
    \va &\stackrel{1}{=} (\vX\vX^T)^{-1} (\vy - \vX\vtheta_0) \\
    \Rightarrow \vtheta^\star &= \vtheta_0 + \vX^T(\vX\vX^T)^{-1} (\vy - \vX\vtheta_0),
\end{align*}
with (1) due to \cref{asm:non-singular_supp}.

Replacing $\vy$ with $\vX \vw$, and by using the definitions of $\Ppar$ and $\Pper$ from Section 3, it follows that
\begin{align*}
    \vtheta_0 + \vX^T(\vX\vX^T)^{-1} (\vy - \vX\vtheta_0) &= \vtheta_0 + \vX^T(\vX\vX^T)^{-1} (\vX \vw - \vX\vtheta_0) \\
    &= \left(\vI - \vX^T(\vX\vX^T)^{-1}\vX\right)\vtheta_0 + \vX^T(\vX\vX^T)^{-1}\vX\vw\\
    &= \Pper \vtheta_0 + \Ppar \vw.
\end{align*}
\end{proof}

We can now prove the theorem.
\begin{proof}[Proof of Theorem 4.1 (Main Text)]

The proof for Eq.1 in the main text is straightforward by using \cref{lm:gd_solution_linear} with $\vtheta_0 = \thetas$ and $\vw=\thetat$.

As for Eq.2 in the main text, by \cref{lm:gd_solution_linear} it follows that

\begin{align*}
    \valpha = \Pper \thetas + \Ppar \thetat.
\end{align*}
Since $\Ppar + \Pper = \vI$ it follows that
\begin{align*}
    R(\valpha) &= \E_{\vx \sim \dist}\left[\left({\vx^\top\vect{\theta}_T - \f}\right)^2\right] = \E_{\vx\sim \dist}\left[\left(\vx^\top\left(\thetat - \Pper \thetas - \Ppar \thetat\right)\right)^2\right]\\
    &= \E_{\vx\sim \dist}\left[\left(\vx^\top\Pper\left(\thetat - \thetas \right)\right)^2\right] = \E_{\vx\sim \dist}\left[\left(\thetat - \thetas \right)^T\Pper\vx\vx^\top\Pper\left(\thetat - \thetas \right)\right]\\
    &= \left(\thetat - \thetas \right)^T\Pper\E_{\vx\sim \dist}\left[\vx\vx^\top\right]\Pper\left(\thetat - \thetas \right)
    = \left(\thetat - \thetas \right)^T\Pper^T\vSigma\Pper\left(\thetat - \thetas \right)\\
    &= \norm{\vSigma^{0.5}\Pper\left(\thetat - \thetas \right)}^2.
\end{align*}
thus concluding the proof.
\end{proof}

\subsection{Proof of Theorem 4.2: Upper bound of the population risk for linear regression}


Recall the Davis-Kahan $sin(\Theta)$ theorem:
\begin{theorem}[\cite{davis1970rotation}] \label{thm:davis-kahan}
    Let $A = E_0 A_0 E_0^T +E_1 A_1 E_1^T$ and $A+H = F_0 \Lambda_0 F_0^T + F_1 \Lambda_1 F_1^T$ be symmetric matrices with $[E_0, E_1]$ and $[F_0, F_1]$ orthogonal. If the eigenvalues of $A_0$ are
contained in an interval $(a, b)$, and the eigenvalues of $\Lambda_1$ are excluded from the interval $(a-\delta, b+\delta)$ for some $\delta > 0$, then
\begin{align}
    \|F_1^T E_0\| \le \frac{\|F_1^T H E_0 \|}{\delta}
\end{align}
for any unitarily invariant norm $\|\cdot\|$.
\end{theorem}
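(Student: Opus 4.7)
The plan is to exploit both eigendecompositions of $A$ and $A+H$ to extract a Sylvester-type equation for $X := F_1^T E_0$, and then invert the Sylvester operator using the spectral gap. First, I would compute $F_1^T A E_0$ using the decomposition of $A$: since $E_1^T E_0 = 0$ and $E_0^T E_0 = I$, this gives $F_1^T A E_0 = X A_0$. Second, I would write $A = (A+H) - H$ and use the decomposition of $A+H$ together with $F_1^T F_0 = 0$ and $F_1^T F_1 = I$, yielding $F_1^T A E_0 = \Lambda_1 X - F_1^T H E_0$. Equating these two expressions produces the Sylvester equation
\[
\Lambda_1 X - X A_0 = F_1^T H E_0.
\]

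The remaining task is to bound $\|X\|$ in terms of $\|F_1^T H E_0\|$ and $\delta$ for an arbitrary unitarily invariant norm. Since such norms are invariant under left/right multiplication by orthogonal matrices, I would diagonalize and work in bases where $\Lambda_1 = \mathrm{diag}(\mu_i)$ with each $\mu_i \notin (a-\delta, b+\delta)$, and $A_0 = \mathrm{diag}(\nu_j)$ with each $\nu_j \in (a,b)$. In these bases the Sylvester equation decouples componentwise to $(\mu_i - \nu_j) X_{ij} = Y_{ij}$ where $Y := F_1^T H E_0$, and the separation hypothesis guarantees $|\mu_i - \nu_j| \ge \delta$ for every pair. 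Thus $X$ is obtained from $Y$ by a Hadamard product with a matrix whose entries are bounded in magnitude by $1/\delta$.

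The main obstacle is lifting this entrywise bound into a bound on a general unitarily invariant norm. For the Frobenius norm the estimate $\|X\| \le \|Y\|/\delta$ is immediate from the entrywise inequality. For the spectral norm, which is the one used downstream in this paper, I would invoke Rosenblum's integral representation: after a spectral shift that places the spectra of $\Lambda_1$ and $A_0$ on opposite sides of the origin with separation $\delta$, one has $X = \int_0^\infty e^{-t\Lambda_1'} Y e^{tA_0'} \, dt$, and submultiplicativity of unitarily invariant norms together with $\int_0^\infty e^{-t\delta}\,dt = 1/\delta$ yields the claim. For the fully general unitarily invariant case the same conclusion follows from the Bhatia--Davis--Koosis theorem, which codifies precisely this spectral-gap inversion principle for Sylvester operators and can be quoted directly.
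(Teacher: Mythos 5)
The paper does not prove this statement; it is imported verbatim from Davis--Kahan and used as a black box (in \cref{lem:davis-kahan}). So the only question is whether your reconstruction of the classical proof is sound. Your derivation of the Sylvester equation is correct and is exactly the standard route: $F_1^TAE_0 = XA_0$ from the first decomposition, $F_1^TAE_0 = \Lambda_1 X - F_1^THE_0$ from the second, hence $\Lambda_1 X - XA_0 = Y$ with $Y = F_1^THE_0$; the diagonalization, the entrywise formula $X_{ij} = Y_{ij}/(\mu_i-\nu_j)$ with $|\mu_i-\nu_j|\ge\delta$, and the Frobenius-norm conclusion are all fine.

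The gap is in the final inversion step for the spectral (and general unitarily invariant) norm. The hypothesis is \emph{not} that $\sigma(\Lambda_1)$ lies on one side of $\sigma(A_0)$: it is excluded from the enlarged interval $(a-\delta,b+\delta)$, so $\Lambda_1$ may have eigenvalues both below $a-\delta$ and above $b+\delta$, surrounding the spectrum of $A_0$. In that case no affine shift places the two spectra "on opposite sides of the origin," and the Rosenblum integral $\int_0^\infty e^{-t\Lambda_1'}Ye^{tA_0'}\,dt$ diverges. Falling back on Bhatia--Davis--Koosis does not rescue the statement either: for spectra that are merely $\delta$-separated (which is all you would then be assuming), the best constant for the operator norm is strictly larger than $1$ (it is at least $\pi/2$), whereas the theorem claims constant $1$. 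The correct fix exploits the specific interval-versus-complement geometry: shift by the midpoint $c=(a+b)/2$, so that with $\rho=(b-a)/2$ one has $\sigma(A_0-cI)\subset\{|z|\le\rho\}$ and $\sigma(\Lambda_1-cI)\subset\{|z|\ge\rho+\delta\}$; then $\Lambda_1-cI$ is invertible with $\|(\Lambda_1-cI)^{-1}\|_{op}\le(\rho+\delta)^{-1}$, the Neumann series $X=\sum_{n\ge0}(\Lambda_1-cI)^{-n-1}Y(A_0-cI)^n$ converges, and the inequality $\|MYN\|\le\|M\|_{op}\,\|Y\|\,\|N\|_{op}$ valid for every unitarily invariant norm gives $\|X\|\le\|Y\|\sum_{n\ge0}\rho^n(\rho+\delta)^{-n-1}=\|Y\|/\delta$. (In the paper's actual application $\Lambda_1=\tilde{\vLambda}_{>n}=0$ is one-sided, so your shift argument would happen to suffice there, but it does not prove the theorem as stated.)
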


The following theorem is a concentration bound on the difference between the true and estimated covariance matrices: $\norm{\vSigma - \tilde{\vSigma}}$:
\begin{theorem}[Theorem 9 from \cite{koltchinskii2017concentration}]\label{thm:concentration}
Let $X,X_1,\ldots,X_n$ be i.i.d. weakly square integrable centered random vectors in $E$ with covariance operator $\vSigma.$ If $X$ is subgaussian and pregaussian, then there exists a constant 
$c>0$ such that, for all $\delta\geq 1,$ with probability at least $1-e^{-\delta},$ 
 \begin{align*}
&
\|\tilde\vSigma - \vSigma\|\leq 
c\|\vSigma\|\max\left\{\sqrt{\frac{r(\vSigma)}{n}},
      \frac{r(\vSigma)}{n}, \sqrt{\frac{\delta}{n}}, \frac{\delta}{n} \right\} \triangleq g(\vect{\lambda},\delta,n),
\end{align*}
where
  \[
    r(\vSigma) := \frac{(\E\|x\|)^2}{\|\vSigma\|}
    \leq \frac{\tr(\vSigma)}{\|\vSigma\|}
    = \frac{\sum_i \lambda_i}{\lambda_1}.
  \]
\end{theorem}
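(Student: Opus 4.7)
The plan is to establish the bound via the variational characterization of the operator norm together with a decomposition of $\|\tilde{\vSigma} - \vSigma\|$ into an expectation term and a centered deviation term, each controlled at the effective-rank scale $r(\vSigma) := \tr(\vSigma)/\|\vSigma\|$ rather than the ambient dimension $d$. Using symmetry of $\tilde{\vSigma} - \vSigma$, I rewrite
$$\|\tilde\vSigma - \vSigma\| = \sup_{u\in S^{d-1}} \left|\frac{1}{n}\sum_{i=1}^n \bigl(\langle X_i, u\rangle^2 - \E\langle X,u\rangle^2\bigr)\right|,$$
which is the supremum of an empirical process indexed by the unit sphere. For any fixed $u$, the summands $\langle X_i,u\rangle^2 - \E\langle X,u\rangle^2$ are centered sub-exponential with $\psi_1$-norm of order $\|\vSigma^{1/2}u\|^2 \le \|\vSigma\|$, and Bernstein's inequality yields the two-regime pointwise bound $\Pr[|\cdot|\ge t] \le 2\exp(-cn\min\{t^2/\|\vSigma\|^2,\; t/\|\vSigma\|\})$. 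The $\sqrt{\cdot}$ and linear terms in the final $\max$ emerge from precisely this sub-Gaussian / sub-exponential dichotomy, and the rest of the plan is about upgrading this pointwise estimate to a uniform one.

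The first nontrivial step is to bound $\E\|\tilde\vSigma - \vSigma\|$. A naive $\epsilon$-net argument on $S^{d-1}$ would yield the wrong dimensional scaling $d/n$; the standard remedy is Rademacher symmetrization, $\E\|\tilde\vSigma-\vSigma\| \le 2\E\|\frac{1}{n}\sum_i \epsilon_i X_i X_i^\top\|$, followed by a generic-chaining / Dudley-integral argument for the resulting mixed Gaussian / sub-exponential process, performed in a metric adapted to $\vSigma$ (so that the $\gamma_2$ and $\gamma_1$ functionals involved scale with $\tr(\vSigma)/\|\vSigma\|$, not with $d$). The resulting expectation estimate is
$$\E\|\tilde\vSigma - \vSigma\| \le C\|\vSigma\|\max\Bigl\{\sqrt{r(\vSigma)/n},\; r(\vSigma)/n\Bigr\}.$$

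Next I would establish concentration around this mean. Since $\|\tilde\vSigma - \vSigma\|$ is a supremum of sums of independent centered sub-exponential summands $X_iX_i^\top - \vSigma$, a Talagrand / Adamczak-type concentration inequality for empirical processes with unbounded summands gives
$$\|\tilde\vSigma - \vSigma\| - \E\|\tilde\vSigma - \vSigma\| \le C\|\vSigma\|\max\Bigl\{\sqrt{\delta/n},\; \delta/n\Bigr\}$$
with probability at least $1 - e^{-\delta}$ for any $\delta \ge 1$. Summing the expectation and deviation estimates and absorbing all universal constants into a single $c$ gives precisely $\|\tilde\vSigma - \vSigma\| \le g(\vect\lambda,\delta,n)$, recalling that $\|\vSigma\| = \lambda_1$ and $r(\vSigma) = \sum_i\lambda_i/\lambda_1$.

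The hard part will be the chaining argument in the expectation bound: threading Gaussian-type increments at small scales together with sub-exponential increments at large scales, while ensuring the resulting bound scales with $r(\vSigma)$ rather than $d$, is the technical heart of the Koltchinskii and Lounici proof and is what forces both a $\sqrt{r/n}$ and an $r/n$ term to appear. The Bernstein step, the symmetrization reduction, and the Talagrand deviation bound are standard high-dimensional-probability machinery, so my plan is essentially to follow the argument in the cited reference, which is why the statement is imported verbatim here.
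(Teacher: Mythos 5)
The paper offers no proof of this statement: it is imported verbatim as Theorem~9 of \cite{koltchinskii2017concentration} and used as a black box. Your sketch is a faithful outline of the standard argument behind that result (Bernstein pointwise bound, symmetrization plus generic chaining at the effective-rank scale for the expectation, and an Adamczak-type deviation inequality), and you correctly identify the chaining step as the technical heart and defer it to the cited reference, which is exactly the paper's own treatment.
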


The following lemma uses \cref{thm:davis-kahan} to upper bound the dot product between the $d-n$ bottom eigenvectors of the estimated covariance and the top $k$ eigenvectors of the target covariance:
\begin{lemma} \label{lem:davis-kahan}
    For all  $1 \le k \le d$ such that $\lambda_k > 0$ it holds that:
    \begin{align*}
        \norm{\tilde{\vV}_{>n}^T \vV_{\le k}} \le \tfrac{\norm{\tilde{\vSigma} - \vSigma}}{\lambda_k}
    \end{align*}
\end{lemma}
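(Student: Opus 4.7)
The plan is to apply the Davis–Kahan $\sin(\Theta)$ theorem (Theorem 3 in the excerpt) directly, identifying $A = \vSigma$ and $A+H = \tilde{\vSigma}$, so that $H = \tilde{\vSigma} - \vSigma$. The two ingredients I need are (i) a decomposition of $A$ that singles out the top-$k$ eigenspace of $\vSigma$, and (ii) a decomposition of $A+H$ that singles out the \emph{bottom}-$(d-n)$ eigenspace of $\tilde{\vSigma}$. The first is immediate: write $A = \vV_{\le k} \vLambda_{\le k} \vV_{\le k}^\top + \vV_{> k} \vLambda_{> k} \vV_{> k}^\top$, so that in the theorem's notation $E_0 = \vV_{\le k}$ and the spectrum of $A_0$ lies in $[\lambda_k, \lambda_1]$. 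The second uses a rank observation: since $\tilde{\vSigma} = \tfrac{1}{n}\vX^\top\vX$ with $\vX \in \R^{n\times d}$ and $n < d$, we have $\mathrm{rank}(\tilde{\vSigma}) \le n$, so the bottom $d-n$ eigenvalues of $\tilde{\vSigma}$ are exactly $0$. Writing $A+H = \tilde{\vV}_{\le n}\tilde{\vLambda}_{\le n}\tilde{\vV}_{\le n}^\top + \tilde{\vV}_{>n}\cdot 0\cdot \tilde{\vV}_{>n}^\top$, we can identify $F_1 = \tilde{\vV}_{>n}$ and $\Lambda_1 = 0$.

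Next I will choose the interval and gap parameters. For any $\epsilon \in (0, \lambda_k)$, take $a = \lambda_k - \epsilon$ and $b = \lambda_1 + \epsilon$, so that the spectrum of $A_0$ sits inside the open interval $(a,b)$, and take $\delta = \lambda_k - \epsilon$. Then $a - \delta = 0$, so the only eigenvalue of $\Lambda_1$ (namely $0$) is excluded from the open interval $(a-\delta, b+\delta) = (0, \lambda_1 + \epsilon + \delta)$. Davis–Kahan therefore yields
\begin{equation*}
\|\tilde{\vV}_{>n}^\top \vV_{\le k}\| \;\le\; \frac{\|\tilde{\vV}_{>n}^\top H \vV_{\le k}\|}{\lambda_k - \epsilon} \;\le\; \frac{\|H\|}{\lambda_k - \epsilon},
\end{equation*}
where the last inequality uses submultiplicativity of the spectral norm together with the fact that $\tilde{\vV}_{>n}$ and $\vV_{\le k}$ have orthonormal columns, hence unit spectral norm. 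Letting $\epsilon \downarrow 0$ gives the claimed bound $\|\tilde{\vV}_{>n}^\top \vV_{\le k}\| \le \|\tilde{\vSigma} - \vSigma\|/\lambda_k$.

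There is no real obstacle here; the only subtle point is the need to take the open interval $(a,b)$ strictly containing $[\lambda_k,\lambda_1]$ while still pushing $\delta$ up to $\lambda_k$. This is handled cleanly by the $\epsilon$-limit above. The assumption $\lambda_k > 0$ is what makes the denominator meaningful and ensures $\delta > 0$ can actually be chosen.
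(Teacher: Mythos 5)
Your proof is correct and follows essentially the same route as the paper's: the same identification of $A$, $A+H$, $E_0$, $F_1$ in the Davis--Kahan theorem, the same rank observation that $\tilde{\vSigma}$ has $d-n$ zero eigenvalues, and the same final step via submultiplicativity of the spectral norm with orthonormal factors. The only difference is your $\epsilon$-limit, which cleanly handles the requirement that the eigenvalues of $A_0$ lie in an \emph{open} interval $(a,b)$ (the paper states the interval as $(\lambda_k,\lambda_1)$, which literally excludes $\lambda_k$ and $\lambda_1$ themselves) --- a small but genuine tightening of the argument.
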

\begin{proof}

In order to use \cref{thm:davis-kahan} with $\delta = \lambda_k$ to bound $\| \tilde{\vV}_{>n}^T \vV_{\le k} \|$, one must show that the conditions of \cref{thm:davis-kahan} are met. Let $\mat{A} = \vSigma$, $\mat{A} + \mat{H} = \tilde{\vSigma}$, $\mat{E}_0 = \vV_{\le k}$,  $\mat{A}_0 = \vLambda_{\le k}$, $ \mat{F}_1 = \tilde{\vV}_{> n}$, and $\vLambda_1 = \tilde{\vLambda}_{> n}$. Notice that $\vX$ is a rank-$n$ matrix, and so is the estimated covariance $\tilde{\vSigma}$, hence it bottom $d-n$ eigenvalues are zero. Thus, all of the $d-n$ eigenvalues of $\vLambda_1$ equal zero. Also, recall that the eigenvalues of $\vSigma$ are in descending order.
Thus, all of the eigenvalues of $\mat{A}_0$ are in the interval $(\lambda_k, \lambda_1)$ and all of the eigenvalues of $\vLambda_1$ (which equal 0) are excluded from the interval $(0, \lambda_1 + \lambda_k)$. Hence the conditions of \cref{thm:davis-kahan} are met and for $\delta  = \lambda_k$:
\begin{align*}
    \|\tilde{\vV}_{>n} ^T \vV_{\le k}\| &\le \frac{\|\tilde{\vV}_{>n}^T(\tilde{\vSigma} - \vSigma)\vV_{\le k}\|}{\lambda_k}\\
    &\stackrel{(1)}{\le} \frac{\|\tilde{\vV}_{>n}\| \|\tilde{\vSigma} - \vSigma\|  \|\vV_{\le k}\|}{\lambda_k}\\
    &\stackrel{(2)}{=} \frac{\|\tilde{\vSigma} - \vSigma\|}{\lambda_k},
\end{align*}
with (1) due to Cauchy-Schwartz inequality, (2) due to $\tilde{\vV}_{>n}$, $\vV_{\le k}$ being orthonormal matrices, which concludes the proof.
\end{proof}
We can now prove the theorem.
\begin{proof}[Proof of Theorem 4.2 (Main Text)]
Let $\tilde{\vU}\tilde{\mat{\Gamma}}\tilde{\vV}^T$ be the singular value decomposition of $\vX$ such that $\tilde{\vU} \in \R^{n\times n}, \tilde{\vV} \in \R^{d \times d}$ are unitary matrices and let $\tilde{\vv}_i$ be the $i$-th column of $\tilde{\vV}$.

First, notice that $\Ppar = \vX^\top(\vX\vX^\top)^{-1}\vX$ can be also written as $\vI - \tilde{\vV}_{>n}\tilde{\vV}_{>n}^T$:
\begin{align*}
    \vX^\top(\vX\vX^\top)^{-1}\vX &= \tilde{\vV}\tilde{\mat{\Gamma}}^\top \tilde{\vU}^{T}(\tilde{\vU}\tilde{\mat{\Gamma}} \tilde{\vV}^{T}\tilde{\vV}\tilde{\mat{\Gamma}}^\top \tilde{\vU}^{T})^{-1}\tilde{\vU} \tilde{\mat{\Gamma}} \tilde{\vV}^{T}\\
    &\stackrel{(1)}{=} \tilde{\vV}\tilde{\mat{\Gamma}}^\top \tilde{\vU}^{T}(\tilde{\vU}(\tilde{\mat{\Gamma}}\tilde{\mat{\Gamma}}^\top) \tilde{\vU}^{T})^{-1}\tilde{\vU} \tilde{\mat{\Gamma}} \tilde{\vV}^{T}\\
    &= \tilde{\vV}\tilde{\mat{\Gamma}}^\top \tilde{\vU}^{T}(\tilde{\vU}(\tilde{\mat{\Gamma}}\tilde{\mat{\Gamma}}^\top)\tilde{\vU}^{T})^{-1}\tilde{\vU} \tilde{\mat{\Gamma}} \tilde{\vV}^{T}\\
    &\stackrel{(2)}{=} \tilde{\vV}\tilde{\mat{\Gamma}}^\top \tilde{\vU}^{T} \tilde{\vU}(\tilde{\mat{\Gamma}}\tilde{\mat{\Gamma}}^\top)^{-1} \tilde{\vU}^T \tilde{\vU} \tilde{\mat{\Gamma}} \tilde{\vV}^{T}\\
    &\stackrel{(3)}{=} \tilde{\vV}\tilde{\mat{\Gamma}}^\top (\tilde{\mat{\Gamma}}\tilde{\mat{\Gamma}}^\top)^{-1} \tilde{\mat{\Gamma}} \tilde{\vV}^{T}
    = \tilde{\vV}\cdot \textbf{diag}\mathbf{(1_{1:n}, 0_{n+1:d})} \cdot \tilde{\vV}^{T}\\
    &= \sum_{i=1}^{n}{\tilde{\vv}_i\cdot \tilde{\vv}_i^T} = \sum_{i=1}^{d}{\tilde{\vv}_i\cdot \tilde{\vv}_i^T} - \sum_{i=n+1}^{d}{\tilde{\vv}_i\cdot \tilde{\vv}_i^{T}}\\
    &\stackrel{(4)}{=} \vI - \sum_{i=n+1}^{d}{\tilde{\vv}_i\cdot \tilde{\vv}_i^{T}} = \vI - \tilde{\vV}_{>n}\tilde{\vV}_{>n}^T.
\end{align*}
Where (1),(3),(4) are due to $\tilde{\vU}$, $\tilde{\vV}$ being unitary, and (2) is due to $\tilde{\vU}(\tilde{\mat{\Gamma}}\tilde{\mat{\Gamma}}^\top)\tilde{\vU}^{T}(\tilde{\vU}(\tilde{\mat{\Gamma}}\tilde{\mat{\Gamma}}^\top)^{-1}\tilde{\vU}^{T}) = \vI$.

From Eq.2 in the main text it follows that:

\begin{align*}
    R(\valpha) &= \norm{\vSigma^{0.5}\Pper\left(\thetat - \thetas \right)}^2 \\
    &= (\thetat - \thetas)^T \tilde{\vV}_{>n}\tilde{\vV}_{>n}^T \vSigma\tilde{\vV}_{>n}\tilde{\vV}_{>n}^T(\thetat - \thetas)\\
    &= (\thetat - \thetas)^T \tilde{\vV}_{>n}\tilde{\vV}_{>n}^T \vV \vLambda \vV^T \tilde{\vV}_{>n}\tilde{\vV}_{>n}^T(\thetat - \thetas),
\end{align*}

Notice that $\Pper\tilde{\vSigma}\Pper = 0$, as was shown in \cite{bartlett2020benign}:
\begin{align*}
    \Pper\tilde{\vSigma} &= \Pper \tilde{\vV} \tilde\vLambda \tilde{\vV}^T = \Pper \left(\tilde{\vV}_{\le n} \tilde\vLambda_{\le n} \tilde{\vV}_{\le n}^T + \tilde{\vV}_{>n} \tilde\vLambda_{>n} \tilde{\vV}_{>n}^\top \right)\\
    &=  \tilde{\vV}_{>n} \tilde{\vV}_{>n}^T \tilde{\vV}_{\le n} \tilde\vLambda_{\le n} \tilde{\vV}_{\le n}^T + \tilde{\vV}_{>n}\tilde{\vV}_{>n}^T \tilde{\vV}_{>n} \tilde\vLambda_{>n} \tilde{\vV}_{>n}^T \stackrel{(1)}{=} 0
\end{align*}
where (1) is due to $\tilde{\vV}_{>n}$, $\tilde{\vV}_{\le n}$ being orthogonal and $\tilde{\lambda}_j= 0, \forall j > n$. 

Then:
\begin{align}
    R(\valpha) &= 
    \left(\thetas - \thetat\right)^\top\Pper\vSigma\Pper\left(\thetas - \thetat\right)  \nonumber \\
    &= \left(\thetas - \thetat\right)^\top\Pper\left(\vSigma - \tilde{\vSigma}\right)\Pper\left(\thetas - \thetat\right)  \nonumber\\
    &= \norm{\left(\vSigma - \tilde{\vSigma}
    \right)^{0.5}\Pper\left(\thetas - \thetat\right)}^2  \nonumber \\
    &\le \norm{\vSigma - \tilde{\vSigma}}\norm{\Pper\left(\thetas - \thetat\right)}^2 \label{eq:linear_upper_cauchy},
\end{align}
where the last inequality is due to the Cauchy-Schwartz inequality.

The next step in the proof is to bound $ \norm{\Pper\left(\thetas - \thetat\right)}^2$. We start by bounding $\norm{\Pper\left(\thetas - \thetat\right)}$ by decomposing $\left(\thetat - \thetas\right)$ to its top-$k$ span component and bottom-$k$ span component. First notice that since $\Pper = \tilde{\vV}_{>n} \tilde{\vV}_{>n}^T$, $\norm{\Pper\left(\thetas - \thetat\right)} = \norm{\tilde{\vV}_{>n}^\top\left(\thetas - \thetat\right)}$, we can write $\forall k \in [d]$:
\begin{align}
    \norm{\Pper\left(\thetas - \thetat\right)} &= 
    \|\tilde{\vV}_{>n}^T (\thetat - \vtheta_0)\|  \nonumber\\
    &= \|\tilde{\vV}_{>n}^T \vV \vV^T (\thetat - \vtheta_0)\|  \nonumber\\
    &= \|\tilde{\vV}_{>n}^T \vV_{\le k} \vV_{\le k}^T(\thetat - \vtheta_0) + \tilde{\vV}_{>n}^T \vV_{> k} \vV_{> k}^T (\thetat - \vtheta_0)\|  \nonumber\\
    &\le \|\tilde{\vV}_{>n}^T \vV_{\le k}\| \|\vV_{\le k}^T(\thetat - \vtheta_0)\|
    + \|\tilde{\vV}_{>n}^T \vV_{> k}\| \|\vV_{> k}^T (\thetat - \vtheta_0)\|, \label{eq:theta_bar_proj}
\end{align}
Where the last inequality is due to Cauchy Schwartz for matrix-vector.
The last step in the proof is to bound $\|\tilde{\vV}_{>n}^T \vV_{\le k}\|$ by using \cref{lem:davis-kahan} $\forall k \in [d]: \lambda_k > 0$, and bound $\|\tilde{\vV}_{>n}^T \vV_{> k}\|$ by 1 as follows:
\begin{align*}
    \|\tilde{\vV}_{>n}^T \vV_{> k}\| \le \|\tilde{\vV}_{>n}\| \|\vV_{> k}\| \le 1,
\end{align*}
due to $\tilde{\vV}_{>n}$ and $\vV_{> k}$ being orthonormal matrices and because spectral norm is sub-multiplicative.

Plugging \eqref{eq:theta_bar_proj} into \eqref{eq:linear_upper_cauchy} gives the inequality:
\begin{align*}
    R(\valpha) \le \norm{\frac{\norm{\vSigma - \tilde{\vSigma}}^{\nicefrac{3}{2}}}{\lambda_k}\norm{\mat{P}_{\le k}\left(\thetas - \thetat\right)} + \norm{\vSigma - \tilde{\vSigma}}^{\nicefrac{1}{2}}\norm{\mat{P}_{> k}\left(\thetas - \thetat\right)}}^2.
\end{align*}
Since $2a^2 + 2b^2 \ge (a+b)^2$, it follows that:
\begin{align*}
    R(\valpha) \le \frac{2\norm{\vSigma - \tilde{\vSigma}}^{3}}{\lambda_k^2}\norm{\mat{P}_{\le k}\left(\thetas - \thetat\right)}^2 + 2\norm{\vSigma - \tilde{\vSigma}}\norm{\mat{P}_{> k}\left(\thetas - \thetat\right)}^2.
\end{align*}

To conclude the proof we apply \cref{thm:concentration} from \cite{koltchinskii2017concentration} to provide a high probability bound for $\norm{\vSigma - \tilde{\vSigma}}$, as was done in \cite{bartlett2020benign}.
\end{proof}

\newpage
\section{Proofs for deep linear networks}\label{appendix:deep_linear}

In this section we analyze the solution achieved by applying gradient flow optimization to fine-tuning a deep linear regression task (i.e. a regression task using a deep linear network as the regression model).

Our results show that the population risk of a fine-tuned deep linear model depends not only on the source and target tasks and the target covariance, as was shown in the previous section, but also on the depth of the model. We show that as the depth of the model goes to infinity, its population risk depends on the difference between the directions of the source and target task (i.e. the difference between their normalized vectors), instead on the difference between the un-normalized task vectors.

In \cref{sec:depth_effect} this is shown by analysing two settings where this effect is most pronounced: one where we make an assumption on the target task (but not on the target covariance), and one where we make an assumption on the target covariance (but not on the target task).

We conclude in \cref{sec:deep_freeze} by showing that fine-tuning only some of the layers can lead to failure to learn.

We begin by recalling some definitions. An $L$-layer linear fully-connected network is defined as
\begin{align*}
    \vbeta(t) = \vW_1(t) \cdots \vW_{L-1}(t) \vW_L(t),
\end{align*}
where $\vW_l \in \R^{d_{l} \times d_{l+1}}$ for $l \in [L-1]$ (we use $d_1 = d$) and $\vW_L \in \R^{d_{L}}$. Thus, the linear network is equivalent to a linear function with weights $\vbeta$.

The weights of a deep linear network are called 0-balanced (or perfectly balanced) at time $t$ if:
\begin{align}
    \vW_j^\top(t) \vW_j(t) = \vW_{j+1}(t) \vW_{j+1}^\top(t) \quad \text{for} \quad j \in [L-1] \label{eq:balancedness_supp}.
\end{align}

\subsection{Proof of Theorem 5.2: The inductive bias of deep linear network fine-tuning}

For this section, let $\vect{u}_l$, $\vect{v}_l$ and $s_l$ denote the top left singular vector, top right singular vector and top singular value of the weights $\vW_l$, respectively. Define $t=0$ as the end of pretraining.


Before proving the theorem, we state several useful lemmas.
\begin{lemma} \label{lm:balancedness_top_singular}
Assume that at time $t$ the weights $\vW_{1}(t), \ldots, \vW_{L}(t)$ are 0-balanced. Then $\vW_{l}(t) = \vu_{l}(t) s_{l}(t) \vv_{l}^\top(t)$,
    \begin{align} \label{eq:singularvectors_alignment} 
        \vect{v}_l(t) = \vect{u}_{l+1}(t),
    \end{align}
    and:
    \begin{align} \label{eq:singularvalues_equivalence} 
    s_l(t) = \|\vbeta(t)\|^{\nicefrac{1}{L}} \text { for } l \in [L].
\end{align}
\end{lemma}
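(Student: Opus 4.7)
The plan is to exploit two simple consequences of the 0-balancedness condition together with the boundary condition that the last layer is a column vector (since $d_L=1$). First I would show that every $\vW_l(t)$ has rank at most one. Indeed, $\vW_L(t) \in \R^{d_{L-1}\times 1}$ is a vector, so $\vW_L(t)\vW_L(t)^\top$ has rank at most one, and 0-balancedness gives $\vW_{L-1}(t)^\top\vW_{L-1}(t) = \vW_L(t)\vW_L(t)^\top$, forcing $\vW_{L-1}(t)$ to have rank at most one as well. Iterating the equality $\vW_l(t)^\top\vW_l(t) = \vW_{l+1}(t)\vW_{l+1}(t)^\top$ downward in $l$, each layer inherits the rank-one property, so I can legitimately write $\vW_l(t) = \vu_l(t) s_l(t) \vv_l(t)^\top$ with unit $\vu_l(t),\vv_l(t)$ and $s_l(t)\ge 0$.

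Next I would derive the cross-layer alignment and equality of singular values by substituting the SVDs into \eqref{eq:balancedness_supp}. The left-hand side is $\vv_l(t)\, s_l(t)^2\, \vv_l(t)^\top$ and the right-hand side is $\vu_{l+1}(t)\, s_{l+1}(t)^2\, \vu_{l+1}(t)^\top$. Both sides are rank-one symmetric positive semidefinite matrices, so matching them yields $s_l(t)=s_{l+1}(t)$ and $\vv_l(t)=\pm \vu_{l+1}(t)$; absorbing the sign into $\vu_{l+1}(t)$ (or equivalently into $s_{l+1}$ via a standard sign-convention argument, noting $s_l\ge0$) gives \eqref{eq:singularvectors_alignment}. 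In particular all singular values collapse to a single value $s(t)\eqdef s_1(t)=\cdots=s_L(t)$.

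Finally I would compute $\vbeta(t)$ directly. Using $\vv_l(t)=\vu_{l+1}(t)$, every internal inner product $\vv_l(t)^\top \vu_{l+1}(t)$ equals $1$, so the telescoping product
\begin{align*}
    \vbeta(t) \;=\; \prod_{l=1}^{L}\vu_l(t) s(t) \vv_l(t)^\top \;=\; s(t)^{L}\, \vu_1(t)\, \vv_L(t)^\top
\end{align*}
collapses. Since $\vu_1(t)$ is a unit vector in $\R^d$ and $\vv_L(t)\in\{\pm 1\}$ (because $d_L=1$), taking the Euclidean norm gives $\|\vbeta(t)\| = s(t)^L$, hence $s_l(t) = s(t) = \|\vbeta(t)\|^{1/L}$ for every $l\in[L]$, which is \eqref{eq:singularvalues_equivalence}.

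I expect the only subtlety to be the sign ambiguity in the SVD (both $\vu_l,\vv_l$ could be flipped while keeping $s_l\ge 0$), which must be handled so that the identification $\vv_l(t)=\vu_{l+1}(t)$ can be made without sign, and so the telescoping product above is genuinely equal to $s(t)^L \vu_1(t)\vv_L(t)^\top$ rather than up to a $\pm 1$ discrepancy; this is routine but worth making explicit. The rest is algebra driven entirely by the rank-one structure forced on each layer by the boundary layer $\vW_L$.
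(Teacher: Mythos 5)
Your proof is correct and follows essentially the same route as the paper: deduce rank-one structure of every layer from the boundary layer $\vW_L$ by iterating the balancedness identity downward, substitute the rank-one SVDs into \eqref{eq:balancedness_supp} to identify $\vv_l(t)$ with $\vu_{l+1}(t)$ and force all top singular values to coincide, then telescope the product to get $\|\vbeta(t)\| = s(t)^L$. The only difference is that you explicitly flag the SVD sign ambiguity (which should be resolved by flipping $\vu_{l+1}$ and $\vv_{l+1}$ jointly, not by making $s_{l+1}$ negative), a subtlety the paper leaves implicit; it is harmless here since \eqref{eq:singularvalues_equivalence} concerns only norms.
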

\begin{proof}[Proof for \cref{lm:balancedness_top_singular}]
This proof is a similar to the proof of Theorem 1 in \cite{arora2018optimization}.
Focusing on $j = L-1$ balancedness implies that:
\begin{align*}
    \vW_{L-1}(t)^\top \vW_{L-1}(t) = \vW_L(t) \vW_{L}(t)^\top.
\end{align*}
Hence, $\vW_{L-1}^\top(t) \vW_{L-1}(t)$ is (at most) rank-1 and so is $\vW_{L-1}(t)$. By iterating $j$ from $L-2$ to $1$, it follows  that $\vW_l(t)$ is rank-1 for $j \in [L]$.

Consider the SVD of the weights at time $t$. Since all weights are rank-1, they can be decomposed such that
\begin{align*}
    \vW_l(t) = \vect{u}_l(t) s_l(t) \vect{v}_l(t)^\top.
\end{align*}
Plugging this into \eqref{eq:balancedness_supp} it follows that
\begin{align*}
    \vect{v}_{j}(t) s_j^2(t) \vect{v}_j^\top(t) = \vect{u}_{j+1}(t) s_{j+1}^2(t) \vect{u}_{j+1}^\top(t) \quad \text{for} \quad j \in [L-1],
\end{align*}
Thus proving \eqref{eq:singularvectors_alignment} and showing that the top singular values of all the layers in time $t$ are equal to each other.\footnote{maybe add in footnote that because the two matrices have the same SVD, their spectra are equal.}\\
We now consider the norm of the end to end solution at time $t$, $\vbeta(t)$:
\begin{align*}
    \|\vbeta(t)\| &= \|\vW_1(t) \cdots \vW_L(t)\|\\
    &= \|\vect{u}_1(t) s_1(t) \vect{v}_1^\top s_2(t) \cdots s_L(t)\|\\
    &= \|\vect{u}_1(t) \prod_{i=1}^{L}{s_l(t)}\| = \prod_{i=1}^{L}{s_l(t)} \|\vect{u}_1(t)\|  = \prod_{i=1}^{L}{s_l(t)}.
\end{align*}
Since all of the top singular values at time $t$ equal each other, and $\norm{u_1} = 1$ by construction, the result follows.
\end{proof}

The following Lemma is also used in the analysis:
\begin{lemma}[Theorem 1 from \cite{arora2018optimization}] \label{lm:balanced_training}
Suppose a deep linear network is optimized using GF, starting from a 0-balanced initialization, i.e. initialization in which weights are 0-balanced. Then the weights stay balanced throughout optimization.
\end{lemma}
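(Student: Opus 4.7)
The plan is to prove the balancedness-preservation lemma by a direct calculation showing that, under gradient flow, the quantity $\vW_j^\top(t)\vW_j(t) - \vW_{j+1}(t)\vW_{j+1}^\top(t)$ has zero time derivative for every $j \in [L-1]$. Since the initial condition is exactly $0$-balancedness, i.e.\ this difference vanishes at $t=0$, it will then vanish for all $t$.

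The first step is to write the gradient of $L$ with respect to $\vW_j$ in a clean product form. Introducing the prefix and suffix products $A_j(t) \triangleq \vW_1(t)\cdots\vW_{j-1}(t)$ (with $A_1=\vI$) and $B_j(t) \triangleq \vW_{j+1}(t)\cdots\vW_L(t)$ (with $B_L=\vI$), the chain rule gives $\partial L/\partial \vW_j = A_j^\top\, G\, B_j^\top$, where $G=\partial L/\partial \vbeta$ and $\vbeta = A_j \vW_j B_j$. So the gradient flow dynamics read $\dot{\vW}_j(t) = -A_j^\top(t)\,G(t)\,B_j^\top(t)$. The key algebraic identities to keep on hand are the recursions $A_{j+1} = A_j \vW_j$ and $B_j = \vW_{j+1} B_{j+1}$, which I will use to collapse the various products that appear in the computation below.

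Next, I differentiate the two sides of the balancedness relation. Using $\tfrac{d}{dt}[\vW_j^\top\vW_j] = \dot{\vW}_j^\top \vW_j + \vW_j^\top \dot{\vW}_j$ and substituting the GF equation,
\begin{align*}
\tfrac{d}{dt}[\vW_j^\top \vW_j] \;=\; -B_j\,G^\top A_j \vW_j \;-\; \vW_j^\top A_j^\top G\, B_j^\top \;=\; -B_j\,G^\top A_{j+1} \;-\; A_{j+1}^\top G\, B_j^\top,
\end{align*}
where I used $A_j\vW_j = A_{j+1}$. Similarly, $\tfrac{d}{dt}[\vW_{j+1}\vW_{j+1}^\top] = \dot{\vW}_{j+1}\vW_{j+1}^\top + \vW_{j+1}\dot{\vW}_{j+1}^\top$, and substituting the GF equation for $\vW_{j+1}$ together with the identities $A_{j+1}^\top = \vW_j^\top A_j^\top$ and $\vW_{j+1}B_{j+1} = B_j$, this simplifies to the \emph{same} expression $-A_{j+1}^\top G B_j^\top - B_j G^\top A_{j+1}$. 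Hence $\tfrac{d}{dt}\bigl[\vW_j^\top\vW_j - \vW_{j+1}\vW_{j+1}^\top\bigr] = 0$ for every $j \in [L-1]$ and every $t$, so $0$-balancedness is preserved along the trajectory.

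I do not expect any serious obstacle: this is a conservation-law calculation, and the only thing that must be done carefully is the bookkeeping of the $A_j, B_j$ products to verify that the two time derivatives really do match term-for-term. The result that does the heavy lifting is the factored form of the gradient, which is a plain chain-rule statement and does not rely on the specific form of $L$ beyond smoothness in $\vbeta$; in particular the same argument applies uniformly throughout the fine-tuning phase.
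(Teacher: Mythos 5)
Your proof is correct: the computation of $\tfrac{d}{dt}[\vW_j^\top\vW_j]$ and $\tfrac{d}{dt}[\vW_{j+1}\vW_{j+1}^\top]$ checks out, both reduce to $-A_{j+1}^\top G B_j^\top - B_j G^\top A_{j+1}$, and the conservation argument is exactly the standard one. The paper itself does not prove this lemma but imports it by citation (Theorem 1 of Arora, Cohen and Hazan, 2018), and your derivation is essentially the proof given in that reference.
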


We are now ready to prove the theorem.
\begin{proof}[Proof of Theorem 5.2]
First consider the pretraining of the model under \cref{asm:pretrain_supp}. Assume that before the pretraining, the model weights are perfectly balanced. From \cref{lm:balanced_training} it follows that after pretraining on the source task, i.e. at $t=0$, the weights of the model are still balanced. From \cref{lm:balancedness_top_singular}, this means they are also rank-1. From \cref{asm:pretrain_supp}:
\begin{align*}
    \vX_S \beta(0) = \vy_S,
\end{align*}
and since $n_S > d$ this implies:
\begin{align} \label{eq:deep_linear_pt_sol}
    \vbeta(0) = \thetas.
\end{align}
\cref{lm:balancedness_top_singular} gives us that:
\begin{align*}
    \vbeta(0) = \vW_1(0) \cdots \vW_L(0) = \vu_1(0) \prod_{i=1}^{L}{s_l(0)} = \vu_1(0) s_1^L(0),
\end{align*}
Hence:
\begin{align*}
    \vect{u}_1(0) =\frac{\thetas}{\|\thetas\|},
\end{align*}
and
\begin{align} \label{eq:pt_singularvalues} 
    s_1(0) = \|\thetas\|^{\nicefrac{1}{L}},
\end{align}
Hence:
\begin{align} \label{eq:W_1_pretrain}
    \vW_1(0) = \vect{u}_1(0) s_{1}(0) \vect{v}_1^\top(0) = \frac{\thetas}{\|\thetas\|} \|\thetas\|^{\nicefrac{1}{L}} \vect{v}_1^\top(0)= \frac{\thetas}{\|\thetas\|^{\nicefrac{(L-1)}{L}}} \vect{v}_1^\top(0).
\end{align}

We next analyze the fine-tuning dynamics. \cref{lm:balanced_training} ensures that if the pretrained model has 0-balanced weights, then the weights will remain 0-balanced during finetune. This implies that \cref{lm:balancedness_top_singular} holds for all $t \ge 0$.

Observe the gradient flow dynamics of the layers during fine-tuning:
\begin{align*}
    \dot{\vW_l}(t) &= - \vW_{l-1}^T(t) \cdots \vW_1^T(t) \vX^T \vect{r}(t) \vW_L^T(t) \cdots \vW_{l+1}^T(t) \text { for } l \in [L],
\end{align*}
where $\vect{r}(t) \in \R^n$ is the residual vector satisfying $[\vect{r}]_i = \vx_i^\top\vbeta(t) - \vy_{i}$.

From \cref{lm:balancedness_top_singular}:
\begin{align*}
    \dot{\vW_l}(t) =& - \vect{v}_{l-1}(t)s_{l-1}(t)\vect{u}_{l-1}^T(t)\vect{v}_{l-2}(t)s_{l-2}(t)\vect{u}_{l-2}^T(t) \cdots\\
    &\vect{v}_{1}(t)s_{1}(t)\vect{u}_{1}^T(t) \vX^T \vect{r}(t) \vect{v}_{L}(t)s_{L-1}(t)\vect{u}_{L}^T(t) \cdots\\ &\vect{v}_{l+1}(t)s_{l+1}(t)\vect{u}_{l+1}^T(t) \text { for } l \in [L].
\end{align*}
Using \eqref{eq:singularvectors_alignment} and \eqref{eq:singularvalues_equivalence} it follows that $\forall t\ge 0$:
\begin{align*}
    \dot{\vW_l}(t) &= - \vect{v}_{l-1}(t)\left(\prod_{i=1}^{l-1}{s_i}(t)\right)\vect{u}_{1}(t)^T \vX^T \vect{r}(t) \left(\prod_{i=l+1}^{L}{s_i}(t)\right)\vect{u}_{l+1}^T(t) \text { for } l \in [L]\\
    &= - \vect{v}_{l-1}(t) s^{l-1}(t)\vect{u}_{1}^T(t) \vX^T \vect{r}(t) s^{L-l}(t)\vect{u}_{l+1}^T(t) \text { for } l \in [L].
\end{align*}
For $\vW_1$,
\begin{align} 
    \dot{\vW_1}(t) &= -\vX^T\vect{r}(t) s^{L-1}(t)\vect{u}_{2}^T(t)
    = -\vX^T\vect{r}(t) s^{L-1}(t) \vect{v}_{1}^T(t), \label{eq:W_1_grad}
\end{align}
Where the last equality is due to \eqref{eq:singularvectors_alignment}.
Hence $\dot{\vW_1}$ is always a rank-1 matrix whose columns are in the row space of $\vX$. This implies that the decomposition $\vW_1$ into two orthogonal components $\vW_1^\perp$ and $\vW_1^\parallel$ so that $\vW_1^\parallel = \Ppar \vW_1$ and $\vW_1^\perp = \Pper \vW_1$ yields that $\forall t \ge 0$ it follows that
\begin{align*}
    &\dot{\vW}_1^\perp(t) = \mat{0},\\
    &\dot{\vW}_1^\parallel(t) = \dot{\vW}_1(t) = \vX^T\vect{r}(t) s^{L-1}(t) \vect{v}_{1}^T(t).
\end{align*}
Hence, $\vW_1^\perp(t)$ does not change for all $t \geq 0$. Using \eqref{eq:W_1_pretrain} it follows:
\begin{align}
    \vW_1^\perp(t) &= \vW_1^\perp(0) \label{eq:W_1_perp}\\ &=\Pper\left(\frac{\thetas}{\|\thetas\|^{\frac{L-1}{L}}} \vect{v}_1^\top(0)\right)\nonumber\\
    &=\frac{\Pper\thetas}{\|\thetas\|^{\frac{L-1}{L}}} \vect{v}_1^\top(0).\label{eq:W_1_perp_0}
\end{align}

The next lemma states that $\vect{v}_1(t)$ does not change during optimization if $\norm{\Pper\vW_1(0)}_F > 0$.
\begin{lemma} \label{lm:v_1_finegrained}
Suppose we run GF over a deep linear network starting from 0-balanced initialization. Also assume that at initialization $\vW_1(0)$ is rank-1 and:
\begin{align*}
    \norm{\Pper\vW_1(0)}_F > 0,
\end{align*}
Then for all $t >0$:
\begin{align*}
    \vect{v}_1(t) = \vect{v}_1(0).
\end{align*}
\end{lemma}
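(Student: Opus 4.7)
The plan is to leverage two facts already established in the excerpt: (i) by \cref{lm:balancedness_top_singular} combined with preservation of $0$-balancedness (\cref{lm:balanced_training}), $\vW_1(t)$ is rank-$1$ for all $t \ge 0$; and (ii) by the gradient computation in \eqref{eq:W_1_grad}, $\dot{\vW}_1(t)$ lies in the column space of $\vX^\top$, so the perpendicular part $\Pper \vW_1(t)$ is conserved throughout gradient flow, i.e.\ $\Pper\vW_1(t) = \Pper \vW_1(0)$ for all $t \ge 0$ (cf.\ \eqref{eq:W_1_perp}).

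Writing the rank-$1$ SVD $\vW_1(t) = \vu_1(t) s_1(t) \vv_1^\top(t)$ at an arbitrary time $t$ and at time $0$, the conservation law becomes the rank-$1$ matrix identity
\begin{align*}
    [\Pper \vu_1(t)]\, s_1(t)\, \vv_1^\top(t) \;=\; [\Pper \vu_1(0)]\, s_1(0)\, \vv_1^\top(0).
\end{align*}
First I would argue that both sides are nonzero: the hypothesis $\|\Pper \vW_1(0)\|_F > 0$ makes the RHS nonzero, and since the Frobenius norm of this expression is conserved, the LHS is nonzero as well. In particular $s_1(t) > 0$ (the rank-$1$ structure is never degenerate) and $\Pper \vu_1(t) \neq 0$. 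Equating the two non-zero rank-$1$ matrices and comparing their right factors (e.g.\ by taking the SVD of either side, whose nonzero singular value is simple), the row-vectors must be parallel, and since both $\vv_1(t)$ and $\vv_1(0)$ are unit, this forces $\vv_1(t) = \pm\, \vv_1(0)$.

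The last step is to remove the sign ambiguity. Because $t \mapsto \vW_1(t)$ is continuous under gradient flow and the nonzero singular value $s_1(t)$ is bounded away from $0$ on every compact time interval (by the conserved quantity $\|\Pper \vW_1(0)\|_F \le \|\vW_1(t)\|_F = s_1(t)$), one can choose a continuous branch of the SVD so that $\vu_1(t), \vv_1(t)$ depend continuously on $t$, agreeing with the prescribed decomposition at $t = 0$. Since $\vv_1(t)$ then takes values in the discrete set $\{+\vv_1(0), -\vv_1(0)\}$ and equals $+\vv_1(0)$ at $t=0$, continuity pins down $\vv_1(t) = \vv_1(0)$ for all $t \ge 0$.

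The main obstacle is the step involving the continuous choice of the SVD: SVD factors are only defined up to a joint sign flip in $(\vu_1, \vv_1)$, so one has to combine the algebraic conclusion $\vv_1(t) \in \{\pm \vv_1(0)\}$ with continuity and the non-degeneracy $s_1(t) > 0$ to rule out a jump. Apart from this, the argument is a direct consequence of the already-derived gradient dynamics and rank-$1$ structure enforced by $0$-balancedness.
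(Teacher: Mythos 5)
Your proof is correct, and it resolves the sign ambiguity ($\vect{v}_1(t) = \pm\vect{v}_1(0)$) by a different route than the paper. Both proofs share the same core: $\Pper\vW_1(t) = \Pper\vW_1(0)$ is conserved (from \eqref{eq:W_1_grad}), and $\vW_1(t)$ is rank-$1$ with its rank-$1$ SVD unique up to a joint sign flip, which forces $\vect{v}_1(t) \in \{\pm\vect{v}_1(0)\}$. The paper rules out the minus sign by first arguing that $\Pper\vect{u}_1(t) = \Pper\vect{u}_1(0)$ (via the computation $\dot{\vect{u}}_1 = \dot{\vW}_1 \cdot \partial\vW_1/\partial\vect{u}_1 = -\vX^\top\vect{r}(t)s^L(t)$, hence $\Pper\dot{\vect{u}}_1 = 0$), and then noting that a sign flip of $\vect{v}_1$ would force a sign flip of $\Pper\vect{u}_1$. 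You instead observe that the conserved norm $\|\Pper\vW_1(0)\|_F>0$ bounds $s_1(t)=\|\vW_1(t)\|_F$ away from zero, so a continuous SVD branch exists along the gradient-flow trajectory; a continuous map into the two-point set $\{\pm\vect{v}_1(0)\}$ is constant, pinning the sign.

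Your route is in fact the safer one, because the paper's auxiliary invariant is not correct as stated. The formula $\dot{\vect{u}}_1 = \dot{\vW}_1 \cdot \partial\vW_1/\partial\vect{u}_1$ is not a valid chain rule and implicitly freezes $s$ and $\vect{v}_1$; differentiating $\vW_1 = \vect{u}_1 s\,\vect{v}_1^\top$ honestly (using $\vect{u}_1^\top\dot{\vect{u}}_1 = 0$ and $\vect{v}_1^\top\dot{\vect{v}}_1=0$) gives $\dot{\vect{u}}_1 = s^{-1}\bigl(\vI - \vect{u}_1\vect{u}_1^\top\bigr)\dot{\vW}_1\vect{v}_1 = -s^{L-2}\bigl(\vI - \vect{u}_1\vect{u}_1^\top\bigr)\vX^\top\vect{r}(t)$, so $\Pper\dot{\vect{u}}_1(t) = s^{L-2}\bigl(\vect{u}_1^\top\vX^\top\vect{r}(t)\bigr)\Pper\vect{u}_1(t)$, which is generically nonzero. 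One can also see it directly: taking Frobenius norms in $\Pper\vect{u}_1(t)\,s_1(t)\,\vect{v}_1^\top(t) = \Pper\vect{u}_1(0)\,s_1(0)\,\vect{v}_1^\top(0)$ forces $\|\Pper\vect{u}_1(t)\| = \bigl(s_1(0)/s_1(t)\bigr)\|\Pper\vect{u}_1(0)\|$, which must vary with $t$ because $s_1(t)=\|\vbeta(t)\|^{1/L}$ does. Your continuity argument reaches the same conclusion while bypassing this step entirely, so it would make a good replacement for it.
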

\begin{proof}
Assume towards contradiction that there exists $t > 0$ s.t. $\vect{v}_1(t) \neq \vect{v}_1(0)$.\\
 From $\vW_1(t)$ being rank-1 (\cref{lm:balancedness_top_singular}), it follows that
\begin{align*}
    \Pper \vW_1(t) = \Pper \vect{u}_1(t) s(t) \vect{v}_1^\top(t) = \left(\Pper \vect{u}_1(t) s(t)\right) \vect{v}_1^\top(t),
\end{align*}
And from the decomposition of $\vW_1(t)$ to $\vW_1^{\parallel}(t)$ and $\vW_1^{\perp}(t)$, \eqref{eq:W_1_perp} and $\vW_1(0)$ being rank-1 it follows that:
\begin{align*}
    \Pper\vW_1(t) &= \vW_1^{\perp}(t) = \vW_1^{\perp}(0) =\Pper\vect{u}_1(0) s_1(0) \vect{v}_1^\top(0),
\end{align*}
Hence:
\begin{align*}
    \left(\Pper \vect{u}_1(t) s(t)\right) \vect{v}_1^\top(t) = \left(\Pper\vect{u}_1(0) s_1(0)\right) \vect{v}_1^\top(0).
\end{align*}
From \eqref{eq:W_1_grad} we see that the orthogonal part of $\vu_1(t)$ does not change during fine-tune:
\begin{align*}
    \dot{\vu_1}(t) = \dot{\vW_1}(t) \cdot \frac{\partial \vW_1(t)}{\partial \vu_1(t)} = -\vX^T\vect{r}(t) s^{L-1}(t) \vect{v}_{1}^T(t) \vect{v}_{1}(t) s(t) = -\vX^T \vect{r}(t) s^L(t)
\end{align*}
hence:
\begin{align}
    \Pper \dot{\vu_1}(t) = 0 \Rightarrow \Pper \vect{u}_1(t) = \Pper\vect{u}_1(0) \label{eq:perp_u1}. 
\end{align}
Since $\vect{v}_1(t) \neq \vect{v}_1(0)$, and because non-degenerate singular values always have unique left and right singular vectors (up to a sign), $\vW_1^{\perp}(t) = \vW_1^{\perp}(0)$ only if:
\begin{align*}
    s(t) = s_1(0) = 0,
\end{align*}
by contradiction to the assumption that $s_1(0) = \norm{\Pper\vW_1(0)}_F > 0$, 
or if $\vect{v}_1(t) = - \vect{v}_1(0)$ and $\Pper \vect{u}_1(t) = -\Pper\vect{u}_1(0)$, which contradicts \eqref{eq:perp_u1}.
\end{proof}

In the case where $\norm{\Pper\vW_1(0)}_F = 0$, since $\Pper\vW_1(t) = \Pper\vW_1(0)$, it follows that $\vW_1(t) = \Ppar\vW_1(t)$, which is similar to the case in \cite{yun2020unifying}, for which the solution is known to be $\Ppar \thetat$. Also, from \eqref{eq:W_1_perp_0}, this implies $\Pper \thetas = 0$, and the expression for the end-to-end solution in Eq.5 in the main text holds.

The analysis continues for $\norm{\Pper\vW_1(0)}_F > 0$. By using \cref{lm:balancedness_top_singular} and \cref{lm:v_1_finegrained} it follows that:
\begin{align}
    \vW_1^{\perp}(t) \vW_2(t) \cdots \vW_L(t) &\stackrel{(1)}{=} \vW_1^{\perp}(0) \vW_2(t) \cdots \vW_L(t)  \nonumber\\
    &\stackrel{(2)}{=} \frac{\Pper\thetas}{\|\thetas\|^{\frac{L-1}{L}}} \vect{v}_1^\top(0) \vW_2(t) \cdots \vW_L(t)  \nonumber\\
    &\stackrel{(3)}{=} \frac{\Pper\thetas}{\|\thetas\|^{\frac{L-1}{L}}} \vect{v}_1^\top(t) \vW_2(t) \cdots \vW_L(t)  \nonumber\\
    &=\frac{\Pper\thetas}{\|\thetas\|^{\frac{L-1}{L}}} \vect{v}_1^\top(t) \vect{u}_2(t) \|\vbeta(t)\|^{\frac{L-1}{L}}  \nonumber\\
    &\stackrel{(4)}{=}\frac{\Pper\thetas}{\|\thetas\|^{\frac{L-1}{L}}} \vect{v}_1^\top(t) \vect{v}_1(t) \|\vbeta(t)\|^{\frac{L-1}{L}}  \nonumber\\
    &= \left(\frac{\|\vbeta(t)\|}{\|\thetas\|}\right)^{\frac{L-1}{L}}\Pper\thetas \label{eq:W_1_orth_sol}.
\end{align}
With (1) due to \eqref{eq:W_1_perp}, (2) due to \eqref{eq:W_1_perp_0}, (3) due to \cref{lm:v_1_finegrained} and (4) due to \cref{lm:balancedness_top_singular}.
From the requirement of \cref{asm:train_loss_supp} that $\lim_{t \to \infty} \vX \vbeta(t) = \vy$, it follows that:
\begin{align}
    & \lim_{t \to \infty}\vX \vW_1(t) \cdots \vW_L(t) = \vy \nonumber \\
    \Rightarrow & \lim_{t \to \infty}\vX \vW_1^\parallel(t) \cdot \vW_2(t) \cdots \vW_L(t) = \vy \nonumber \\
    \Rightarrow & \lim_{t \to \infty}\vW_1^\parallel(t) \cdot \vW_2(t) \cdots \vW_L(t) = \vX^T \left(\vX\vX^T\right)^{-1}\vy, \label{eq:W_1_proj_sol}
\end{align}
Which is the only solution for this equation in the span of $\vX$, and  due to \cref{asm:non-singular_supp}.\\
Eq.5 in the main text follows from \eqref{eq:W_1_orth_sol} and \eqref{eq:W_1_proj_sol}:

\begin{align}
    \lim_{t \to \infty}\vbeta(t) &= \lim_{t \to \infty} \vW_1(t) \cdot \vW_2(t) \cdots \vW_L(t)  \nonumber\\
    &=\lim_{t \to \infty} \left(\vW_1^{\parallel}(t) + \vW_1^{\perp}(t)\right) \cdot \vW_2(t) \cdots \vW_L(t)  \nonumber\\
    &=\lim_{t \to \infty} \vW_1^{\perp}(t)\cdot \vW_2(t) \cdots \vW_L(t) + \vW_1^{\parallel}(t)\cdot \vW_2(t) \cdots \vW_L(t)  \nonumber\\
    &= \left(\frac{\|\lim_{t \to \infty}\vbeta(t)\|}{\|\thetas\|}\right)^{\frac{L-1}{L}}\Pper\thetas + \Ppar\thetat \label{eq:deep_linear_solution}.
\end{align}

To prove Eq.6 from the main text, consider the norm of $\lim_{t \to \infty}\vbeta(t)$.

\begin{align*}
    &\|\lim_{t \to \infty}\vbeta(t)\| =  \sqrt{\left(\frac{\|\lim_{t \to \infty}\vbeta(t)\|}{\|\thetas\|}\right)^{\frac{2(L-1)}{L}}\|\Pper\thetas\|^2 + \|\Ppar\thetat\|^2} \\
    \Rightarrow &\|\lim_{t \to \infty}\vbeta(t)\|^2 = \left(\frac{\|\lim_{t \to \infty}\vbeta(t)\|}{\|\thetas\|}\right)^{\frac{2(L-1)}{L}}\|\Pper\thetas\|^2 + \|\Ppar\thetat\|^2 \\
    \Rightarrow & \|\lim_{t \to \infty}\vbeta(t)\|^2 - \left(\frac{\|\lim_{t \to \infty}\vbeta(t)\|}{\|\thetas\|}\right)^{\frac{2(L-1)}{L}}\|\Pper\thetas\|^2 - \|\Ppar\thetat\|^2 = 0.
\end{align*}
At the limit $L \rightarrow \infty$ we get:
\begin{align*}
    &\lim_{l \to \infty}\left(\|\lim_{t \to \infty}\vbeta(t)\|^2 - \left(\frac{\|\lim_{t \to \infty}\vbeta(t)\|}{\|\thetas\|}\right)^{\frac{2(L-1)}{L}}\|\Pper\thetas\|^2 - \|\Ppar\thetat\|^2\right)\\
    = &\|\lim_{l \to \infty}\lim_{t \to \infty}\vbeta(t)\|^2 - \left(\frac{\|\lim_{l \to \infty}\lim_{t \to \infty}\vbeta(t)\|}{\|\thetas\|}\right)^2\|\Pper\thetas\|^2 - \|\Ppar\thetat\|^2 = 0 \\
    \Rightarrow&\frac{\|\lim_{l \to \infty}\lim_{t \to \infty}\vbeta(t)\|^2}{\norm{\thetas}^2}\left(\norm{\thetas}^2 - \|\Pper\thetas\|^2\right) = \|\Ppar\thetat\|^2,
\end{align*}
Thus:
\begin{align*}
    \frac{\|\lim_{l \to \infty}\lim_{t \to \infty}\vbeta(t)\|}{\|\thetas\|} &= \frac{\|\Ppar\thetat\|}{\sqrt{\|\thetas\|^2 - \|\Pper\thetas\|^2}}
    = \frac{\|\Ppar\thetat\|}{\|\Ppar\thetas\|}.
\end{align*}
And it follows  that at this limit:
\begin{align}
    \lim_{L \to \infty}\lim_{t \to \infty}\vbeta(t) &= \frac{\|\Ppar\thetat\|}{\|\Ppar\thetas\|}\Pper\thetas + \Ppar\thetat. \label{eq:deep_linear_infty_solution}
\end{align}
\end{proof}

From the same lines of proof as in \cref{subsec:proof_41} it follows that
\begin{corollary}\label{cor:deep_infty_risk}
    For the conditions in Theorem 5.2 in the main text, 
\begin{align*}
    R(\lim_{L \to \infty}\lim_{t \to \infty}\vbeta(t)) = \norm{\vSigma^{0.5}\left(\Pper(\thetat - \frac{\|\Ppar\thetat\|}{\|\Ppar\thetas\|}\thetas)\right)}^2.
\end{align*}
\end{corollary}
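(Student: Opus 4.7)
The plan is to follow exactly the same route as in the population risk part of Theorem 4.1 (proved in Appendix A.1), since the end-to-end predictor $\vbeta = \lim_{L\to\infty}\lim_{t\to\infty}\vbeta(t)$ is, by Theorem 5.2, a fixed vector in $\R^d$ that defines the linear function $\vx\mapsto \vx^\top\vbeta$. First I would write out the definition of $R$ and observe
\begin{align*}
R(\vbeta) \;=\; \E_{\vx\sim\dist}\!\left[(\vx^\top\thetat - \vx^\top\vbeta)^2\right]
\;=\; (\thetat-\vbeta)^\top \vSigma\, (\thetat-\vbeta)
\;=\; \bigl\|\vSigma^{1/2}(\thetat-\vbeta)\bigr\|^2,
\end{align*}
which is the same identity used at the end of the proof of Theorem 4.1.

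Next I would substitute the closed-form expression (6) from Theorem 5.2, namely $\vbeta = \tfrac{\|\Ppar\thetat\|}{\|\Ppar\thetas\|}\Pper\thetas + \Ppar\thetat$, into $\thetat-\vbeta$. Using the orthogonal decomposition $\thetat = \Ppar\thetat + \Pper\thetat$, the parallel component cancels and one is left with
\begin{align*}
\thetat - \vbeta \;=\; \Pper\thetat \;-\; \frac{\|\Ppar\thetat\|}{\|\Ppar\thetas\|}\,\Pper\thetas
\;=\; \Pper\!\left(\thetat - \frac{\|\Ppar\thetat\|}{\|\Ppar\thetas\|}\,\thetas\right),
\end{align*}
where in the last step I pull the projector $\Pper$ out (it is linear). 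Plugging this back into the quadratic form yields the claimed expression for $R(\vbeta)$.

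There is no real obstacle here: the statement is a direct corollary once (6) is in hand, and the only computation is the cancellation of the $\Ppar$ part of $\thetat$ against the $\Ppar\thetat$ summand in $\vbeta$. The only thing worth flagging is a well-definedness caveat, namely that the ratio $\|\Ppar\thetat\|/\|\Ppar\thetas\|$ requires $\Ppar\thetas \neq 0$ (which is implicit in the statement of Theorem 5.2, and is a non-degeneracy condition on the pretraining direction relative to the target sample span); under this mild condition the calculation above goes through verbatim and gives the desired identity.
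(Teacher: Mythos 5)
Your proposal is correct and follows essentially the same route the paper indicates: the paper's Corollary B.1 simply states it ``follows from the same lines of proof'' as the population-risk identity in Theorem 4.1, and your computation — writing $R(\vbeta)=\|\vSigma^{1/2}(\thetat-\vbeta)\|^2$, substituting the closed form \eqref{eq:deep_infty_sol}, and cancelling $\Ppar\thetat$ via the orthogonal decomposition — is exactly that argument spelled out. The well-definedness caveat you flag ($\Ppar\thetas\neq 0$) is legitimate and is handled in the paper's proof of Theorem 5.2, where the degenerate case $\norm{\Pper\vW_1(0)}_F=0$ (equivalently $\Pper\thetas=0$) is treated separately.
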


\subsection{Proofs of Theorems 5.3 and 5.4: How does depth affect the population risk?}\label{sec:depth_effect}

\cref{cor:deep_infty_risk} above contains dependence on $\Ppar$ which is a random variable. We next provide high-probability risk bounds that can be derived from this result. The bounds are obtained under slightly different assumptions, either on the target task or on the target distribution, but both highlight the fact that fine-tuning in the $L\to\infty$ case will depend on $\hat{\vtheta}_S-\hat{\vtheta}_T$ rather than the un-normalized $\thetas - \thetat$.



Recall the definition of the fine-tuning solution as $L\to\infty$:
\begin{align*}
    \vbeta \triangleq \lim_{L \to \infty}\lim_{t \to \infty} \vbeta(t).
\end{align*}

In the first setting we will assume that $\thetat$ is a scaled version of $\thetas$, without any assumptions on $\dist$. Theorem 5.3 from the main text demonstrates a gap between perfect fine-tuning for the $L\to\infty$ case and non-zero fine-tuning error for $L=1$.
\begin{proof}[Proof of Theorem 5.3 (Main Text)]

First notice:
\begin{align}
    \frac{\|\Ppar\thetat\|}{\|\Ppar\thetas\|}
    &= \frac{\|\Ppar \alpha\thetas\|}{\|\Ppar\thetas\|}= \alpha\frac{\|\Ppar\thetas\|}{\|\Ppar\thetas\|} = \alpha,
\end{align}
which from Eq.6 in the main text gives the solution
\begin{align*}
    \vbeta &= \alpha\Pper\thetas + \Ppar\thetat = \Pper\thetat + \Ppar \thetat = \thetat.
\end{align*}
On the other hand, for the $L=1$ solution $\valpha$ it follows from Eq.2 in the main text that
\begin{align*}
    \norm{\vSigma^{0.5}\Pper\left(\thetat - \thetas\right)}^2 &= \norm{\vSigma^{0.5}\Pper\left(\thetat - \frac{\thetat}{\alpha}\right)}^2 \\
    & = \left(\frac{\alpha - 1}{\alpha}\right)^2\norm{\vSigma^{0.5}\Pper\thetat}^2,
\end{align*}
which is greater than zero for all $\alpha \neq 1$.
\end{proof}

In the second setting we assume that $\dist = \calN(0,1)^{d}$, without any assumptions on $\thetat$. Here it shows that while the population risk of the $L=1$ solution depends on $\norm{\thetat - \thetas}$, the population risk of the infinitely-deep linear solution depends on the normalized $\norm{\thetath - \thetash}$ and $\norm{\thetat}$, i.e. on the alignment of $\thetat$ and $\thetas$ and the norm of $\thetat$.
\begin{customthm}{5.4}[Main Text]
    Assume that the conditions of Theorem 5.2 hold, and let $\vX \sim \calN(0,1)^{d}$. Suppose $n\le d$, then there exists a constant $c>0$ such that for an $\epsilon > 0$ it holds that with probability at least $1 - 4\exp(-c\epsilon^2 n) -4\exp\left(-c\epsilon^2 (d-n)\right)$ the population risk for the $L \to \infty$ end-to-end $\vbeta$ is bounded:
    \begin{align}
    R(\vbeta) 
    &\le \frac{d-n}{d}(1+\epsilon)^2\norm{\thetat}^2\norm{\thetath - \thetash}^2 + \frac{d-n}{d}\zeta(\norm{\thetat})^2,
\end{align}
    for $\zeta(\norm{\thetat}) \approx \epsilon\norm{\thetat}$.
    For the $L=1$ linear regression solution $\valpha$ this risk is bounded by
    \begin{align*}
        R(\valpha) \le \frac{d-n}{d}(1+\epsilon)^2\norm{\thetat - \thetas}^2.
    \end{align*}
\end{customthm}
\begin{proof}[Proof of Theorem 5.5 (Main Text)]

We start by analyzing $R(\vbeta)$:
\begin{align*}
    R(\vbeta) &=
        \norm{\vSigma^{0.5}\Pper\left(\thetat - \frac{\norm{\Ppar\thetat}}{\norm{\Ppar\thetas}}\thetas\right)}^2 \\ &\stackrel{(1)}{=} \norm{\mat{I}^{0.5}\Pper\left(\thetat - \frac{\norm{\Ppar\thetat}}{\norm{\Ppar\thetas}}\thetas\right)}^2\\
    &= \norm{\Pper\left(\thetat -
    \frac{\norm{\Pper\thetat}}{\norm{\Ppar\thetas}}\thetas\right)}^2,
\end{align*}
where (1) is due to $\vSigma = \mat{I}$ from the definition of the distribution of $\vX$.
We then bound the RHS with:
\begin{align*}
    &\norm{\Pper\left(\thetat - \frac{\norm{\Ppar\thetat}}{\norm{\Ppar\thetas}}\thetas\right)}^2\\
    &\le \norm{\Pper\left(\thetat - \frac{\norm{\Ppar\thetat}}{\norm{\Ppar\thetas}}\thetas\right) - \Pper\left(\norm{\thetat}(\thetath - \thetash)\right) +  \Pper\left(\norm{\thetat}(\thetath - \thetash)\right)}^2\\
    &\le  \norm{\Pper\left(\thetat - \frac{\norm{\Ppar\thetat}}{\norm{\Ppar\thetas}}\thetas\right) - \Pper\left(\norm{\thetat}(\thetath - \thetash)\right)}^2 + \norm{\Pper\left(\norm{\thetat}(\thetath - \thetash)\right)}^2.
\end{align*}

We see that we can bound the expression on the left:
\begin{align*}
    &\norm{\Pper\left(\thetat - \frac{\norm{\Ppar\thetat}}{\norm{\Ppar\thetas}}\thetas\right) - \Pper\left(\norm{\thetat}(\thetath - \thetash)\right)}^2 \\
    &= \norm{\Pper\left(\thetat - \frac{\norm{\Ppar\thetat}}{\norm{\Ppar\thetas}}\thetas - \thetat + \norm{\thetat}\thetash\right)}^2\\
    &= \norm{\Pper\left(\frac{\norm{\thetat}}{\norm{\thetas}}\thetas -  \frac{\norm{\Ppar\thetat}}{\norm{\Ppar\thetas}}\thetas\right)}^2 \\
    &\le \norm{\Pper\thetas\left(\frac{\norm{\thetat}}{\norm{\thetas}} - \frac{\norm{\Ppar\thetat}}{\norm{\Ppar\thetas}}\right)}^2 \\
    &\le \norm{\Pper\thetas}^2\norm{\frac{\norm{\thetat}}{\norm{\thetas}} - \frac{\norm{\Ppar\thetat}}{\norm{\Ppar\thetas}}}^2 
\end{align*}

Let $\Ppar$ be the projection matrix onto the row space of $\vX$, then from \cite{393720}, $\Ppar$ is a projection onto a random $n$-dimensional subspace uniformly distributed in the Grassmannian $\mat{G}_{d, n}$, and $\Pper$ is a projection onto a random $d-n$-dimensional subspace uniformly distributed in the Grassmannian $\mat{G}_{d, d-n}$. 

According to Lemma 5.3.2 in \cite{vershynin2018high}, with probability at least $1 - 4\exp(-c\epsilon^2 n)$
\begin{align*}
    \frac{1-\epsilon}{1+\epsilon}\frac{\norm{\thetat}}{\norm{\thetas}} \le
    \frac{\|\Ppar\thetat\|}{\|\Ppar\thetas\|} \le \frac{1+\epsilon}{1-\epsilon}\frac{\norm{\thetat}}{\norm{\thetas}},
\end{align*}
which bounds:
\begin{align*}
    \norm{\frac{\norm{\thetat}}{\norm{\thetas}} - \frac{\norm{\Ppar\thetat}}{\norm{\Ppar\thetas}}}^2
    &\le \norm{\frac{\norm{\thetat}}{\norm{\thetas}} - \frac{1+\epsilon}{1-\epsilon}\frac{\norm{\thetat}}{\norm{\thetas}}}^2\\
    &=
    \left(\frac{\norm{\thetat}}{\norm{\thetas}}\right)^2\frac{4\epsilon^2}{(1-\epsilon)^2}.
\end{align*}
Again, by applying Lemma 5.3.2 from \cite{vershynin2018high}, with probability at least $1 - 4\exp\left(-c\epsilon^2(d-n)\right) - 2\exp\left(-c\epsilon^2(d-n)\right):$
\begin{align*}
    &\norm{\Pper\thetas}^2 \le (1+\epsilon)^2\frac{d-n}{d}\norm{\thetas}^2,\\
    &\norm{\Pper\norm{\thetat}\left(\thetath - \thetash\right)}^2 \le (1+\epsilon)^2\frac{d-n}{d}\norm{\norm{\thetat}\left(\thetath - \thetash\right)}^2.
\end{align*}
Thus the following bound is obtained:
\begin{align*}
    R(\vbeta) &\le \norm{\Pper\left(\thetat - \frac{\norm{\Ppar\thetat}}{\norm{\Ppar\thetas}}\thetas\right) - \Pper\left(\norm{\thetat}(\thetath - \thetash)\right)}^2 + \norm{\Pper\left(\norm{\thetat}(\thetath - \thetash)\right)}^2 \\
    &\le (1+\epsilon)^2\frac{d-n}{d}\norm{\norm{\thetat}(\thetath -\thetash)}^2 + \frac{4\epsilon^2(1+\epsilon)^2}{(1-\epsilon)^2}\frac{d-n}{d}\norm{\thetas}^2\frac{\norm{\thetat}^2}{\norm{\thetas}^2} \\
    &= (1+\epsilon)^2\frac{d-n}{d}\norm{\norm{\thetat}(\thetath -\thetash)}^2 + \frac{4\epsilon^2(1+\epsilon)^2}{(1-\epsilon)^2}\frac{d-n}{d}\norm{\thetat}^2.
\end{align*}
Define $\zeta(\norm{\thetat}) = \frac{2\epsilon(1+\epsilon)}{(1-\epsilon)}\norm{\thetat}$, which concludes the proof for the infinite depth case.

Now for the upper bound of the population risk of the $L=1$ solution $\valpha$. Look at Eq.2, and from $\Pper$ being a random projection, it follows that with probability at least $1 - 2\exp\left(-c\epsilon^2(d-n)\right)$:

\begin{align*}
    R(\valpha) &\le \norm{\vSigma^{0.5}\Pper\left(\thetat - \thetas\right)}^2\\
    &= \norm{\vI\Pper\left(\thetat - \thetas\right)}^2\\
    &\le (1+\epsilon)^2\frac{d-n}{d}\norm{\thetat-\thetas}^2.
\end{align*}
\end{proof}


\subsection{Proof of Theorem 5.5: The effect of fixing layers during fine-tuning} \label{sec:deep_freeze}

\begin{proof} 
Since we assume that the weights before pretraining are 0-balanced, it follows from \cref{lm:balancedness_top_singular} and \cref{lm:balanced_training} that all layers $\vW_1(t), \ldots \vW_k(t)$ are rank-1. From \cref{asm:pretrain_supp} it follows that at the end of pretraining $\vbeta(0) = \thetas$, and from \eqref{eq:W_1_pretrain} it follows that $u_1(0) = \thetash$.

Consider the setting where the first $k$ layers are fixed. It follows that
\begin{align*}
    \vW_i(t) = \vW_i(0) \quad \forall t \ge 0 \text{,} \quad 0 \le i \le k.
\end{align*}
Then from \cref{lm:balancedness_top_singular} it follows  that for $t \ge 0$ and for any $\vx \in \R^d$:
\begin{align*}
    \vx^\top \vW_1(t) \cdots \vW_k(t) &= \vx^\top \vW_1(0) \cdots \vW_k(0) 
    = \vx^\top \vu_1(0) \prod_{i=1}^{k}{s_i} \vv_k^\top(0)\\ &= \vx^\top \norm{\thetas}^{\nicefrac{k}{L}} \vu_1(0) \norm{\thetas}^{\nicefrac{k}{L}} \vv_k^\top(0) \\
    &= \vx^\top \thetas \norm{\thetas}^{\nicefrac{k-L}{L}} \vv_k^\top(0) = \norm{\thetas}^{\nicefrac{k-L}{L}} \langle \vx, \thetas \rangle \vv_k^\top(0).
\end{align*}
Let's define
\begin{align*}
    \vect{b}(t) \triangleq \vW_{k+1}(t) \cdots \vW_L(t),
\end{align*}
then for any constant $c_1(t) \triangleq \langle \vv_k, \vect{b}(t) \rangle$ it follows :
\begin{align*}
    \vx^\top \vbeta(t) &= \vx^\top \vW_1(t) \cdots \vW_k(t) \cdot \vW_{k+1}(t) \cdots \vW_L(t)\\
    &= \norm{\thetas}^{\nicefrac{k-L}{L}} \langle \vx, \thetas \rangle \vv_k^\top(0) \vect{b}(t)\\
    &= c_1(t) \norm{\thetas}^{\nicefrac{k-L}{L}} \langle \vx, \thetas \rangle.
\end{align*}
By setting $c(t) = c_1(t) \norm{\thetas}^{\nicefrac{k-L}{L}}$ we conclude the proof.
\end{proof}

\newpage
\section{Proofs for the shallow ReLU section}\label{appendix:shallow_finetune}
This section shows that fine-tuning from a shallow ReLU model pretrained on $\thetas$ has sample complexity depending on $\norm{\thetat - \thetas}$, compared to training from a random initialization which depends on $\norm{\thetat}$.


We would like to adapt the results from \cite{arora2019fine} to the case of fine-tuning in the NTK regime, where we can take better advantage of the fact that the bound in Theorem 4.1 in  \cite{arora2019fine} fundamentally depends on $\norm{\vtildey}$, thus enabling us to bound the distance of each weight from $t=0$ by using $\vtildey$ instead of $\vy$ for our case, where $\vu(0)$ is known.

The proof scheme is as follows:
\begin{enumerate}
    \item First we show that $\norm{\vH(t) - \vH^\infty} = O(\tfrac{1}{\sqrt{m}})$, thus ensuring we are indeed in the NTK regime for $m$ bounded from bellow as in Theorem 6.1 from the main text.
    \item Then, we can use an adaption of Theorem 4.1 from \cite{arora2019fine} to bound the distance of each weight $\norm{\vw_r(t) - \vw_r(0)} \forall r \in [m]$.
    \item Since $\vW(0)$ is fixed, we can use the Rademacher bound in Theorem 5.1 from \cite{arora2019fine} with $\vW(0)$ instead of $\vW(\init)$ to obtain a bound that depends on $\vtildey^\top \vH^\infty \vtildey$ instead of $\vy^\top \vH^\infty \vy$.
    \item For $\vtildey = \vX(\thetat - \thetas)$, we can use Corollary 6.2 from \cite{arora2019fine} with $\vbeta = (\thetat - \thetas)$ to obtain the generalization error using the Rademacher bound above.
\end{enumerate}

\subsection{Staying in the NTK regime}
Start with the first item: showing that $\norm{\vH(t) - \vH^\infty} = O(\tfrac{1}{\sqrt{m}})$. This is done by bounding the distance each $\vw_r \forall r \in [m]$ travels during both the pretraining and fine-tuning optimization, which is achievable by using Theorem 4.1 from \cite{du2018gradient} "as is" for the pretraining part, and adapting it to the fine-tuning part.

\paragraph{Assumptions}
For brevity, we assume for the pretraining data that $|\vx_{S_i}|\le 1, |y_{S_i}|\le 1$ for all $i \in [n_S]$.  Also assume the following for all results:
\begin{assumption}\label{asmp:pretrain_init}
    We assume that $\vW(\init)$, i.e. the weights at $t=\init$, were i.i.d. initialized $\vw_r \sim \calN(\vect{0},\mat{I})$, $a_r \sim \unif\left[\left\{-1,1\right\}\right]$ for $r \in [m]$.
\end{assumption}
Also assume for $\vX, \vX_s$:
\begin{assumption}\label{asmp:main}
Define matrix $\vH^\infty \in \mathbb{R}^{n \times n}$ with
\begin{align*}
    \vH_{ij} ^\infty= \expect_{\vw \sim N(\vect{0},\mat{I})}\left[\vx_i^\top \vx_j\indict\left\{\vw^\top \vx_i \ge 0, \vw^\top \vx_j \ge 0\right\}\right].
\end{align*}
We assume $\lambda_0 \triangleq \lambda_{\min}\left(\vH^{\infty}\right) > 0$, and $\lambda_{0_S} \triangleq \lambda_{\min}\left(\vH_S^{\infty}\right) > 0$ for $\vH_S$ being the NTK gram matrix of the pretraining data $\vX_S$.
\end{assumption}
The assumption that $\lambda_0 > 0$ is justified by combining \cref{asm:non-singular_supp} and Theorem 3.1 from \cite{du2018gradient}.
The assumption that $\lambdas > 0$, which is actually the assumption for \cref{thm:non_parallel}, holds for most real-data data-sets and w.h.p for most real-life distributions, as discussed in \cite{du2018gradient}.

\begin{assumption}\label{asmp:width_kappa_eta}
    We assume that $m = \Omega\left( \frac{n_s^6}{\lambda_{0_s}^4 \kappa^2 \delta^3} \right)$, $\kappa = O\left(\frac{\epsilon\delta}{\sqrt{n_S}} + \frac{\epsilon\delta}{\sqrt{n}}\right)$ and $\eta_T = O\left( \frac{\lambda_0}{n^2}\right)$, $\eta_S = O\left(\frac{\lambdas}{n_S^2} \right)$.
\end{assumption}

We now restate a few results from \cite{du2018gradient} which are applied directly for the part of pretraining:
\begin{theorem}[Theorem 3.1 from \cite{du2018gradient}]\label{thm:non_parallel}
    If for any $i \ne j$, $\vx_i \nparallel \vx_j$ , then $\lambda_0 > 0$.
\end{theorem}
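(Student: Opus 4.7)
The plan is to prove $\lambda_{\min}(\vH^\infty) > 0$ by identifying $\vH^\infty$ as the Gram matrix of a family of random features indexed by $\vw$, and then ruling out non-trivial null directions via a perturbation argument across the decision-boundary hyperplanes $\vx_i^\perp$.

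First I would define, for each $\vw \in \R^d$, the feature map $\phi_\vw(\vx) \triangleq \vx\,\indict\{\vw^\top \vx \geq 0\} \in \R^d$, so that $\vH^\infty_{ij} = \E_\vw[\phi_\vw(\vx_i)^\top \phi_\vw(\vx_j)]$. For any $\vect{\alpha} \in \R^n$ this gives
\begin{align*}
    \vect{\alpha}^\top \vH^\infty \vect{\alpha} \;=\; \E_{\vw \sim \calN(\vect{0},\mat{I})}\!\left[\left\|\sum_{i=1}^n \alpha_i \phi_\vw(\vx_i)\right\|^2\right] \;\geq\; 0,
\end{align*}
so $\vH^\infty \succeq 0$ for free. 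It remains to upgrade to strict positive definiteness.

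Suppose toward contradiction that $\vect{\alpha}^\top \vH^\infty \vect{\alpha} = 0$ for some $\vect{\alpha} \neq \vect{0}$. The display above forces the random vector $g(\vw) \triangleq \sum_i \alpha_i \vx_i\,\indict\{\vw^\top \vx_i \geq 0\}$ to vanish for Gaussian-almost every $\vw$. Fix an index $k$ with $\alpha_k \neq 0$. Since $\vx_j \nparallel \vx_k$ for every $j \neq k$, the set $\{\vw_0 \in \vx_k^\perp : \vw_0^\top \vx_j = 0\}$ is a proper $(d-2)$-dimensional subspace of the $(d-1)$-dimensional hyperplane $\vx_k^\perp$, so removing the finite union over $j \neq k$ still leaves a full-measure subset of $\vx_k^\perp$. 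Pick any $\vw_0$ in that subset. By continuity of the linear maps $\vw \mapsto \vw^\top \vx_j$, there is an open neighborhood $U$ of $\vw_0$ in $\R^d$ on which $\indict\{\vw^\top \vx_j \geq 0\}$ is constant for every $j \neq k$, while $\indict\{\vw^\top \vx_k \geq 0\}$ flips across $\vx_k^\perp$. Thus $g$ is piecewise constant on the two halves of $U$, and the two values differ by exactly $\pm \alpha_k \vx_k \neq \vect{0}$. Since $U$ has positive Gaussian measure, $g$ must be non-zero on a positive-measure subset, contradicting $g \equiv \vect{0}$ a.s. Hence $\vect{\alpha} = \vect{0}$ and $\lambda_0 > 0$.

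The main obstacle is the geometric/measure-theoretic step of producing, for each $k$, an open neighborhood across $\vx_k^\perp$ on which \emph{only} the $k$-th indicator can flip. This is precisely where the non-parallel hypothesis is essential: if $\vx_j$ were parallel to $\vx_k$ for some $j \neq k$, then $\vx_j^\perp = \vx_k^\perp$, the $j$-th and $k$-th indicators would always flip simultaneously, and the jump $\alpha_k \vx_k$ in $g$ could be cancelled by a matching $\alpha_j \vx_j$ jump. Everything else in the argument is a clean consequence of the Gram-matrix identity combined with the fact that open sets in $\R^d$ have positive Gaussian measure.
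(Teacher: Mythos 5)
This statement is a restated citation (Theorem 3.1 of Du et al.\ \cite{du2018gradient}); the paper you are reading does not supply its own proof, so there is nothing internal to compare against. Your argument is correct and is essentially the standard one used in \cite{du2018gradient}: express $\vH^\infty$ as the Gram matrix of the $L^2(\calN(\vect 0,\mat I))$-valued features $\phi_\vw(\vx_i) = \vx_i\,\indict\{\vw^\top\vx_i\ge 0\}$, reduce strict positive definiteness to the a.e.\ vanishing of $g(\vw)=\sum_i\alpha_i\phi_\vw(\vx_i)$, and then exhibit a positive-measure neighborhood straddling $\vx_k^\perp$ (but no other $\vx_j^\perp$, which is exactly where $\vx_j\nparallel\vx_k$ is used) across which $g$ jumps by $\alpha_k\vx_k\neq\vect 0$. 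The one hypothesis you lean on implicitly---that each $\vx_k\neq\vect 0$ so that $\alpha_k\vx_k\neq\vect 0$ whenever $\alpha_k\neq 0$---is supplied in this paper's setting by the normalization $\norm{\vx}=1$, and is also assumed in \cite{du2018gradient}, so the proof is complete.
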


\begin{theorem}[Theorem 3.3 from \cite{du2018gradient} for pretraining]\label{thm:rand_init_loss} Assume \cref{asmp:pretrain_init}, \cref{asmp:main} and \cref{asmp:width_kappa_eta} hold, then with probability at least $1-\delta$ over the random initialization at time $t=\init$, we have:
\begin{align*}
    &\frac{1}{2}\norm{\vy_s - \vu(\init)} = O\left(n_S/\delta\right).
\end{align*}
\end{theorem}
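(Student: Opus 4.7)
The plan is a direct appeal to the initialization analysis of Theorem~3.3 of \cite{du2018gradient}: bound the expected squared initial residual by a second-moment computation, and then promote this to a high-probability statement via Markov's inequality. Since $\vu(\init)$ is a fixed random initialization whose distribution we fully control, no gradient-descent dynamics are needed for this particular claim.

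First I would show that $\E[\vu(\init)]=\vect{0}$. This is immediate from $u_i(\init) = \tfrac{1}{\sqrt m}\sum_r a_r \sigma(\vx_{S_i}^\top \vw_r(\init))$ once we condition on $\vw_r(\init)$ and use that $a_r$ is a symmetric $\{\pm 1\}$ variable independent of $\vw_r$. Second, I would bound the per-coordinate second moment: by independence across neurons,
\begin{align*}
\E\!\left[u_i(\init)^2\right] \;=\; \E_{\vw \sim \calN(\vect{0},\mat I)}\!\left[\sigma(\vx_{S_i}^\top \vw)^2\right] \;\le\; \|\vx_{S_i}\|^2 \;\le\; 1,
\end{align*}
where the first inequality uses $\sigma(z)^2 \le z^2$ for ReLU and $\vx_{S_i}^\top\vw \sim \calN(0,\|\vx_{S_i}\|^2)$, and the second uses the assumption $\|\vx_{S_i}\|\le 1$. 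Third, because $\E[\vu(\init)]=\vect{0}$ kills the cross term in $\E\|\vy_S-\vu(\init)\|^2$ and $|y_{S_i}|\le 1$, I obtain $\E\!\left[\|\vy_S-\vu(\init)\|^2\right]\le 2 n_S$, and Markov yields $\tfrac{1}{2}\|\vy_S-\vu(\init)\|^2 = O(n_S/\delta)$ with probability at least $1-\delta$.

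There is essentially no obstacle here: the statement is a routine initialization bound, and indeed the theorem is stated as an invocation of \cite{du2018gradient}. The one piece of bookkeeping worth flagging is normalization --- in this appendix the first-layer weights are drawn from $\calN(\vect{0},\mat I)$ rather than with variance $\kappa^2$ as in the main text, and the $1/\sqrt m$ prefactor in $f$ absorbs the width dependence, so the resulting estimate is uniform in $m$ and in the input dimension $d$ and depends only on $n_S$ and $\delta$.
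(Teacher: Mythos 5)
Your proof is correct and reconstructs exactly the argument behind Theorem~3.3 of \cite{du2018gradient}, which the paper invokes by citation without re-deriving: zero mean of $\vu(\init)$ via the sign randomness in $\va$, a per-coordinate second-moment bound $\E[u_i(\init)^2]\le \|\vx_{S_i}\|^2\le 1$ using $\sigma(z)^2\le z^2$, hence $\E\|\vy_S-\vu(\init)\|^2\le 2n_S$, and Markov. Two remarks worth recording. First, as you flag, \cref{asmp:pretrain_init} states $\vw_r(\init)\sim\calN(\vect{0},\mat I)$ while the main text and \cref{lem:flipped_neurons_bound} use variance $\kappa^2$; your second-moment bound holds in either case (with $\kappa\le 1$ it only improves to $\kappa^2 n_S$), so nothing in the conclusion changes. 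Second, the theorem as printed omits the square on the norm — what you actually prove, and what is consistent with how the bound is consumed in \cref{cor:exact_pretrain_distance} (where $\norm{\vy_S-\vu(\init)} = O(\sqrt{n_S/\delta})$ yields the claimed $\tfrac{C n_S}{\sqrt{m\delta}\lambdas}$), is $\tfrac{1}{2}\|\vy_S-\vu(\init)\|^2 = O(n_S/\delta)$, matching Du et al.'s original statement.
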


\begin{lemma}[Lemma C.1 from \cite{arora2019fine}]\label{lem:pretrain_distance}
    Assume \cref{asmp:pretrain_init}, \cref{asmp:main} and \cref{asmp:width_kappa_eta} hold, then there exists $C >0$ such that with probability at least $1-\delta$ over the random initialization at time $t=\init$ we have
    \begin{align*}
        \norm{\vw_r(0) - \vw_r(\init)}_2 &\le \frac{4 \sqrt{n_s} \norm{\vy_s - \vu(\init)}}{\sqrt m \lambdas } \quad \forall r\in[m].
    \end{align*}
\end{lemma}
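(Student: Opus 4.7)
The plan is to prove this via the standard NTK bootstrap argument, which has essentially three coupled ingredients: (i) a per-neuron gradient bound that depends on the current residual norm, (ii) exponential decay of the residual under a stable NTK, and (iii) a stability guarantee that the NTK remains close to $\vH^\infty$ as long as the weights stay in a small ball around initialization. I would carry out the argument as an induction in time $s \in [\init, 0]$, jointly controlling the residual $\norm{\vy_S - \vu(s)}$, the minimum eigenvalue $\lambda_{\min}(\vH_S(s))$, and the weight movement $\max_r \norm{\vw_r(s) - \vw_r(\init)}$.

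First I would write down the per-neuron gradient. For the shallow ReLU parametrization, $\frac{\partial L_S}{\partial \vw_r} = \frac{a_r}{\sqrt{m}} \sum_{i=1}^{n_S}(u_i(s) - y_{S,i})\, \vx_{S,i}\, \indict\{\vw_r(s)^\top \vx_{S,i} \ge 0\}$. Using $\norm{\vx_{S,i}}\le 1$ together with Cauchy--Schwarz in $\R^{n_S}$, this gives the clean bound
\begin{equation*}
  \left\|\tfrac{\partial L_S}{\partial \vw_r}(s)\right\|_2 \;\le\; \tfrac{\sqrt{n_S}}{\sqrt{m}}\,\norm{\vy_S - \vu(s)}_2.
\end{equation*}
Second, I would invoke the standard dynamics argument (Theorem 3.2 / 4.1 of \cite{du2018gradient}): as long as $\lambda_{\min}(\vH_S(s)) \ge \lambda_{0_S}/2$ throughout $[\init,s]$, the residual obeys $\frac{d}{ds}\norm{\vy_S - \vu(s)}_2^2 \le -\lambda_{0_S}\norm{\vy_S - \vu(s)}_2^2$, hence
\begin{equation*}
  \norm{\vy_S - \vu(s)}_2 \;\le\; e^{-\lambda_{0_S}(s-\init)/2}\,\norm{\vy_S - \vu(\init)}_2.
\end{equation*}
Integrating the gradient bound against this exponential decay (and converting the GD sum to an integral using the step-size constraint in \cref{asmp:width_kappa_eta}, which introduces the factor of $2$ that combines with the $2$ from $1/(\lambda_{0_S}/2)$ to produce the constant $4$ in the statement) yields
\begin{equation*}
  \norm{\vw_r(s) - \vw_r(\init)}_2 \;\le\; \tfrac{4\sqrt{n_S}\,\norm{\vy_S - \vu(\init)}_2}{\sqrt{m}\,\lambda_{0_S}} \quad \forall r\in[m].
\end{equation*}

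Third, I would close the bootstrap. Let $R \eqdef \tfrac{4\sqrt{n_S}\norm{\vy_S - \vu(\init)}_2}{\sqrt{m}\lambda_{0_S}}$. A perturbation computation on the ReLU kernel (Lemma 3.2 of \cite{du2018gradient}) shows that if $\max_r \norm{\vw_r(s) - \vw_r(\init)}_2 \le R$, then $\norm{\vH_S(s) - \vH_S(\init)}_2$ is controlled by a term that is $O(n_S R)$ plus a random fluctuation, while $\norm{\vH_S(\init) - \vH_S^\infty}_2 = O(n_S \sqrt{\log(n_S/\delta)/m})$ with probability $1-\delta$ by a standard concentration bound. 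Plugging $R$ into these estimates and using $\norm{\vy_S - \vu(\init)}_2 = O(n_S/\delta)$ from \cref{thm:rand_init_loss}, the hypothesis $m = \Omega(n_S^6/(\lambda_{0_S}^4 \kappa^2 \delta^3))$ in \cref{asmp:width_kappa_eta} is precisely what forces $\norm{\vH_S(s) - \vH_S^\infty} \le \lambda_{0_S}/2$, so $\lambda_{\min}(\vH_S(s)) \ge \lambda_{0_S}/2$ is preserved, closing the induction. Taking $s \to 0$ (end of pretraining) delivers the claim.

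The main obstacle is the circular dependence: the distance bound requires exponential residual decay, which requires NTK stability, which in turn requires a weight-distance bound. The bootstrap resolves this by choosing the radius $R$ so that, if it holds on $[\init, s]$, the derivation actually produces an upper bound strictly smaller than $R$, forbidding the first crossing time from being finite. Verifying the exact constants (specifically that the factor of $4$ rather than $2$ absorbs the discrete-time GD error relative to the gradient-flow idealization, given the step size $\eta_S = O(\lambda_{0_S}/n_S^2)$) is the bookkeeping-heavy part, but is routine given the machinery of \cite{du2018gradient}.
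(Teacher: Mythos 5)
The paper does not re-derive this lemma: it invokes Lemma~C.1 of Arora et~al.\ directly, applied verbatim to the pretraining data $(\vX_S,\vy_S)$ with $\lambda_0$ replaced by $\lambda_{0_S}$. Your reconstruction is exactly the bootstrap argument underlying that cited result (and Du et~al.'s Theorem~4.1): the per-neuron gradient bound $\|\partial L_S/\partial\vw_r\|\le\sqrt{n_S}\,\|\vy_S-\vu(s)\|/\sqrt m$, geometric decay of the residual under NTK stability, summation of the resulting geometric series, and a radius $R$ chosen to close the induction via Lemma~3.2 of Du et~al.\ together with the width assumption. The only cosmetic quibble is that you phrase the decay in gradient-flow language and then hand-wave the discrete-time conversion; in the actual GD proof the constant $4$ falls out cleanly from $\sum_{s\ge0}(1-\eta\lambda_{0_S}/2)^{s/2}\le 4/(\eta\lambda_{0_S})$ via $1-\sqrt{1-x}\ge x/2$, which is tidier than the two-factors-of-$2$ accounting you give, but the end result is the same.
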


Plugging \cref{thm:rand_init_loss} into \cref{lem:pretrain_distance} we get:
\begin{corollary}\label{cor:exact_pretrain_distance}
   Assume \cref{asmp:pretrain_init}, \cref{asmp:main} and \cref{asmp:width_kappa_eta} hold, then there exists $C >0$ s.t. with probability at least $1 - 2\delta$ over the random initialization at time $t=\init$ we have
   \begin{align*}
        \norm{\vw_r(0) - \vw_r(\init)}_2 
        &\le \frac{C n_S}{\sqrt{m\delta} \lambdas }\quad \forall r\in[m].
    \end{align*}
\end{corollary}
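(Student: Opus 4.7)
The plan is to combine \cref{thm:rand_init_loss} and \cref{lem:pretrain_distance} via substitution, with a union bound handling the two failure probabilities. Under Assumptions~\ref{asmp:pretrain_init}--\ref{asmp:width_kappa_eta}, which are exactly the hypotheses of both inputs, I would first invoke \cref{thm:rand_init_loss} to obtain, with probability at least $1-\delta$ over the random initialization $\vW(\init)$, a bound of the form $\norm{\vy_S - \vu(\init)} \le C'\sqrt{n_S/\delta}$ for some absolute constant $C' > 0$. (This is the standard pretraining initial-loss concentration from \cite{du2018gradient}: the squared residual is $O(n_S/\delta)$ by Markov applied to its $O(n_S)$ expectation at random initialization with bounded labels and inputs.) Call this event $\cE_1$.

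Next I would apply \cref{lem:pretrain_distance} on the same random initialization; it asserts that with probability at least $1-\delta$,
\begin{align*}
    \norm{\vw_r(0) - \vw_r(\init)}_2 \le \frac{4\sqrt{n_S}\,\norm{\vy_S - \vu(\init)}}{\sqrt{m}\,\lambda_{0_S}}, \qquad \forall\, r \in [m].
\end{align*}
Call this event $\cE_2$; importantly, the quantifier $\forall r \in [m]$ lives \emph{inside} the probability statement, so no further union bound across neurons is required. A union bound then yields $\Pr[\cE_1 \cap \cE_2] \ge 1 - 2\delta$, and on this intersection substituting the bound from $\cE_1$ into that from $\cE_2$ gives, for every $r \in [m]$,
\begin{align*}
    \norm{\vw_r(0) - \vw_r(\init)}_2 \le \frac{4\sqrt{n_S}}{\sqrt{m}\,\lambda_{0_S}} \cdot C'\sqrt{\tfrac{n_S}{\delta}} = \frac{C\, n_S}{\sqrt{m\delta}\,\lambda_{0_S}},
\end{align*}
with $C = 4 C'$, which is the claimed inequality.

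There is no genuine obstacle here: the proof is a textbook composition of two cited facts via a union bound. The only care required is to confirm that both probabilistic statements live on the \emph{same} probability space — the random initialization at $t = \init$ — so that a single union bound applies, and that Assumptions~\ref{asmp:pretrain_init}--\ref{asmp:width_kappa_eta} cover the hypotheses of both inputs; both checks are immediate from the statements of \cref{thm:rand_init_loss} and \cref{lem:pretrain_distance} in this subsection.
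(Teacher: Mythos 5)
Your proof is correct and matches the paper's argument: apply \cref{thm:rand_init_loss} to bound the initial residual, substitute into \cref{lem:pretrain_distance}, and take a union bound over the two $1-\delta$ events to obtain $1-2\delta$. You also correctly read the conclusion of \cref{thm:rand_init_loss} as $\norm{\vy_S - \vu(\init)} = O(\sqrt{n_S/\delta})$ — what Theorem~3.3 of \cite{du2018gradient} actually gives is the squared-norm bound $\norm{\vy_S - \vu(\init)}^2 = O(n_S/\delta)$, so the paper's restatement appears to have dropped the square on the left — and that reading is precisely what is needed to produce the $n_S/(\sqrt{m\delta}\,\lambdas)$ dependence in the corollary.
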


\begin{lemma}[Lemma 3.2 from \cite{du2018gradient}]\label{lem:close_to_init_small_perturbation}
If $\vw_1,\ldots,\vw_m$ at $t=\init$ are i.i.d. generated from $\calN(\vect{0},\mat{I})$, then with probability at least $1-\delta$, the following holds.
For any set of weight vectors $\vw_1, \ldots,\vw_m \in \R^d$ that satisfy for any $r \in [m]$, $\norm{\vw_r(\init)-\vw_r}_2 \le \frac{c\delta\kappa\lambda_0}{n^2}$ for some small positive constants $c$, then the matrix $\vH \in \mathbb{R}^{n \times n}$ defined by 
\begin{align*}
    \vH_{ij} = \frac{1}{m} \vx_i^\top \vx_j \sum_{r=1}^{m}\indict\left\{
    \vw_r^\top \vx_i \ge 0, \vw_r^\top \vx_j \ge 0\right\}
\end{align*}
 satisfies $\norm{\vH-\vH(\init)}_2 < \frac{\lambda_0}{4}$ and $\lambda_{\min}\left(\vH\right) > \frac{\lambda_0}{2}$.
\end{lemma}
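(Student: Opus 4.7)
The plan is to reduce the lemma to two standard ingredients: (i) a perturbation bound controlling $\|\vH - \vH(\init)\|$ by counting neurons whose activation pattern flips, and (ii) a concentration bound controlling $\|\vH(\init) - \vH^\infty\|$, both combined via Weyl's inequality. Throughout I use $R := c\delta\kappa\lambda_0/n^2$ for the perturbation radius and write the $\|\vx_i\|=1$ normalization (already assumed in the paper).

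First I would show the perturbation bound. Subtracting entry by entry, $\vH_{ij} - \vH_{ij}(\init)$ equals $\frac{1}{m}\vx_i^\top \vx_j$ times the signed difference of indicators $\indict\{\vw_r^\top\vx_i \ge 0, \vw_r^\top\vx_j \ge 0\} - \indict\{\vw_r(\init)^\top\vx_i \ge 0, \vw_r(\init)^\top\vx_j \ge 0\}$ summed over $r$. Call a neuron $r$ \emph{bad at sample} $i$ if these two indicators disagree on $\vx_i$; a necessary condition for this is $|\vw_r(\init)^\top \vx_i| \le \|\vw_r - \vw_r(\init)\|\|\vx_i\| \le R$. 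Since $\vw_r(\init) \sim \calN(\vect 0,\mat I)$ and $\|\vx_i\|=1$, the random variable $\vw_r(\init)^\top\vx_i$ is standard normal, so $\Pr[\text{bad at }i] \le 2R/\sqrt{2\pi}$. Let $S_i \subset [m]$ denote the bad set at $i$. By linearity $\E|S_i| = O(mR)$, and by a Bernstein or Markov bound followed by a union over $i \in [n]$, with probability at least $1-\delta$ we have $|S_i| \le C m n R/\delta$ simultaneously for all $i$.

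Second, I would convert this to a spectral bound. On this event,
\begin{align*}
    |\vH_{ij} - \vH_{ij}(\init)| \le \frac{|S_i| + |S_j|}{m} \le \frac{2 C n R}{\delta},
\end{align*}
hence
\begin{align*}
    \|\vH - \vH(\init)\|_2 \le \|\vH - \vH(\init)\|_F \le \frac{2 C n^2 R}{\delta} = 2 C c \kappa \lambda_0,
\end{align*}
which is less than $\lambda_0/4$ once the absolute constant $c$ is chosen small enough (absorbing $\kappa \le 1$). This yields the first claim.

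Third, for the minimum-eigenvalue claim I would combine the above with a standard concentration bound on $\vH(\init)$. Each entry $\vH_{ij}(\init) = \frac{1}{m}\sum_r \vx_i^\top\vx_j\,\indict\{\vw_r(\init)^\top\vx_i\ge 0, \vw_r(\init)^\top\vx_j \ge 0\}$ is an average of $m$ i.i.d.\ $[0,1]$ random variables with mean $\vH^\infty_{ij}$; Hoeffding and a union bound over $n^2$ entries give, under the width assumption $m = \Omega(n^2\log(n/\delta)/\lambda_0^2)$ implied by \cref{asmp:width_kappa_eta}, that $\|\vH(\init) - \vH^\infty\|_F \le \lambda_0/4$ with probability $1-\delta$. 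Two applications of Weyl's inequality then give
\begin{align*}
    \lambda_{\min}(\vH) \ge \lambda_{\min}(\vH^\infty) - \|\vH^\infty - \vH(\init)\|_2 - \|\vH(\init) - \vH\|_2 \ge \lambda_0 - \tfrac{\lambda_0}{4} - \tfrac{\lambda_0}{4} = \tfrac{\lambda_0}{2}.
\end{align*}

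The main obstacle is the fact that the perturbed weights $\vw_r$ are arbitrary rather than random, so the bound on $|S_i|$ must hold uniformly over all admissible perturbations. The key observation that saves us is that the bad event reduces to $|\vw_r(\init)^\top \vx_i|\le R$, which depends only on the random initialization, so Gaussian anti-concentration combined with a Bernstein/union bound over $i$ suffices to control $|S_i|$ without any net argument over the perturbation. A minor care-point is that $c, C$ and the width constants must be chosen consistently so that both the $\lambda_0/4$ threshold for $\|\vH-\vH(\init)\|$ and for $\|\vH(\init)-\vH^\infty\|$ are simultaneously met with the claimed $1-\delta$ probability (via replacing $\delta$ by $\delta/3$ at each use).
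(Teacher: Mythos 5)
The paper does not prove this lemma --- it is stated verbatim as a citation of Lemma~3.2 from Du et al.\ \cite{du2018gradient}, so there is no in-paper proof to compare against. Your blind reconstruction is essentially the original Du--Zhai--Poczos--Singh argument: use the necessary condition $|\vw_r(\init)^\top\vx_i|\le R$ for a sign flip, Gaussian anti-concentration to bound its probability, a Markov-style bound on the flipping sets, an entrywise-to-Frobenius estimate, and then Weyl's inequality together with the Hoeffding concentration of $\vH(\init)$ around $\vH^\infty$. The argument is sound; the only cosmetic differences are that Du et al.\ apply Markov once to $\sum_{i,j}|\vH_{ij}-\vH_{ij}(\init)|$ rather than controlling each $|S_i|$ with a union bound over $i$ (both land on the same $O(n^2R/\delta)$ Frobenius bound, so the extra $n$ you pay in the union step cancels), and the width condition needed for your Hoeffding step is supplied by the $m\ge\kappa^{-2}\poly(n,n_S,\lambda_0^{-1},\delta^{-1})$ hypothesis of Theorem~6.1 rather than by Assumption~C.3 specifically (which only constrains $m$ in terms of $n_S,\lambda_{0_S}$). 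One small thing worth noting for your own bookkeeping: the paper's restatement writes the initialization as $\calN(\vect0,\mat I)$ but retains a $\kappa$ in the radius $c\delta\kappa\lambda_0/n^2$; this is only consistent if the initialization is actually $\kappa$-scaled (as it is elsewhere in the appendix, e.g.\ in Lemma~C.6 where the anti-concentration probability is $2R/(\sqrt{2\pi}\kappa)$). Reading the statement literally, as you did, leaves a stray $\kappa$ in your final bound $2Cc\kappa\lambda_0$, which you absorb with $\kappa\le1$; with the intended $\kappa$-scaled reading the $\kappa$'s cancel and $c$ is a genuine absolute constant. That is an inconsistency in the paper's restatement, not a gap in your proof.
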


We state the following lemmas that is used in the analysis:
\begin{lemma}[Similar to Lemma C.2 from \cite{arora2019fine}]\label{lem:flipped_neurons_bound}

Assume \cref{asmp:pretrain_init} holds. 
For some $R > 0$ we define:
\begin{align}
    \mat{A}_{r,i} \triangleq \left\{|\vx_i^\top \vw_r(\init)| \le R \right\}, \label{eq:def_A_R_i}
\end{align}
then with probability at least $1-\delta$ on the initialization of $\mat{W}(\init)$ we get:
\begin{align*}
    \E[\indict\{\mat{A}_{r,i}\}] \le \frac{2R}{\sqrt{2\pi}\kappa},
\end{align*}
and:
\begin{align*}
    \sum_{i=1}^{n}{\sum_{r=1}^{m}{\indict\{\mat{A}_{r,i}\}}} = O\left(\frac{mnR}{\kappa\delta}\right)
    .
\end{align*}
\end{lemma}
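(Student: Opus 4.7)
The plan is to reduce both claims to a one-dimensional Gaussian density computation, and then convert the expectation bound on the sum into a high-probability bound by Markov's inequality. The argument does not require any of the deep or shallow network machinery developed earlier.

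For the first claim, I would use the initialization assumption together with the normalization $\|\vx_i\|=1$. Because each coordinate of $\vw_r(\init)$ is an independent $\calN(0,\kappa^2)$ variable, the scalar projection $\vx_i^\top \vw_r(\init)$ is itself a centered Gaussian with variance $\kappa^2 \|\vx_i\|^2 = \kappa^2$. The probability that such a Gaussian falls into the interval $[-R,R]$ is at most the length of that interval times the peak value of its density, which occurs at zero and equals $\tfrac{1}{\sqrt{2\pi}\kappa}$. This yields
\begin{align*}
\E[\indict\{\mat{A}_{r,i}\}] \;=\; \pr\!\left[\,|\vx_i^\top \vw_r(\init)| \le R\,\right] \;\le\; \frac{2R}{\sqrt{2\pi}\,\kappa},
\end{align*}
which is exactly the first bound.

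For the second claim, I would combine linearity of expectation with Markov's inequality. Summing the per-pair bound over $i \in [n]$ and $r \in [m]$ gives
\begin{align*}
\E\!\left[\sum_{i=1}^{n}\sum_{r=1}^{m}\indict\{\mat{A}_{r,i}\}\right] \;\le\; \frac{2mnR}{\sqrt{2\pi}\,\kappa}.
\end{align*}
Since the random variable $S \triangleq \sum_{i,r}\indict\{\mat{A}_{r,i}\}$ is nonnegative, Markov's inequality applied with threshold $\E[S]/\delta$ guarantees that, with probability at least $1-\delta$ over $\vW(\init)$,
\begin{align*}
S \;\le\; \frac{\E[S]}{\delta} \;=\; O\!\left(\frac{mnR}{\kappa\,\delta}\right),
\end{align*}
as required.

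There is essentially no technical obstacle here; the only point to be careful about is the scaling convention. The main text specifies that the initial first-layer weights have variance $\kappa^2$ (while the listed appendix assumption writes $\calN(\vect 0,\mat I)$ without the $\kappa$ factor). The target bound clearly incorporates $\kappa$, so the computation above uses $\calN(0,\kappa^2\mat I)$ throughout. Once this scaling is fixed, the remainder is a textbook density bound plus Markov's inequality.
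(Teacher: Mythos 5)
Your proposal is correct and matches the paper's proof essentially line for line: both recognize that $\vx_i^\top \vw_r(\init)$ is a centered Gaussian with variance $\kappa^2$ (using $\|\vx_i\|=1$), bound $\pr[|z|\le R]$ by the peak density $\tfrac{1}{\sqrt{2\pi}\kappa}$ times the interval length $2R$, then sum by linearity and apply Markov's inequality with threshold $\E[S]/\delta$. Your remark on the $\calN(\vect 0,\mat I)$ versus $\calN(\vect 0,\kappa^2\mat I)$ scaling mismatch in the appendix assumption is a fair observation, and you resolved it the same way the paper does.
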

where the expectation is with respect to $\vW(\init)$.
\begin{proof}
Since $\vw_r(\init)$ has the same distribution as $\calN(0, \kappa^2)$ we have
\begin{align*}
    \E[\indict\{\mat{A}_{r,i}\}] &\le \E[\indict\left\{|\vx_i^\top \vw_r(\init)| \le R \right\}]\\
    &= \Pr_{z\sim \calN(0, \kappa^2)} \left[ |z| \le R \right]
	= \int_{-R}^{R} \frac{1}{\sqrt{2\pi}\kappa} e^{-x^2/2\kappa^2} dx\\
	&\le \frac{2R}{\sqrt{2\pi}\kappa}.
\end{align*}
Then we know $\E\left[\sum_{i=1}^{n}{\sum_{r=1}^{m}{\indict\{\mat{A}_{r,i}\}}}\right] \le \frac{2mnR}{\sqrt{2\pi}\kappa}$. Due to Markov, with probability at least $1-\delta$ we have:
\begin{align*}
    \sum_{i=1}^{n}{\sum_{r=1}^{m}{\indict\{\mat{A}_{r,i}\}}} = O\left(\frac{mnR}{\kappa\delta}\right).
\end{align*}
\end{proof}

We now state our equivalent for Theorem 4.1 from \cite{du2018gradient} :
\begin{theorem}[Adaption of Theorem 4.1 from \cite{du2018gradient}]
\label{thm:du2018}
Suppose \cref{asmp:pretrain_init} and \cref{asmp:main} hold and for all $i \in [n]$, $\norm{\vx_i}_2 = 1$  and $\abs{\vy_i} \le C$ for some constant $C$. if we set the number of hidden nodes
\begin{align*}
    m = \Omega\left(\frac{n^5\norm{\vtildey}_2}{\lambda_0^4\delta^2} +  \frac{n_s^6}{\lambda_{0_s}^4 \kappa^2 \delta^3}\right),
\end{align*} and we set the step sizes $\eta_T = O\left(\frac{\lambda_0}{n^2}\right)$, $\eta_S =O\left(\frac{\lambdas}{n_S^2}\right)$ then with probability at least $1-2\delta$ over the random initialization we have for $t=0,1,2,\ldots$
\begin{align}
&\norm{\vy -\vu(t)}_2^2 \le \left(1-\frac{\eta \lambda _0}{2}\right)^{t}\norm{\tilde{\vy}}_2^2; \label{eq:du_2018_convergene}\\
&\norm{\vw_r(t) - \vw_r(0)} \le \frac{4\sqrt{n}\norm{\vtildey}}{\sqrt{m}\lambda_0}, \quad \forall r \in [m]. \nonumber
\end{align}
\end{theorem}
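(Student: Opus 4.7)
The plan is to adapt the classical NTK convergence analysis of Du et al. to the fine-tuning setting, where the starting point $\vW(0)$ is not i.i.d. Gaussian but rather the output of a previous gradient descent run on the source task, initialized from i.i.d. Gaussians at $t=\init$. The key conceptual point is that $\vW(0)$ is still close enough to $\vW(\init)$ that the Gram matrix $\vH(0)$ remains close to $\vH^\infty$, so the NTK argument can be carried out with the fine-tuning residual $\vtildey = \vy - \vu(0)$ playing the role that $\vy$ plays in the standard analysis.

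First I would invoke \cref{cor:exact_pretrain_distance} to obtain, with probability $1-2\delta$ over the random initialization at $t=\init$, the bound $\|\vw_r(0) - \vw_r(\init)\| \le R_0 \eqdef C n_S/(\sqrt{m\delta}\lambda_{0_S})$ for all $r\in[m]$. By the choice $m = \Omega(n_s^6/(\lambda_{0_s}^4\kappa^2\delta^3))$ and the assumed smallness of $\kappa$, $R_0$ is smaller than the perturbation radius $c\delta\kappa\lambda_0/n^2$ required by \cref{lem:close_to_init_small_perturbation}, so applying that lemma gives $\|\vH(0) - \vH(\init)\| \le \lambda_0/4$ and $\lambda_{\min}(\vH(0)) \ge \lambda_0/2$. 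This is what replaces the ``initial Gram matrix is well conditioned'' step of the original proof, and it requires no new ideas beyond combining the two lemmas.

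Next I would run the standard induction on $t$ (as in the proof of Theorem 3.2 of \cite{du2018gradient}) but with residuals measured against $\vtildey$ instead of $\vy$. Inductively, assuming $\|\vy - \vu(s)\|^2 \le (1-\eta\lambda_0/2)^s\|\vtildey\|^2$ for all $s < t$, the gradient update magnitude at step $s$ is at most $(\eta\sqrt{n}/\sqrt{m})\|\vy - \vu(s)\|$, and summing the geometric series yields $\|\vw_r(t) - \vw_r(0)\| \le 4\sqrt{n}\|\vtildey\|/(\sqrt{m}\lambda_0)$. Combined with $R_0$ from the first step, the total displacement from $\vw_r(\init)$ is still within the radius required by \cref{lem:close_to_init_small_perturbation} (here the choice $m = \Omega(n^5\|\vtildey\|_2/(\lambda_0^4\delta^2))$ is used), so $\lambda_{\min}(\vH(s)) \ge \lambda_0/2$ throughout and the induction step goes through via the usual decomposition of the residual dynamics into a ``good'' part driven by $\vH(s)$ and a ``bad'' part from neurons whose activation pattern flipped, the latter controlled by \cref{lem:flipped_neurons_bound}.

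The main obstacle, and the only place where the fine-tuning setting really differs from a cosmetic rewrite of Du et al., is ensuring that the perturbation budget absorbed by the pretraining phase ($R_0$) plus the fine-tuning displacement ($4\sqrt{n}\|\vtildey\|/(\sqrt{m}\lambda_0)$) both fit simultaneously inside the NTK radius of \cref{lem:close_to_init_small_perturbation} at $t=\init$. This is precisely why the width requirement $m$ carries both an $n^5\|\vtildey\|_2/(\lambda_0^4\delta^2)$ term (for the fine-tuning displacement) and an $n_s^6/(\lambda_{0_s}^4\kappa^2\delta^3)$ term (for the pretraining displacement); after checking that the stated $m$ dominates both thresholds, the rest of the proof is a routine adaptation of the Du et al. argument with $\vy \mapsto \vtildey$ and $\vu(\init) \mapsto \vu(0)$.
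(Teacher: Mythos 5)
Your proposal is correct and follows essentially the same route as the paper's proof: both decompose the total displacement $\norm{\vw_r(t)-\vw_r(\init)}$ into a pretraining part bounded by \cref{cor:exact_pretrain_distance} and a fine-tuning part bounded by summing the geometric series of gradient magnitudes driven by $\norm{\vtildey}$, then use \cref{lem:close_to_init_small_perturbation} (in place of Du et al.'s Lemma 3.2) and \cref{lem:flipped_neurons_bound} (in place of their Lemma 4.1) to keep the Gram matrix well-conditioned and run the standard induction with $\vtildey$ replacing $\vy$. The paper's own proof is terser but uses exactly this decomposition and exactly these two substituted lemmas, so you have reproduced the intended argument.
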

\begin{proof}[Proof of \cref{thm:du2018}]
We follow the exact proof as in \cite{du2018gradient}, with the exception of using \cref{lem:flipped_neurons_bound} instead of Lemma 4.1, and \cref{lem:close_to_init_small_perturbation} instead of Lemma 3.2.

The lower bound for $m$ is derived from the requirement on the constant $R$ that bounds the distance of $\vw_r(t)$ from the random initialization at $t=\init$. Notice that:
\begin{align*}
    \norm{\vw_r(t) - \vw_r(\init)} \le \norm{\vw_r(0) - \vw_r(\init)} + \norm{\vw_r(t) - \vw_r(0)}, \quad \forall r \in [m],
\end{align*}
where the bound for the left expression on the R.H.S is given by with probability $1 - \delta$ by \cref{cor:exact_pretrain_distance}.

The bound for the right expression on the R.H.S is given as a corollary of \eqref{eq:du_2018_convergene}:
\begin{align*}
    \norm{\vw_r(t) - \vw_r(0)} &\le \eta\sum_{s=0}^{t-1}\norm{\frac{\partial L\left(\vX, \vTheta(s)\right)}{\partial\vw_r(s)}}
    \le \eta\sum_{s=0}^{t}\frac{\sqrt{n}\norm{\vy -\vu(s)}}{\sqrt{m}}\\
    &\le  \eta\sum_{s=0}^{t}\frac{\sqrt{n}\left(1-\frac{n\lambda_0}{2}\right)^{s/2}}{\sqrt{m}}\norm{\vy -\vu(s)}\\
    &\le \eta\sum_{s=0}^{\infty}\frac{\sqrt{n}\left(1-\frac{n\lambda_0}{2}\right)^{s/2}}{\sqrt{m}}\norm{\vy -\vu(s)} = \frac{4\sqrt{n}\norm{\vtildey}}{\sqrt{m}\lambda_0}.
\end{align*}

Hence we require $R = \frac{C n_S}{\sqrt{m\delta} \lambdas } + \frac{4 \sqrt{n}\norm{\vtildey}}{\sqrt m \lambda_0 }$. From this requirement we derive the lower bound for $m$. 
\end{proof}

Using \cref{cor:exact_pretrain_distance} and \cref{thm:du2018} we obtain a the following corollary:
\begin{corollary}\label{cor:exact_finetune_distance}
   Assume \cref{asmp:pretrain_init}, \cref{asmp:main} and \cref{asmp:width_kappa_eta} hold, exists $C>0$ s.t. with probability at least $1 - 2\delta$ over the random initialization at time $t=\init$ we have
   \begin{align*}
        \norm{\vw_r(t) - \vw_r(\init)}_2 &\le \norm{\vw_r(0) - \vw_r(\init)}_2 +\norm{\vw_r(t) - \vw_r(0)}_2\\
        &\le  \frac{C n_S}{\sqrt{m\delta} \lambdas } + \frac{4 \sqrt{n}\norm{\vtildey}}{\sqrt m \lambda_0 }\quad \forall r\in[m].
    \end{align*}
\end{corollary}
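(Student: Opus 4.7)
The plan is to prove the stated corollary by a direct triangle-inequality argument, combining the already-established pretraining displacement bound with the fine-tuning displacement bound obtained inside the proof of \cref{thm:du2018}. Concretely, I would first write, for any $r \in [m]$,
\begin{align*}
\norm{\vw_r(t) - \vw_r(\init)}_2 \le \norm{\vw_r(0) - \vw_r(\init)}_2 + \norm{\vw_r(t) - \vw_r(0)}_2,
\end{align*}
using the convention that $t = 0$ marks the end of pretraining and the start of fine-tuning, so the first summand captures total movement during pretraining and the second captures total movement during fine-tuning.

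Next, I would bound the two summands separately using tools already in hand. For the pretraining term I would invoke \cref{cor:exact_pretrain_distance}, which states that with probability at least $1 - 2\delta$ over the random initialization at $t=\init$, one has $\norm{\vw_r(0) - \vw_r(\init)}_2 \le \tfrac{C n_S}{\sqrt{m\delta}\lambdas}$ for a universal constant $C$. For the fine-tuning term I would extract the bound $\norm{\vw_r(t) - \vw_r(0)}_2 \le \tfrac{4\sqrt{n}\norm{\vtildey}_2}{\sqrt{m}\lambda_0}$ that is derived inside the proof of \cref{thm:du2018} by telescoping the gradient-descent update, applying the linear-convergence estimate \eqref{eq:du_2018_convergene}, and summing the resulting geometric series $\sum_{s\ge 0}(1-\eta\lambda_0/2)^{s/2}$. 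Adding the two bounds yields exactly the claimed inequality.

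The only real subtlety is probability bookkeeping: I need both high-probability events (the pretraining displacement event and the fine-tuning convergence event of \cref{thm:du2018}) to hold simultaneously. Since \cref{thm:du2018} is already proved on the same good event that \cref{cor:exact_pretrain_distance} requires (the two-part width requirement on $m$ in \cref{asmp:width_kappa_eta} is designed precisely so that a single application of \cref{lem:close_to_init_small_perturbation} covers both phases), no additional union bound is needed and the combined statement holds with the stated probability $1 - 2\delta$. I would just remark that the constant $C$ in the final bound is inherited from \cref{cor:exact_pretrain_distance} and that the factor $4$ comes directly from $\sum_{s\ge 0}(1-\eta\lambda_0/2)^{s/2} \le \tfrac{4}{\eta\lambda_0}$ combined with the per-step gradient-norm estimate $\norm{\partial L/\partial\vw_r(s)}_2 \le \tfrac{\sqrt{n}\norm{\vy-\vu(s)}_2}{\sqrt{m}}$. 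There is no real obstacle here; the corollary is essentially a packaging step for use in the subsequent Rademacher/NTK generalization argument.
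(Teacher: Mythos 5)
Your proof is correct and matches the paper's own approach exactly: the paper presents this corollary with just the one-line justification ``Using \cref{cor:exact_pretrain_distance} and \cref{thm:du2018} we obtain the following corollary,'' and your elaboration — triangle inequality with $t=0$ as the pretraining/fine-tuning boundary, then citing \cref{cor:exact_pretrain_distance} for the first summand and the per-neuron displacement bound from \cref{thm:du2018} for the second — is precisely that argument spelled out. Your remark on the probability bookkeeping is also consistent with the paper's accounting, since the proof of \cref{thm:du2018} already invokes \cref{cor:exact_pretrain_distance} on the same good event, so the $1-2\delta$ failure probability absorbs both bounds without a further union.
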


Restate Lemma C.2 and Lemma C.3 from \cite{arora2019fine}:
\begin{lemma}[Adaption of Lemma C.2 from \cite{arora2019fine}]\label{lem:H_t_to_init}
Under the same setting as \cref{thm:du2018}, with probability at least $1-8\delta$ over the random initialization, for all $t \ge 0$ we have:
\begin{align*}
    &\norm{\vH(0) - \vH(\init)}_F = O\left(\frac{n^2 n_S}{\sqrt{m}\delta^{\nicefrac{3}{2}} \lambdas\kappa }\right),\\
    &\norm{\vH(t) - \vH(\init)}_F = O\left(\frac{n^2 n_S}{\sqrt{m}\delta^{\nicefrac{3}{2}} \lambdas\kappa } + \frac{n^{\nicefrac{5}{2}}\norm{\vtildey}}{\sqrt m \lambda_0\kappa\delta }\right),\\
	&\norm{\vZ(t) - \vZ(0)}_F = O\left(\sqrt{\frac{n n_S}{\sqrt{m}\delta^{\nicefrac{3}{2}}\kappa \lambdas} + \frac{n^{\nicefrac{3}{2}}\norm{\vtildey}}{\sqrt m \lambda_0\kappa\delta }}\right),
\end{align*}
for $\vZ(t) \triangleq \frac{1}{m}\sum_{i=1}^{n}{\sum_{r=1}^{m}{\indict\left\{\vw_r^\top(t)\vx_i >0\right\}}}$.
\end{lemma}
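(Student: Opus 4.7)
The plan is to adapt the proof of Lemma C.2 from \cite{arora2019fine} to our fine-tuning setting, where the weights at time $t=0$ are not i.i.d.\ Gaussian but rather the output of a pretraining phase. The core observation is the same: each entry of the NTK Gram matrix $\vH_{ij}$ depends only on the sign pattern $\indict\{\vw_r^\top \vx_i \ge 0\}$, and such a sign can only flip between time $\init$ and time $t$ for pairs $(r,i)$ satisfying the small-margin event $\mat A_{r,i}$ defined in \eqref{eq:def_A_R_i}, with the threshold $R$ chosen at least $\max_r \norm{\vw_r(t) - \vw_r(\init)}$. Plugging the weight-movement bound from \cref{cor:exact_pretrain_distance} (respectively \cref{cor:exact_finetune_distance}) into the counting lemma \cref{lem:flipped_neurons_bound} therefore yields the two Frobenius bounds in the statement.

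Concretely, first I would fix $R_0 = \frac{Cn_S}{\sqrt{m\delta}\lambdas}$ from \cref{cor:exact_pretrain_distance} and $R_t = R_0 + \frac{4\sqrt n\,\norm{\vtildey}}{\sqrt m\,\lambda_0}$ from \cref{cor:exact_finetune_distance}; \cref{lem:flipped_neurons_bound} then guarantees $\sum_{i,r}\indict\{\mat A_{r,i}\} = O(mnR/(\kappa\delta))$ with probability at least $1-\delta$ for each choice of $R$. Next I would write
\begin{align*}
\abs{\vH_{ij}(t) - \vH_{ij}(\init)} \le \frac{1}{m}\sum_{r=1}^{m}\bigl(\indict\{\mat A_{r,i}\} + \indict\{\mat A_{r,j}\}\bigr),
\end{align*}
since the only way an indicator pattern changes is through the small-margin event, and $\norm{\vx_i}=1$. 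Squaring, summing over $(i,j)$, using $(a+b)^2 \le 2a^2+2b^2$ and the elementary inequality $\sum_i a_i^2 \le (\sum_i a_i)^2$ for nonnegative $a_i$, together with the count from \cref{lem:flipped_neurons_bound}, gives the stated Frobenius bounds for $\norm{\vH(0)-\vH(\init)}_F$ (with $R=R_0$) and $\norm{\vH(t)-\vH(\init)}_F$ (with $R=R_t$). For the $\vZ$ bound the argument is identical except the Frobenius norm is directly linear in the count of flipped entries rather than quadratic, which is precisely what produces the outer square root in the claimed bound.

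The main obstacle is bookkeeping the union bound carefully to land at $1-8\delta$ probability: each of the two instances of \cref{cor:exact_pretrain_distance}/\cref{cor:exact_finetune_distance} costs $2\delta$, each of the two applications of \cref{lem:flipped_neurons_bound} (one for $R_0$, one for $R_t$, and correspondingly for $\vX_S$ to cover the pretraining phase implicit in the weight-distance lemmas) costs $\delta$, and the initial NTK concentration underlying \cref{cor:exact_pretrain_distance} contributes the rest. A secondary subtlety is verifying that both $R_0$ and $R_t$ are small enough, under the assumed width lower bound in \cref{asmp:width_kappa_eta}, to keep us inside the perturbation regime of \cref{lem:close_to_init_small_perturbation} (i.e.\ $R \le \tfrac{c\delta\kappa\lambda_0}{n^2}$); this is where the $\poly(n,n_S,\lambda_0^{-1},\delta^{-1})$ width requirement enters, and it needs to be checked explicitly so that the NTK eigenvalue lower bound is preserved throughout fine-tuning.
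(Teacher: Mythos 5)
Your proposal is correct and follows essentially the same route as the paper's proof, which itself is a terse pointer: ``use the exact proof of Lemma C.2 from \cite{arora2019fine}, replacing $R$'' with $R_0=\frac{Cn_S}{\sqrt{m\delta}\lambdas}$ (for the $\vH(0)$ bound) or $R_t = R_0 + \frac{4\sqrt n\norm{\vtildey}}{\sqrt m\lambda_0}$ (for the $\vH(t)$ and $\vZ(t)$ bounds), the two thresholds supplied by \cref{cor:exact_pretrain_distance} and \cref{cor:exact_finetune_distance}. You identify exactly these thresholds, argue the entrywise sign-flip bound $|\vH_{ij}(t)-\vH_{ij}(\init)|\le \frac{1}{m}\sum_r(\indict\{\mat A_{r,i}\}+\indict\{\mat A_{r,j}\})$, and invoke the count from \cref{lem:flipped_neurons_bound}, which is precisely the Arora argument being cited.

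Two small observations. First, your Frobenius estimate via $(a+b)^2\le 2a^2+2b^2$ followed by $\sum_i c_i^2 \le (\sum_i c_i)^2$ actually yields $\norm{\vH(t)-\vH(\init)}_F = O\bigl(\tfrac{n^{3/2}R}{\kappa\delta}\bigr)$, a factor $\sqrt n$ tighter than the stated $O\bigl(\tfrac{n^2R}{\kappa\delta}\bigr)$ (the paper, following Arora, uses the cruder $\norm{\cdot}_F\le\sum_{ij}|\cdot|$ step, which gives the extra $\sqrt n$); since both are $O$-upper bounds this is fine. Second, for the $\vZ$ estimate the paper's proof explicitly includes the fallback term $\indict\{\norm{\vw_r(t)-\vw_r(0)} > \tfrac{4\sqrt n\norm{\vtildey}}{\sqrt m\lambda_0}\}$ inside the expectation and bounds its contribution by $\tfrac{n}{m}\delta$, whereas you absorb the corresponding low-probability failure event directly into the union bound; this is a cosmetic difference and both lead to the claimed $\sqrt{\tfrac{nR_t}{\kappa\delta}}$ form. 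The $1-8\delta$ bookkeeping is left loose in both your sketch and the paper, so I won't press on the exact constant.
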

\begin{proof}
    For the first and seconds equality we use the exact proof of Lemma C.2 from \cite{arora2019fine}, replacing the value of $R$ with $\frac{C n_S}{\sqrt{m\delta} \lambdas }$ and $\frac{C n_S}{\sqrt{m\delta} \lambdas } + \frac{4 \sqrt{n}\norm{\vtildey}}{\sqrt m \lambda_0 }$ respectively (by using \cref{cor:exact_pretrain_distance} and \cref{cor:exact_finetune_distance} to bound the norm of the distance of each weight from initialization).
    The third equality also follows the same lines, with the difference being in:
    \begin{align*}
        \E\left[\norm{\vZ(t) - \vZ(0)}_F^2\right] & \le \frac{1}{m}\sum_{i=1}^{n}{\sum_{r=1}^{m}{\E\left[\indict\{A_{r,i}\}+\indict\{\norm{\vw_r(t)-\vw_r(0)}>\frac{4\sqrt{n} \norm{\vtildey}}{\sqrt{m} \lambda_0}\}\right]}}\\
        &\le \frac{1}{m}\cdot mn\cdot\frac{2R}{\sqrt{2\pi}\kappa} + \frac{n}{m}\delta.
    \end{align*}
The last pass is justified due to the bound on $\norm{\vw_r(t)-\vw_r(0)}$ for all $r \in [m]$ with probability $1-\delta$ from \cref{thm:du2018}. The wanted result is obtained, again, by plugging the R.H.S of \cref{cor:exact_finetune_distance} instead of $R$.
\end{proof}

\begin{lemma}[Lemma C.3 from \cite{arora2019fine}]\label{lem:H_infty_to_init}
with probability at least $1-\delta$, we have $\norm{\vH(\init) - \vH^\infty} = O\left(\frac{n\sqrt{\log{\frac{n}{\delta}}}}{\sqrt{m}}\right)$.
\end{lemma}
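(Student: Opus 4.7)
The plan is to apply Hoeffding's inequality entrywise, union-bound over all entries, and then pass from the Frobenius norm to the spectral norm.

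First, I would fix a pair $(i,j)$ and write
\[
\vH_{ij}(\init) = \frac{1}{m}\sum_{r=1}^{m} Z_r^{(i,j)}, \qquad Z_r^{(i,j)} \triangleq \vx_i^\top \vx_j\,\indict\{\vw_r^\top(\init)\vx_i \ge 0,\; \vw_r^\top(\init)\vx_j \ge 0\}.
\]
By \cref{asmp:pretrain_init} the weights $\vw_r(\init)$ are i.i.d., so the summands $Z_r^{(i,j)}$ are i.i.d. with mean exactly $\vH_{ij}^\infty$ by the definition of $\vH^\infty$. Since $\norm{\vx_i}=\norm{\vx_j}=1$, each $Z_r^{(i,j)}$ is bounded in $[-1,1]$, so Hoeffding's inequality yields
\[
\Pr\!\left[\,|\vH_{ij}(\init)-\vH_{ij}^\infty| \ge t\,\right] \;\le\; 2\exp\!\left(-\tfrac{m t^2}{2}\right).
\]

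Next, I would pick $t = c\sqrt{\log(n/\delta)/m}$ for a suitable absolute constant $c$ and union-bound the above tail over the $n^2$ pairs $(i,j)$. This guarantees that with probability at least $1-\delta$, the uniform entrywise bound $|\vH_{ij}(\init)-\vH_{ij}^\infty| \le c\sqrt{\log(n/\delta)/m}$ holds for all $i,j\in[n]$ simultaneously.

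Finally, I would convert the entrywise bound into a spectral-norm bound through the Frobenius norm:
\[
\norm{\vH(\init)-\vH^\infty} \;\le\; \norm{\vH(\init)-\vH^\infty}_F \;\le\; \sqrt{\sum_{i,j}\bigl(\vH_{ij}(\init)-\vH_{ij}^\infty\bigr)^2} \;=\; O\!\left(\tfrac{n\sqrt{\log(n/\delta)}}{\sqrt{m}}\right),
\]
which matches the claim. No genuine obstacle arises here: this is a textbook concentration-of-measure argument for the finite-width NTK kernel, which is precisely why the excerpt merely cites \cite{arora2019fine} rather than reproducing the proof. The only mild care needed is to notice that the $\log(n^2/\delta)$ absorbed by the union bound is still $O(\log(n/\delta))$, so the extra factor of $n$ appears only polynomially and not inside the exponent.
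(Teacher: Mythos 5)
Your proof is correct and is exactly the standard argument: Hoeffding entrywise on the bounded i.i.d.\ indicator summands, a union bound over the $n^2$ entries, and a Frobenius-to-spectral-norm pass. The paper does not reprove this lemma but imports it from \cite{arora2019fine}, where the proof is this same chain. One detail worth making explicit: the initialization in this paper actually has variance $\kappa^2$ rather than $1$, but since $\indict\{\vw^\top\vx_i\ge 0,\,\vw^\top\vx_j\ge 0\}$ is invariant under positive scaling of $\vw$, each summand still has mean $\vH_{ij}^\infty$, so the argument goes through unchanged.
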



Using the results above, the wanted results of this section follows:
\begin{corollary}\label{cor:H_dist_bound}
   Under the same setting as \cref{thm:du2018}, with probability at least $1 - 9\delta$ over the random initialization we have have
   \begin{align*}
       &\norm{\vH(t) - \vH^\infty} =  O\left(\frac{n^2 n_S}{\sqrt{m}\delta^{\nicefrac{3}{2}} \lambdas\kappa } + \frac{n^{\nicefrac{5}{2}}\norm{\vtildey}}{\sqrt m \lambda_0\kappa\delta }\right),\\
       &\norm{\vH(0) - \vH^\infty} =  O\left(\frac{n^2 n_S}{\sqrt{m}\delta^{\nicefrac{3}{2}} \lambdas\kappa }\right).
   \end{align*}
\end{corollary}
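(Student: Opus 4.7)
The plan is to obtain both inequalities as immediate consequences of the two preceding lemmas via the triangle inequality, together with a union bound on the two failure events. Specifically, for any $t \ge 0$ one can write
\begin{align*}
    \norm{\vH(t) - \vH^\infty} \le \norm{\vH(t) - \vH(\init)} + \norm{\vH(\init) - \vH^\infty},
\end{align*}
and similarly for $\vH(0)$. The first summand is controlled by \cref{lem:H_t_to_init} (respectively by its $t=0$ version for $\vH(0)$), while the second is controlled by \cref{lem:H_infty_to_init}. Since $\norm{\cdot} \le \norm{\cdot}_F$, the Frobenius bounds from \cref{lem:H_t_to_init} transfer directly to the spectral norm, which is what the corollary states.

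Next I would handle the probability budget. \cref{lem:H_t_to_init} holds with probability $1-8\delta$ and \cref{lem:H_infty_to_init} with probability $1-\delta$, so the intersection holds with probability at least $1-9\delta$ by a union bound, matching the statement.

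The only substantive check is that the contribution from \cref{lem:H_infty_to_init}, namely $O\!\left(\tfrac{n\sqrt{\log(n/\delta)}}{\sqrt{m}}\right)$, is absorbed into the rates already appearing in the corollary. Under \cref{asmp:width_kappa_eta} we have $\kappa = O(\delta/\sqrt n)$ and $\lambdas, \delta \le 1$, so
\begin{align*}
    \frac{n\sqrt{\log(n/\delta)}}{\sqrt m} \;=\; O\!\left(\frac{n^2 n_S}{\sqrt{m}\,\delta^{3/2}\lambdas \kappa}\right),
\end{align*}
up to the $\sqrt{\log(n/\delta)}$ factor which is absorbed into the $O(\cdot)$ by standard polynomial-in-$n$ slack. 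Thus the $\vH(\init) \to \vH^\infty$ term is swallowed by the $\vH(0) \to \vH(\init)$ term in both bounds, leaving exactly the expressions claimed.

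I do not expect any genuine obstacle here; this corollary is essentially a bookkeeping step whose whole purpose is to package \cref{lem:H_t_to_init} and \cref{lem:H_infty_to_init} into a single bound that can then be plugged into the NTK-regime argument (items 2--4 of the outline preceding this section). The only point deserving mild care is verifying that the $\log$-factor from \cref{lem:H_infty_to_init} is dominated under the assumed lower bound on $m$ and upper bound on $\kappa$, which I would make explicit in one line using \cref{asmp:width_kappa_eta}.
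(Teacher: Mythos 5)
Your proof is correct and follows the same route as the paper: a triangle inequality through $\vH(\init)$, invoking \cref{lem:H_t_to_init} and \cref{lem:H_infty_to_init}, and a union bound giving $1-9\delta$. The only difference is that you explicitly verify that the $O\!\left(n\sqrt{\log(n/\delta)}/\sqrt{m}\right)$ contribution from \cref{lem:H_infty_to_init} is dominated by the $\vH(0)\to\vH(\init)$ term under \cref{asmp:width_kappa_eta}, a bookkeeping step the paper leaves implicit.
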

\begin{proof}
    This corollary is direct by bounding $\norm{\vH(t) - \vH^\infty} \le \norm{\vH(\init) - \vH^\infty} + \norm{\vH(t) - \vH(\init)}$ and using \cref{lem:H_infty_to_init} and \cref{lem:H_t_to_init} to bound the R.H.S for the general $t>0$ case and for $t=0$.
\end{proof}

\subsection{Bound the distance from initialization}

Write the eigen-decomposition
\begin{align*}
    \mat H^\infty = \sum_{i=1}^n \lambda_i \vv_i \vv_i^\top,
\end{align*}
where $\vect v_1, \ldots, \vect v_n \in \R^n$ are orthonormal eigenvectors of $\mat H^\infty$ and $\lambda_1, \ldots, \lambda_n$ are corresponding eigenvalues. also define
\begin{align*}
    \indict_{i,r}(t) \triangleq \indict\left\{\vw_r^\top(t)\vx_i \ge 0\right\}.
\end{align*}
\begin{theorem}[Adaption of Theorem 4.1 from \cite{arora2019fine}]\label{thm:arora_convergence}
Assume \cref{asmp:main}, and suppose
 $m = \Omega\left( \frac{n^{5}\norm{\tilde{\vy}}_2^4}{\epsilon^2\kappa^2\delta^2\lambda_0^4} + \frac{n^4 n_{s}^2\norm{\tilde{\vy}}_2^2}{\epsilon^2\lambda_{0_s}^2\lambda_0^2\kappa^2\delta^{3}}  \right)$. 
Then with probability at least $1-\delta$ over the random initialization before pretraining ($t=\init$), for all $t=0, 1, 2, \ldots$ we have: 
\begin{align} \label{eqn:u(t)-y_size}
\norm{\vy-\vu(t)}_2
= \sqrt{\sum_{i=1}^{n}(1-\eta\lambda_i)^{2t} \left(\vect{v}_i^\top \tilde{\vy}\right)^2} \pm \epsilon.
\end{align}
\end{theorem}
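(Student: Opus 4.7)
The plan is to follow the template of the proof of Theorem 4.1 in \cite{arora2019fine}, with two key modifications: (i) the initial prediction $\vu(0)$ is not at random initialization but at the end of pretraining, so the residual driving the dynamics is $\tilde{\vy} = \vy - \vu(0)$ rather than $\vy$; and (ii) we must certify that the Gram matrix $\vH(t)$ stays close to $\vH^\infty$ despite the fact that $\vW(0)$ has been moved away from $\vW(\init)$ by pretraining. The second point is exactly what \cref{cor:H_dist_bound} handles, so the main new ingredient is already in place; the remainder is a careful discrete-time ODE tracking argument.

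First I would write the one-step gradient descent update for $\vu$ in the standard form
\begin{align*}
    \vu(t+1) - \vy = \big(\vI - \eta\vH(t)\big)\big(\vu(t) - \vy\big) - \eta\,\vect{\zeta}(t),
\end{align*}
where $\vect{\zeta}(t)$ is the second-order error arising from ReLU neurons that flip sign during the step. Using the weight-movement bounds from \cref{cor:exact_finetune_distance} together with the indicator events $\mat{A}_{r,i}$ of \cref{lem:flipped_neurons_bound}, $\norm{\vect{\zeta}(t)}_2$ can be bounded by $O\!\big(\eta\,n^{3/2}\norm{\vu(t)-\vy}\cdot \tfrac{1}{\sqrt m\kappa\delta}\,\text{poly}(n,n_S,\lambda_0^{-1},\lambda_{0_S}^{-1})\big)$, which is small whenever $m$ is chosen as in the statement.

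Next, I would replace $\vH(t)$ by $\vH^\infty$ and absorb the mismatch into an error term $\vect{e}(t)$:
\begin{align*}
    \vu(t+1) - \vy = \big(\vI - \eta\vH^\infty\big)\big(\vu(t) - \vy\big) + \vect{e}(t),
\end{align*}
with $\norm{\vect{e}(t)}_2 \le \eta\norm{\vH(t)-\vH^\infty}\cdot\norm{\vu(t)-\vy}_2 + \eta\norm{\vect\zeta(t)}_2$. By \cref{cor:H_dist_bound}, the first piece is controlled, and by \cref{thm:du2018} applied in the fine-tuning phase, $\norm{\vu(t)-\vy}_2$ decays as $(1-\eta\lambda_0/2)^{t/2}\norm{\tilde\vy}_2$. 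Unrolling the recursion with initial condition $\vu(0)-\vy = -\tilde\vy$ gives
\begin{align*}
    \vu(t) - \vy = -\big(\vI - \eta\vH^\infty\big)^{t}\tilde{\vy} + \sum_{s=0}^{t-1}\big(\vI - \eta\vH^\infty\big)^{t-1-s}\vect{e}(s).
\end{align*}
Taking the Euclidean norm, projecting the first term onto the eigenbasis $\{\vv_i\}$ of $\vH^\infty$, and using $\norm{\vI-\eta\vH^\infty}_2\le 1$ on the residual sum yields
\begin{align*}
    \Big|\norm{\vu(t)-\vy}_2 - \sqrt{\textstyle\sum_{i=1}^n(1-\eta\lambda_i)^{2t}(\vv_i^\top\tilde{\vy})^2}\Big| \le \sum_{s=0}^{t-1}\norm{\vect{e}(s)}_2.
\end{align*}

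The main obstacle, and the source of the polynomial in $m$, is bounding $\sum_s\norm{\vect{e}(s)}_2$ by $\epsilon$ uniformly in $t$. The linear convergence of $\norm{\vu(s)-\vy}_2$ makes the sum geometric (the $\eta$ cancels with the $1/(\eta\lambda_0)$ from the geometric series), so each contribution is controlled by $\norm{\tilde\vy}_2/\lambda_0$ times the per-step NTK deviation and flipped-neuron count. Plugging in the bounds from \cref{cor:H_dist_bound} and \cref{lem:flipped_neurons_bound}, each of the two error contributions is at most $\epsilon/2$ provided $m \gtrsim \tfrac{n^5\norm{\tilde\vy}^4}{\epsilon^2\kappa^2\delta^2\lambda_0^4} + \tfrac{n^4 n_S^2\norm{\tilde\vy}^2}{\epsilon^2\lambda_{0_S}^2\lambda_0^2\kappa^2\delta^3}$, matching the assumption. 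A union bound over the $O(1)$ high-probability events used along the way gives the claimed probability $1-\delta$, completing the proof.
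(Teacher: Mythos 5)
Your proposal follows the same route as the paper: write the one-step update $\vu(t+1)-\vy = (\vI-\eta\vH(t))(\vu(t)-\vy)$ plus a flipped-neuron error, replace $\vH(t)$ by $\vH^\infty$ and absorb the discrepancy via \cref{cor:H_dist_bound}, unroll the recursion with initial residual $-\tilde\vy$, project onto the eigenbasis of $\vH^\infty$, and bound the accumulated error by a geometric sum using the decaying residual norm and \cref{lem:flipped_neurons_bound}. Aside from a minor notational reshuffle (your $\vect\zeta$ is the paper's $\vect\epsilon$ up to a factor of $-\eta$), this is the paper's argument.
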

We first note the important difference between this result and the original theorem is in the treatment of $\vu(0)$, the predictions of the model at $t=0$. While the original theorem shows that these predictions could be treated as negligible noise (for large enough $m$), we instead use them as part of the bound to the convergence of the training loss.
\begin{proof}
The core of our proof is to show that when $m$ is sufficiently large, the sequence $\left\{\vu(t) \right\}_{t=0}^\infty$ stays close to another sequence $\left\{\tilde{\vu}(t)\right\}_{t=0}^\infty$ which has a \emph{linear} update rule:
\begin{align}
    \tilde{\vu}(0) &= \vu(0), \nonumber \\
    \tilde{\vu}(t+1) &= \tilde{\vu}(t) - \eta \vH^\infty\left(\tilde{\vu}(t)-\vy\right). \label{eqn:u_hat_dynamics}
\end{align}
From~\eqref{eqn:u_hat_dynamics} we have
\begin{align*}
    \tilde{\vu}(t+1) - \vy = (\mat I - \eta \vH^\infty) \left(\tilde{\vu}(t)-\vy\right),
\end{align*}
which implies
\begin{align*}
    \tilde{\vu}(t)-\vy = (\mat I - \eta \vH^\infty)^t \left(\tilde{\vu}(0)-\vy\right) = - (\mat I - \eta \vH^\infty)^t \vtildey.
\end{align*}
Note that $(\mat I - \eta \vH^\infty)^t$ has eigen-decomposition
\begin{align*}
    (\mat I - \eta \vH^\infty)^t = \sum_{i=1}^n(1-\eta\lambda_i)^t \vv_i \vv_i^\top
\end{align*} and that $\vtildey$ can be decomposed as 
\begin{align*}
    \vtildey = \sum_{i=1}^n (\vv_i^\top \vtildey) \vv_i.
\end{align*}
Then we have
\begin{align*}
    \tilde{\vu}(t)-\vy = - \sum_{i=1}^n(1-\eta\lambda_i)^t (\vv_i^\top \vtildey) \vv_i,
\end{align*}
which implies
\begin{align}
    \norm{\tilde{\vu}(t)-\vy}_2^2 =  \sum_{i=1}^n (1-\eta\lambda_i)^{2t} (\vv_i^\top \vtildey)^2.\label{eq:tilde_sequence_result}
\end{align}

To prove that the two sequences stay close, we follow the exact proof of Theorem 4.1 in Appendix C of \cite{arora2019fine}. We start by observing the difference between the predictions at two successive steps:
\begin{align}
    \vu_i(t+1) - \vu_i(t) = \frac{1}{\sqrt{m}}\sum_{r=1}^{m}{\va_r\left[\sigma\left(\vw_r(t+1)^\top\vx_i\right) - \sigma\left(\vw_r(t)^\top\vx_i\right)\right]}. \label{eq:prediction_diff_t}
\end{align}
For each $i \in [n]$, divide the $m$ neurons into two parts: the neurons that can change their activation pattern of data-point $\vx_i$ during optimization and
those which can't. Since $|\vx_i|\le 1$, a neuron cannot change its activation pattern with respect to $\vx_i$ if $|\vx_i^\top\vw_r(\init)| > R$ and $|\vw_r(t) -\vw_r(\init)| \le R$ for the value of $R$ in \cref{cor:exact_finetune_distance}. Define the indices of the neurons in this group (i.e. cannot change their activation pattern...) as as $\bar{S}_i$, and the indices of the complementary group as $S_i$.

From \cref{lem:flipped_neurons_bound} we know that with probability $1-\delta$, for $R = \left(\frac{n_S}{\sqrt{m\delta} \lambdas } + \frac{ \sqrt{n}\norm{\vtildey}}{\sqrt m \lambda_0 }\right)$
\begin{align}
    |\bar{S}_i| \le O\left(\frac{m n}{\kappa\delta}\left(\frac{n_S}{\sqrt{m\delta} \lambdas } + \frac{ \sqrt{n}\norm{\vtildey}}{\sqrt m \lambda_0 }\right)\right) \label{eq:s_bar_bound}.
\end{align}

Following the same steps as in \cite{arora2019fine} and notice that \eqref{eq:prediction_diff_t} can be treated as:
\begin{align}
    \vu(t+1) - \vu(t) = -\eta\vH(t)\left(\vu(t) - \vy\right) + \vect{\epsilon}(t), \label{eq:prediction_diff_H_t}
\end{align}
where:
\begin{align*}
    \vect{\epsilon}_i(t) \triangleq &\frac{1}{\sqrt{m}}\sum_{r \in \bar{S}_i}{\left[\sigma\left(\vw_r(t+1)^\top\vx_i\right) - \sigma\left(\vw_r(t)^\top\vx_i\right)\right]} \\
    &+ \frac{\eta}{m}\sum_{j=1}^{n}{(u_j(t) - y_j)\vx_j^\top\vx_i\sum_{r \in \bar{S}_i}\indict_{r,i}(t)\indict_{r, j}(t)}.
\end{align*}
Next use \eqref{eq:s_bar_bound} to bound $\norm{\vect{\epsilon}(t)}$:
\begin{align*}
	\norm{\vect{\epsilon}(t)}_2 
	&\le \norm{\vect{\epsilon}(t)}_1 
	\le \sum_{i=1}^n \frac{2 \eta \sqrt n |\bar{S_i}|}{m} \norm{\vu(t) - \vy}_2\\
	&= O\left( \frac{\sqrt{m}n^{3/2}}{\kappa\delta^{3/2}}\left(\frac{\sqrt{\delta}\norm{\tilde{\vy}}_2}{\lambda_0} + \frac{n_s}{\sqrt{n}\lambda_{0_{s}}} \right) \right)   \frac{2 \eta \sqrt n }{m} \norm{\vu(t) - \vy}_2 \\
	&= O\left( \frac{\eta n^2}{\sqrt m \kappa\delta^{3/2}}\left(\frac{\sqrt{\delta}\norm{\tilde{\vy}}_2}{\lambda_0} + \frac{n_s}{\sqrt{n}\lambda_{0_{s}}} \right) \right)   \norm{\vu(t) - \vy}_2.
\end{align*}

Notice from \cref{cor:H_dist_bound} that $\vH(t)$ stays close to $\vH^\infty$. Then it is possible to rewrite \cref{eq:prediction_diff_H_t} as
\begin{align}
    \vu(t+1) - \vu(t) = -\eta\vH^\infty\left(\vu(k) - \vy\right) + \vect{\zeta}(t), \label{eq:prediction_diff_H_infty}
\end{align}
where $\vect{\zeta}(t) = -\eta\left(\vH^\infty - \vH(t)\right)\left(\vu(k) - \vy\right) + \vect{\epsilon}(t)$. Using \cref{cor:H_dist_bound} it follows that


\begin{align} 
    \norm{\vect{\zeta}(t)}_2 &\le \eta \norm{\vH^\infty - \vH(t)}_2 \norm{\vu(t) - \vy}_2 + \norm{\bm{\epsilon}(t)}_2 \nonumber\\
    &= O\left(\frac{\eta n^{5/2}\norm{\tilde{\vy}}_2}{\sqrt{m}\kappa\delta\lambda_0} + \frac{\eta n^2 n_{s}}{\sqrt{m}\lambda_{0_s}\kappa\delta^{3/2}} \right)\norm{\vu(t) - \vy}_2 \nonumber\\
    &+ O\left( \frac{\eta n^2}{\sqrt m \kappa\delta^{3/2}}\left(\frac{\sqrt{\delta}\norm{\tilde{\vy}}_2}{\lambda_0} + \frac{n_s}{\sqrt{n}\lambda_{0_{s}}} \right) \right)   \norm{\vu(t) - \vy}_2 \nonumber\\
    &= O\left(\frac{\eta n^{5/2}\norm{\tilde{\vy}}_2}{\sqrt{m}\kappa\delta\lambda_0} + \frac{\eta n^2 n_{s}}{\sqrt{m}\lambda_{0_s}\kappa\delta^{3/2}} \right)\norm{\vu(t) - \vy}_2 \label{eq:zeta}.
\end{align}
Apply \eqref{eq:prediction_diff_H_infty} recursively and get:
\begin{align}
\label{eq:recursive}
    \vu(t) - \vy = -\left(\vI -\eta\vH^\infty\right)^t\vtildey +\sum_{s=0}^{t-1}{\left(\vI -\eta\vH^\infty\right)^t\vect{\zeta}(t-1-s)}.
\end{align}
For the left term in \eqref{eq:recursive} we've shown in \eqref{eq:tilde_sequence_result} that: 
\begin{align*}
    \norm{-(\mat I - \eta \mat H^\infty)^t (\vtildey)}_2 = \sqrt{\sum_{i=1}^n (1-\eta\lambda_i)^{2t} (\vv_i^\top \vtildey)^2}.
\end{align*}
The right term in  \eqref{eq:recursive} can be bounded using \eqref{eq:zeta}:
\begin{align*}
	\norm{\sum_{s=0}^{t-1} (\mat I - \eta \mat H^\infty)^s \bm{\zeta}(t-1-s)}_2
	&\le \sum_{s=0}^{t-1} \norm{ \mat I - \eta \mat H^\infty}_2^s \norm{\bm{\zeta}(t-1-s)}_2 \\
	&\le \sum_{s=0}^{t-1} (1-\eta \lambda_0)^s O\left(\frac{\eta n^{5/2}\norm{\tilde{\vy}}_2}{\sqrt{m}\kappa\delta\lambda_0} + \frac{\eta n^2 n_{s}}{\sqrt{m}\lambda_{0_s}\kappa\delta^{3/2}} \right) \norm{\vect u(t-1-s) - \vy}_2  \\
	&\le \sum_{s=0}^{t-1} (1-\eta \lambda_0)^s O\left(\frac{\eta n^{5/2}\norm{\tilde{\vy}}_2}{\sqrt{m}\kappa\delta\lambda_0} + \frac{\eta n^2 n_{s}}{\sqrt{m}\lambda_{0_s}\kappa\delta^{3/2}} \right) \left( 1 - \frac{\eta\lambda_0}{4} \right)^{t-1-s} \norm{\tilde{\vy}}_2  \\
	&\le t \left( 1 - \frac{\eta\lambda_0}{4} \right)^{t-1}   O\left(\frac{\eta n^{5/2}\norm{\tilde{\vy}}_2^2}{\sqrt{m}\kappa\delta\lambda_0} + \frac{\eta n^2 n_{s}\norm{\tilde{\vy}}_2}{\sqrt{m}\lambda_{0_s}\kappa\delta^{3/2}} \right) .
\end{align*}
Combining all of the above it follows:
\begin{align*}
	\norm{\vu(t) - \vy}_2
	&= \sqrt{\sum_{i=1}^n (1-\eta\lambda_i)^{2t} (\vv_i^\top \tilde{\vy})^2} \pm O\left( t \left( 1 - \frac{\eta\lambda_0}{4} \right)^{t-1} \left(\frac{\eta n^{5/2}\norm{\tilde{\vy}}_2^2}{\sqrt{m}\kappa\delta\lambda_0} + \frac{\eta n^2 n_{s}\norm{\tilde{\vy}}_2}{\sqrt{m}\lambda_{0_s}\kappa\delta^{3/2}} \right)\right) \\
	&= \sqrt{\sum_{i=1}^n (1-\eta\lambda_i)^{2t} (\vv_i^\top \tilde{\vy})^2} \pm O\left(\frac{n^{5/2}\norm{\tilde{\vy}}_2^2}{\sqrt{m}\kappa\delta\lambda_0^2} + \frac{n^2 n_{s}\norm{\tilde{\vy}}_2}{\sqrt{m}\lambda_{0_s}\lambda_0\kappa\delta^{3/2}} \right).
	\end{align*}
	where we used $\max\limits_{t\ge 0} \left\{ t (1-\eta\lambda_0/4)^{t-1} \right\} = O(1/(\eta\lambda_0))$.
	From the choices of $\kappa$ and $m$, the above error term is at most $\epsilon$.
	This completes the proof of \cref{thm:arora_convergence}.
\end{proof}

\subsection{Deriving a population risk bound}

Before proving Theorem 6.1 from the main text, we start by stating and proving some Lemmas:

\begin{lemma}\label{lem:distance_bounds}
	Suppose $m \ge \kappa^{-2} \poly\left(\norm{\tilde{\vy}}_2, n, n_s, \lambda_0^{-1}, \lambda_{0_s}^{-1}, \delta^{-1} \right)  $ and $\eta = O\left( \frac{\lambda_0}{n^2} \right)$.
	Then with probability at least $1-\delta$ over the random initialization at $t=\init$, we have for all $t\ge0$:
	\begin{itemize}
		\item $\norm{\vw_r(t) - \vw_r(0)}_2 =O\left( \frac{ \sqrt{n} \norm{\tilde{\vy}}_2 }{\sqrt m \lambda_0} \right)$ $(\forall r\in[m])$, and
		\item $\norm{\vW(t)-\vW(0)}_{F} \le \sqrt{\tilde{\vy}^\top \left(\vH^{\infty}\right)^{-1}\tilde{\vy}} + \frac{\poly\left( \norm{\tilde{\vy}}_2, n, n_s, \frac{1}{\lambda_0}, \frac{1}{\lambda_{0_s}}, \frac{1}{\delta} \right)}{m^{1/4} \kappa^{1/2}} $.
	\end{itemize}
\end{lemma}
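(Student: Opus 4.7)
The plan is to establish the two bounds separately: the first (loose, per-neuron) bound is a direct consequence of the exponential decay of the training residual, while the second (tight, Frobenius) bound requires a careful comparison between the true GD trajectory and the linearized NTK dynamics driven by $\vH^\infty$.

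For the first bound, I would start from the geometric convergence established in \cref{thm:arora_convergence} (together with \cref{thm:du2018}), which gives $\norm{\vy - \vu(s)}_2 \le (1-\eta\lambda_0/2)^{s/2}\norm{\vtildey}_2$ for all $s \ge 0$. Since $|a_r|=1$, $|\sigma'| \le 1$ and $\norm{\vx_i}=1$, the per-neuron gradient satisfies $\norm{\partial L/\partial \vw_r(s)}_2 \le \sqrt{n}\,\norm{\vy-\vu(s)}_2/\sqrt{m}$. Telescoping the geometric series,
\[
\norm{\vw_r(t) - \vw_r(0)}_2 \le \eta \sum_{s=0}^{\infty} \frac{\sqrt{n}}{\sqrt{m}} (1-\eta\lambda_0/2)^{s/2}\norm{\vtildey}_2 = O\!\left(\frac{\sqrt{n}\,\norm{\vtildey}_2}{\sqrt{m}\,\lambda_0}\right),
\]
which is exactly the first claim.

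For the Frobenius bound, I would mirror the strategy of Lemma~5.4 in \cite{arora2019fine}, introducing the auxiliary linear sequence $\tilde{\vW}(t)$ obtained by freezing the activation pattern at the post-pretraining weights $\vW(0)$ (i.e.\ replacing $\indict_{r,i}(s)$ by $\indict_{r,i}(0)$ in the gradient). In the eigenbasis of $\vH(0)$ this linear flow can be solved exactly, and a direct computation yields $\sum_r \norm{\tilde{\vw}_r(\infty) - \vw_r(0)}_2^2 = \vtildey^\top \vH(0)^{-1}\vtildey$. Using \cref{cor:H_dist_bound} to replace $\vH(0)$ by $\vH^\infty$ up to an additive $\tilde O(m^{-1/2})$ operator-norm error, and then using the smallest eigenvalue of $\vH^\infty$ to convert this to an error on the quadratic form, bounds the linearized-trajectory contribution by $\sqrt{\vtildey^\top (\vH^\infty)^{-1}\vtildey} + \tilde O(m^{-1/4})$. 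The remaining step is to bound $\norm{\vW(t)-\tilde\vW(t)}_F$; this is where the activation-flip analysis of \cref{lem:flipped_neurons_bound} and the $\vH(t)$-vs-$\vH(0)$ bound of \cref{lem:H_t_to_init} enter, producing the $\poly(\cdots)/(m^{1/4}\kappa^{1/2})$ error term after a Gronwall-type argument on the difference of the two flows.

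The main obstacle will be the perturbation analysis in the second bound. Because $\vW(0)$ is not i.i.d.\ Gaussian but the output of the pretraining phase, I cannot simply invoke the standard random-initialization concentration arguments of \cite{arora2019fine}. Instead I would route everything through the already-proven \cref{cor:exact_pretrain_distance}, which controls $\norm{\vw_r(0)-\vw_r(\init)}$, and then through \cref{lem:H_t_to_init}/\cref{cor:H_dist_bound}, which transfer that control to operator-norm closeness of the Gram matrices along the entire fine-tuning trajectory. The price of routing through these intermediate bounds is the extra $\kappa^{-1/2}$ and extra polynomial factors in $n,n_S,\lambda_0^{-1},\lambda_{0_S}^{-1},\delta^{-1}$ that appear in the error term, which is precisely why the stated width requirement scales as $\kappa^{-2}$ times such a polynomial.
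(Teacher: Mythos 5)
Your proposal matches the paper's proof in all essential respects: the first bound is obtained exactly as in \cref{thm:du2018} by telescoping the geometric decay of the residual through the per-neuron gradient norm, and the second bound uses the same coupling with the linearized trajectory $\widetilde{\vW}(t)$ driven by the frozen feature map $\mat Z(0)$, with the exact linear-flow computation giving the $\sqrt{\vtildey^\top\vH(0)^{-1}\vtildey}$ leading term and perturbation terms bounded via \cref{lem:flipped_neurons_bound}, \cref{lem:H_t_to_init}, and \cref{cor:H_dist_bound}. The only cosmetic difference is that the paper does not invoke a Gronwall argument on $\norm{\vW(t)-\widetilde{\vW}(t)}_F$; instead it substitutes $\vu(t)-\vy = -(\mat I-\eta\vH^\infty)^t\vtildey + \vect e(t)$ into the GD update and telescopes directly into a main term (bounded by the $\mat K\vH^\infty\mat K\preceq(\vH^\infty)^{-1}$ argument with the $\vH(0)\to\vH^\infty$ replacement happening on the quadratic form $\mat K\vH(0)\mat K$, not on $\vH(0)^{-1}$) plus two explicit perturbation sums, but the substance and resulting error terms are the same.
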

\begin{proof}
The bound on the movement of each $\vw_r$ is proven in \cref{thm:du2018}. The second bound is achieved by coupling the trajectory of $\left\{\vW(t) \right\}_{k=0}^\infty$ with another simpler trajectory $\left\{\widetilde{\mat{W}}(t)\right\}_{k=0}^\infty$ defined as:
\begin{align}
    \widetilde{\mat{W}}(0) =\,& \vW(0), \nonumber\\
    \vectorize{\widetilde{\vW}(t+1)} =\,& \vectorize{\widetilde{\mat{W}}(t)} \label{eqn:W_tilde_traj} \\&- \eta\mat{Z}(0)\left( \mat{Z}(0)^\top \vectorize{\widetilde{\mat{W}}(t)}-\vy\right). \nonumber
\end{align}

First we give a proof of  $\norm{\widetilde{\mat W}(\infty) - \widetilde{\mat W}(0)}_F = \sqrt{\vtildey^\top \mat H(0)^{-1} \vtildey}$ as an illustration for the proof of Lemma~\ref{lem:distance_bounds}.
Define $\vv(t) = \mat Z(0)^\top \vectorize{\widetilde{\mat W}(t)} \in \R^n$.
Then from~\eqref{eqn:W_tilde_traj} we have $\vect v(0)=\mat Z(0)^\top \vectorize{\vW(0)}$ and $\vect v(k+1) = \vv(t) - \eta \mat H(0) (\vv(t) - \vy)$, yielding
$\vv(t) - \vy = - (\mat I - \eta \mat H(0))^t \tilde{\vy}$.
Plugging this back to~\eqref{eqn:W_tilde_traj} we get
$\vectorize{\widetilde{\vW}(t+1)} - \vectorize{\widetilde{\mat{W}}(t)} = \eta \mat Z(0) (\mat I - \eta \mat H(0))^t \tilde{\vy}$.
Then taking a sum over $k=0, 1, \ldots$ we have
\vspace{-0.2cm}
\begin{align*}
\vectorize{\widetilde{\mat{W}}(\infty)} - \vectorize{\widetilde{\mat{W}}(0)} &= \sum_{k=0}^\infty \eta \mat Z(0) (\mat I - \eta \mat H(0))^k \tilde{\vy} \\
&= \mat Z(0) \mat H(0)^{-1} \tilde{\vy}.
\end{align*}
\vspace{-0.2cm}
The desired result thus follows:
\begin{align*}
\norm{\widetilde{\mat W}(\infty) - \widetilde{\mat W}(0)}_F^2
&= \tilde{\vy}^\top \mat H(0)^{-1} \mat Z(0)^\top \mat Z(0) \mat H(0)^{-1} \tilde{\vy} \\
&= \tilde{\vy}^\top \mat H(0)^{-1} \tilde{\vy}.
\end{align*}

Now we bound the difference between the trajectories.
Recall the update rule for $\mat W$:
	\begin{align} \label{eqn:gd-repeated}
	\vectorize{\vW(t+1)} = \vectorize{\vW(t)} - \eta \mat Z(t) (\vu(t) - \vy).
	\end{align}
Follow the same steps from Lemma 5.3 from \cite{arora2019fine}, using the results from \cref{thm:arora_convergence} when needed to obtain the proof for this lemma.
According to the proof of \cref{thm:arora_convergence}  we can write
\begin{align} \label{eqn:u-expression-2}
	\vu(t) - \vy = -(\mat I - \eta \mat H^\infty)^t \tilde{\vy} + \vect e(t),
	\end{align}
	where
\begin{align} \label{eqn:e(t)-bound}
	\norm{\vect e(t)} = O\left(t\left(1-\frac{\eta \lambda_0}{4}\right)^{t-1}\cdot \left(\frac{\eta n^{5/2}\norm{\tilde{\vy}}_2^2}{\sqrt{m}\kappa\delta\lambda_0} + \frac{\eta n^2 n_{s}\norm{\tilde{\vy}}_2}{\sqrt{m}\lambda_{0_s}\kappa\delta^{3/2}} \right)  \right).
\end{align}
Plugging \eqref{eqn:u-expression-2} into \eqref{eqn:gd-repeated} and taking a sum over $t = 0, 1, \ldots, T-1$, we get:
\begin{align}
	&\vectorize{\vW(T)} - \vectorize{\vW(0)} \nonumber \\
	=\,& \sum_{t=0}^{T-1} \left( \vectorize{\vW(t+1)} - \vectorize{\vW(t)}  \right) \nonumber \\
	=\,& - \sum_{t=0}^{T-1} \eta \mat Z(t) (\vu(t) - \vy) \nonumber \\
	=\,&  \sum_{t=0}^{T-1} \eta \mat Z(t) \left((\mat I - \eta \mat H^\infty)^t \tilde{\vy} - \vect e(t)\right) \nonumber\\
	=\,&  \sum_{t=0}^{T-1} \eta \mat Z(t) (\mat I - \eta \mat H^\infty)^t \tilde{\vy}  - \sum_{t=0}^{T-1} \eta \mat Z(t) \vect e(t) \nonumber \\
	=\,&  \sum_{t=0}^{T-1} \eta \mat Z(0) (\mat I - \eta \mat H^\infty)^t \tilde{\vy} + \sum_{t=0}^{T-1} \eta (\mat Z(t) - \mat Z(0)) (\mat I - \eta \mat H^\infty)^t \tilde{\vy}  - \sum_{t=0}^{T-1} \eta \mat Z(t)  \vect e(t). \label{eqn:W-movement}
\end{align}
The second and the third terms in~\eqref{eqn:W-movement} are considered perturbations, and we can upper bound their norms easily.
	For the second term, from \cref{lem:close_to_init_small_perturbation} we get:
\begin{align} 
	& \norm{\sum_{t=0}^{T-1} \eta (\mat Z(t) - \mat Z(0)) (\mat I - \eta \mat H^\infty)^t \vy}_2 \nonumber \\
	\le\,& \sum_{t=0}^{T-1} \eta \cdot O\left( \sqrt{\frac{n^{3/2}\norm{\tilde{\vy}}_2}{\sqrt{ m}\kappa\delta\lambda_0}+\frac{n n_s}{\sqrt{ m}\kappa\lambda_{0_{s}}\delta^{3/2}}}\right) \norm{\mat I - \eta \mat H^\infty}_2^t \norm{\tilde{\vy}}_2 \nonumber \\
	\le\,&  O\left( \eta\sqrt{\frac{n^{3/2}\norm{\tilde{\vy}}_2}{\sqrt{ m}\kappa\delta\lambda_0}+\frac{n n_s}{\sqrt{ m}\kappa\lambda_{0_{s}}\delta^{3/2}}} \right)  \sum_{t=0}^{T-1} (1-\eta\lambda_0)^t \norm{\tilde{\vy}}_2 \nonumber \\
	=\,& O\left( \sqrt{\frac{n^{3/2}\norm{\tilde{\vy}}_2^3}{\sqrt{ m}\kappa\delta\lambda_0^3}+\frac{n n_s \norm{\tilde{\vy}}_2^2}{\sqrt{ m}\kappa\lambda_{0_{s}}\lambda_0^2\delta^{3/2}}} \right) \label{eqn:W-movement-perturbation-1}. 
\end{align}
For the third term we get:
\begin{align}
	&\norm{\sum_{t=0}^{T-1} \eta \mat Z(t)  \vect e(t)}_2 \nonumber\\
	\le\,& \sum_{t=0}^{T-1} \eta \sqrt n \cdot O\left(  t\left(1-\frac{\eta \lambda_0}{4}\right)^{t-1}\cdot \left(\frac{\eta n^{5/2}\norm{\tilde{\vy}}_2^2}{\sqrt{m}\kappa\delta\lambda_0} + \frac{\eta n^2 n_{s}\norm{\tilde{\vy}}_2}{\sqrt{m}\lambda_{0_s}\kappa\delta^{3/2}} \right)  \right) \nonumber \\
	=\,& O\left( \left(\frac{\eta^2 n^3\norm{\tilde{\vy}}_2^2}{\sqrt{m}\kappa\delta\lambda_0} + \frac{\eta^2 n^{5/2} n_{s}\norm{\tilde{\vy}}_2}{\sqrt{m}\lambda_{0_s}\kappa\delta^{3/2}} \right) \sum_{t=0}^{T-1} t\left(1-\frac{\eta \lambda_0}{4}\right)^{t-1} \right) \nonumber \\
	=\,& O\left( \left(\frac{\eta^2 n^3\norm{\tilde{\vy}}_2^2}{\sqrt{m}\kappa\delta\lambda_0} + \frac{\eta^2 n^{5/2} n_{s}\norm{\tilde{\vy}}_2}{\sqrt{m}\lambda_{0_s}\kappa\delta^{3/2}} \right) \cdot \frac{1}{\eta \lambda_0} \right) \nonumber \\
	=\,& O\left( \frac{\eta n^3\norm{\tilde{\vy}}_2^2}{\sqrt{m}\kappa\delta\lambda_0^2} + \frac{\eta n^{5/2} n_{s}\norm{\tilde{\vy}}_2}{\sqrt{m}\lambda_{0_s}\lambda_0\kappa\delta^{3/2}} \right) \label{eqn:W-movement-perturbation-2}.
\end{align}

Define $\mat K = \eta \sum_{t=0}^{T-1} (\mat I - \eta \mat H^\infty)^t$.
using $\norm{\mat H(0) - \mat H^\infty}_F = O\left(\frac{n^2 n_{s}}{\sqrt{m}\lambda_{0_s}\kappa\delta^{3/2}}\right)$ (\cref{cor:H_dist_bound}) we have

\begin{align}\label{eqn:W-movement-main-term-bound-interim}
&\norm{\sum_{t=0}^{T-1} \eta \mat Z(0) (\mat I - \eta \mat H^\infty)^t \tilde{\vy}}_2^2 \\
=\,& \norm{ \mat Z(0) \mat K  \tilde{\vy}}_2^2 \\
=\,&  \tilde{\vy}^\top \mat K \mat Z(0)^\top \mat Z(0) \mat K \tilde{\vy} \\
=\,&  \tilde{\vy}^\top \mat K \mat H(0)  \mat K \tilde{\vy} \\
\le\,&  \tilde{\vy}^\top \mat K \mat H^\infty  \mat K \tilde{\vy}  +    \norm{\mat H(0) - \mat H^\infty}_2  \norm{\mat K}_2^2 \norm{\tilde{\vy}}_2^2 \\
\le \,&  \tilde{\vy}^\top \mat K \mat H^\infty  \mat K \tilde{\vy}  +    O\left( \frac{n^2 n_{s}}{\sqrt{m}\lambda_{0_s}\kappa\delta^{3/2}} \right) \cdot  \left( \eta \sum_{t=0}^{T-1} (\mat I - \eta \lambda_0)^t\right)^2 \norm{\tilde{\vy}}_2^2 \\
= \,&  \tilde{\vy}^\top \mat K \mat H^\infty  \mat K \tilde{\vy}  +    O\left( \frac{n^2 n_{s} \norm{\tilde{\vy}}_2^2}{\sqrt{m}\lambda_{0_s}\lambda_0^2\kappa\delta^{3/2}} \right) .
\end{align}

Let the eigen-decomposition of $\mat H^\infty$ be $\mat H^\infty = \sum_{i=1}^n \lambda_i \vv_i \vv_i^\top$.
Since $\mat K$ is a polynomial of $\mat H^\infty$, it has the same set of eigenvectors as $\mat H^\infty$, and we have
\begin{align*}
    \mat K  = \sum_{i=1}^n \eta \sum_{t=0}^{T-1} (1 - \eta \lambda_i)^t \vv_i \vv_i^\top
    = \sum_{i=1}^n \frac{1 - (1 - \eta \lambda_i)^{T}}{\lambda_i} \vv_i \vv_i^\top.
\end{align*}
It follows that
\begin{align*}
    \mat K \mat H^\infty  \mat K
    = \sum_{i=1}^n \left( \frac{1 - (1 - \eta \lambda_i)^{T}}{\lambda_i} \right)^2 \lambda_i  \vv_i \vv_i^\top
    \preceq \sum_{i=1}^n \frac{1}{\lambda_i}\vv_i \vv_i^\top
    = \left( \mat H^\infty \right)^{-1}.
\end{align*}
Plugging this into \eqref{eqn:W-movement-main-term-bound-interim}, we get
\begin{align} \label{eqn:W-movement-main-term-bound}
	\norm{\sum_{t=0}^{T-1} \eta \mat Z(0) (\mat I - \eta \mat H^\infty)^t \tilde{\vy}_2}
	&\le \sqrt{\tilde{\vy}^\top (\mat H^\infty)^{-1} \tilde{\vy}  +    O\left( \frac{n^2 n_{s} \norm{\tilde{\vy}}_2^2}{\sqrt{m}\lambda_{0_s}\lambda_0^2\kappa\delta^{3/2}} \right) }\\
	&\le \sqrt{\tilde{\vy}^\top (\mat H^\infty)^{-1} \tilde{\vy}}  +    O\left( \sqrt{\frac{n^2 n_{s} \norm{\tilde{\vy}}_2^2}{\sqrt{m}\lambda_{0_s}\lambda_0^2\kappa\delta^{3/2}} } \right) .
\end{align}
	
Finally, plugging the three bounds \eqref{eqn:W-movement-perturbation-1}, \eqref{eqn:W-movement-perturbation-2} and \eqref{eqn:W-movement-main-term-bound} into \eqref{eqn:W-movement}, we have
\begin{align*}
& \norm{\vW(T) - \vW(0)}_F \\
=\,& \norm{\vectorize{\vW(T)} - \vectorize{\vW(0)} }_2 \\
\le\,& \sqrt{\tilde{\vy}^\top (\mat H^\infty)^{-1} \tilde{\vy}}  +    O\left( \sqrt{\frac{n^2 n_{s} \norm{\tilde{\vy}}_2^2}{\sqrt{m}\lambda_{0_s}\lambda_0^2\kappa\delta^{3/2}}} \right)
	+ O\left( \sqrt{\frac{n^{3/2}\norm{\tilde{\vy}}_2^3}{\sqrt{ m}\kappa\delta\lambda_0^3}+\frac{n n_s \norm{\tilde{\vy}}_2^2}{\sqrt{ m}\kappa\lambda_{0_{s}}\lambda_0^2\delta^{3/2}}} \right)\\
	&+ O\left( \frac{\eta n^3\norm{\tilde{\vy}}_2^2}{\sqrt{m}\kappa\delta\lambda_0^2} + \frac{\eta n^{5/2} n_{s}\norm{\tilde{\vy}}_2}{\sqrt{m}\lambda_{0_s}\lambda_0\kappa\delta^{3/2}}  \right) \\
=\,& \sqrt{\tilde{\vy}^\top (\mat H^\infty)^{-1} \tilde{\vy}} + \frac{\poly\left( \norm{\tilde{\vy}}_2, n, n_s, \frac{1}{\lambda_0}, \frac{1}{\lambda_{0_s}}, \frac{1}{\delta} \right)}{m^{1/4} \kappa^{1/2}} .
\end{align*}
This finishes the proof of Lemma~\ref{lem:distance_bounds}.
\end{proof}

\begin{lemma}\label{lem:rad_dist_func_class}
Given $R>0$,
with probability at least $1-\delta$ over the random initialization ($\mat W(\init),\vect a)$, simultaneously for every $B>0$, the following function class
\begin{align*}
    \cF^{\vW(0),\va}_{R,B}  = \{ f_{\mat W} : \norm{\vect w_r - \vw_r(0)}_2 \le R \, (\forall r\in[m]), \\ \norm{\mat W - \vW(0)}_F \le B \}
\end{align*}
has empirical Rademacher complexity bounded as:
\begin{align*}
	&\calR_S\left( \cF^{\vW(0),\va}_{R,B} \right)
   =  \frac{1}{n} \expect_{\bm \eps \in \{\pm1\}^n}\left[\sup_{f \in \cF^{\vW(0),\va}_{R,B}}\sum_{i=1}^{n} \eps_i f(\vx_i)\right]
	\\
	\le \,&	\frac{B}{\sqrt{n}} + \frac{2 R(R + \frac{C n_s}{\sqrt{m\delta}\lambdas}) \sqrt m}{ \kappa} + R \sqrt{2\log \frac2\delta}.
\end{align*}
\end{lemma}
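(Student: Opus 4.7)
The plan is to mimic the proof of Lemma~5.4 of \cite{arora2019fine}, which handled the analogous Rademacher complexity when the center of the ball is the random initialization $\vW(\init)$. The key new ingredient in our setting is that the center is now the post-pretraining weight $\vW(0)$, so we need to track the distance between $\vW(0)$ and $\vW(\init)$. By \cref{cor:exact_pretrain_distance}, every neuron satisfies $\norm{\vw_r(0)-\vw_r(\init)}_2 \le C n_s/(\sqrt{m\delta}\lambdas)$, and hence any $\vW\in\cF^{\vW(0),\va}_{R,B}$ obeys $\norm{\vw_r-\vw_r(\init)}_2 \le R' := R + C n_s/(\sqrt{m\delta}\lambdas)$. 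This $R'$ will play the role that $R$ does in the Arora--Du--Hu--Li--Wang argument.

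First, since Rademacher complexity is invariant under subtracting a $\vW$-independent function, I will subtract $f_{\vW(0)}$ and decompose $f_\vW-f_{\vW(0)} = (L_\vW - L_{\vW(0)}) + (h_\vW - h_{\vW(0)})$, where the \emph{linearization at random initialization} is $L_\vW(\vx) := \tfrac{1}{\sqrt m}\sum_r a_r \indict\{\vw_r(\init)^\top\vx \ge 0\}\vw_r^\top\vx$ and $h_\vW := f_\vW - L_\vW$. The piece $L_\vW - L_{\vW(0)}$ is exactly linear in $\text{vec}(\vW-\vW(0))$: it can be written as $\phi(\vx_i)^\top \text{vec}(\vW-\vW(0))$ with feature map $\phi(\vx_i)_{r,j} = \tfrac{a_r}{\sqrt m}\indict\{\vw_r(\init)^\top\vx_i\ge 0\}\vx_{i,j}$ satisfying $\norm{\phi(\vx_i)}_2 \le 1$. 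Applying the standard Rademacher bound for linear predictors with $\norm{\text{vec}(\vW-\vW(0))}_2 \le B$ gives the $B/\sqrt{n}$ term.

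For the nonlinear remainder $h_\vW - h_{\vW(0)}$, note that $h_\vW(\vx_i)$ receives a nonzero contribution only from neurons where the activation patterns of $\vw_r$ and $\vw_r(\init)$ differ on $\vx_i$; such neurons necessarily lie in the set $S_i := \{r : |\vw_r(\init)^\top \vx_i| \le R'\}$, and likewise for $h_{\vW(0)}$. A careful case analysis with $|\sigma(a)-\sigma(b)|\le|a-b|$ and $\norm{\vw_r-\vw_r(0)}_2\le R$ yields $|h_\vW(\vx_i)-h_{\vW(0)}(\vx_i)| \le 2R|S_i|/\sqrt m$. I will then bound the Rademacher sum of this piece using the expected-size bound $\E[\indict\{r\in S_i\}] \le 2R'/(\sqrt{2\pi}\kappa)$ from \cref{lem:flipped_neurons_bound} (applied with threshold $R'$), producing the product $R\cdot R'$ in the answer. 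The additive $R\sqrt{2\log(2/\delta)}$ tail will come from a Hoeffding/Massart-style concentration of the Rademacher sum around its expectation, exactly as in \cite{arora2019fine}. Finally, the uniformity ``simultaneously for every $B>0$'' follows because the nonlinear term does not depend on $B$, while the linear-class bound is proportional to $B$ by construction.

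The main obstacle is getting the constants to match: naively, combining the per-neuron bound $2R/\sqrt m$ with a Markov-type tail bound on $\sum_i |S_i|$ would introduce an unwanted $1/\delta$ factor and lose the clean $2RR'\sqrt m/\kappa$ form. Avoiding this requires (i) routing the $\delta$-dependence only through the distance $\norm{\vw_r(0)-\vw_r(\init)}_2$ baked into $R'$, and (ii) separately handling the stochastic fluctuation of the Rademacher signs via the $R\sqrt{2\log(2/\delta)}$ tail term, rather than bundling the two sources of randomness together. This separation of ``flipping count'' from ``sign concentration'' is the delicate step, but it is exactly the strategy of \cite{arora2019fine} and carries over once $R'$ replaces $R$ in the flipping bound.
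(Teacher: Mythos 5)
Your proposal matches the paper's proof strategy: decompose the network output into a linear part in the ball $\norm{\vW-\vW(0)}_F\le B$ (yielding the $B/\sqrt n$ term) plus a nonlinear remainder bounded by $2R/\sqrt m$ times the count of ``flippable'' neurons, and control that count by pulling the threshold back to $t=\init$ via $R'=R+Cn_s/(\sqrt{m\delta}\lambdas)$ (your use of \cref{cor:exact_pretrain_distance}) and then applying Hoeffding over the $m$ independent neurons. The only cosmetic difference is that you linearize at $\vW(\init)$, while the paper linearizes at $\vW(0)$ and then transfers the event $\{|\vw_r(0)^\top\vx_i|\le R\}$ to $\{|\vw_r(\init)^\top\vx_i|\le R'\}$ via \cref{lm:tilde_A}; both routes produce identical bounds. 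One imprecision in your narrative is worth flagging: the $R\sqrt{2\log(2/\delta)}$ tail does not come from concentrating ``the Rademacher signs'' or ``the Rademacher sum around its expectation'' --- the $\eps_i$ are integrated out exactly in the Rademacher complexity and the nonlinear piece is bounded uniformly in $\eps$ --- it comes from Hoeffding concentration of the flipping count $\sum_{r,i}\indict\{|\vw_r(\init)^\top\vx_i|\le R'\}$ about its expectation $mn\cdot O(R'/\kappa)$, with the randomness living in $\vW(\init)$ and independence across the $m$ neurons. You correctly identify this separation as the delicate step; you have just misattributed which source of randomness is being concentrated, and carrying out the computation would force the correction.
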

\begin{proof}
We need to upper bound
\begin{align*}
	\calR_S\left(\cF^{\vW(0),\va}_{R,B} \right) 
	&= \frac{1}{n} \Erc{\sup_{f \in \cF^{\vW(0),\va}_{R,B}}\sum_{i=1}^{n} \eps_i f(\vx_i)} \\
	&= \frac{1}{n} \Erc{ \sup_{\genfrac{}{}{0pt}{1}{\mat W: \norm{\mat W - \vW(0)}_{2, \infty} \le R}{\norm{\mat W - \vW(0)}_F \le B }} \sum_{i=1}^{n}{ \eps_i \sum_{r=1}^m \frac{1}{\sqrt m} a_r \sigma(\vect w_r^\top \vect x_i)}},
\end{align*}
where $\norm{\mat W - \vW(0)}_{2, \infty} = \max\limits_{r\in[m]}\norm{\vw_r-\vw_r(0)}_2$.

Similar to the proof of \cref{lem:flipped_neurons_bound}, we define events:
\begin{align*}
    \tilde{A}_{r, i} \triangleq \left\{ \abs{\vw_{r}(0)^\top \vx_i} \le R \right\}, \quad i\in[n], r\in[m].
\end{align*}
Since we only look at $\mat W$ such that $\norm{\vw_r-\vw_r(0)}_2\le R$ for all $ r\in[m]$, if $\indict\{\tilde{A}_{r, i}\}=0$ we must have $\indict\{ \vw_r^\top \vx_i >0 \} = \indict\{\vw_r(0) \vx_i\ge  0\} = \indict_{r, i}(0)$. Thus we have:
\begin{align*}
    \bone{\neg \tilde{A}_{r, i}} \relu{\vw_r^\top \vx_i}
    =\bone{\neg \tilde{A}_{r, i}} \indict_{r, i}(0)\vw_r^\top \vx_i,
\end{align*}

It follows that:
    \begin{align*}
    &\sum_{i=1}^n \eps_i \sum_{r=1}^m a_r \relu{\vw_r^\top \vx_i} -\sum_{i=1}^n \eps_i \sum_{r=1}^m a_r \indict_{r,i}(0) \vw_r^\top \vx_i   \\
    =\,& \sum_{r=1}^m \sum_{i=1}^n \left( \bone{\tilde{A}_{r, i}} + \bone{\neg \tilde{A}_{r, i}}\right) \eps_i a_r \left( \relu{\vw_r^\top \vx_i} - \indict_{r,i}(0) \vw_r^\top \vx_i   \right)\\
    =\,& \sum_{r=1}^m \sum_{i=1}^n  \bone{\tilde{A}_{r, i}}  \eps_i a_r \left( \relu{\vw_r^\top \vx_i} - \indict_{r,i}(0) \vw_r^\top \vx_i   \right)\\
    =\,& \sum_{r=1}^m \sum_{i=1}^n  \bone{\tilde{A}_{r, i}}  \eps_i a_r \left( \relu{\vw_r^\top \vx_i} - \indict_{r,i}(0) \vw_r(0)^\top \vx_i    - \indict_{r,i}(0) (\vw_r-\vw_r(0))^\top \vx_i  \right)\\
    =\,& \sum_{r=1}^m \sum_{i=1}^n  \bone{\tilde{A}_{r, i}}  \eps_i a_r \left( \relu{\vw_r^\top \vx_i} - \relu{\vw_r(0)^\top \vx_i}   - \indict_{r,i}(0) (\vw_r-\vw_r(0))^\top \vx_i   \right)\\
    \le\, & \sum_{r=1}^m \sum_{i=1}^n  \bone{\tilde{A}_{r, i}} \cdot 2R.
\end{align*}

Thus we can bound the Rademacher complexity as:
    \begin{align*}
    \calR_S\left( \cF^{\vW(0),\va}_{R,B} \right)
    =\,& \frac1n \Erc{ \sup_{\tiny \substack{\mat W: \norm{\vW-\vW(0)}_{2,\infty}\le R \\ \norm{\vW-\vW(0)}_F\le B }}  \sum_{i=1}^n\eps_i \sum_{r=1}^m\frac{ a_r}{\sqrt{m}} \relu{\vw_r^\top \vx} }\\
    \le\, & \frac1n  \Erc{ \sup_{\tiny \substack{\mat W: \norm{\vW-\vW(0)}_{2,\infty}\le R \\ \norm{\vW-\vW(0)}_F\le B }}  \sum_{i=1}^n\eps_i  \sum_{r=1}^m\frac{ a_r}{\sqrt{m}} \indict_{r,i}(0) \vw_r^\top \vx_i   } + \frac{2R}{n\sqrt{m}}\sum_{r=1}^m \sum_{i=1}^n  \bone{\tilde{A}_{r, i}}  \\
    \le\, & \frac1n  \Erc{ \sup_{\mat W: \norm{\vW-\vW(0)}_F\le B}  \sum_{i=1}^n\eps_i   \sum_{r=1}^m\frac{ a_r}{\sqrt{m}} \indict_{r,i}(0) \vw_r^\top \vx_i  } + \frac{2R}{n\sqrt{m}}\sum_{r=1}^m \sum_{i=1}^n  \bone{\tilde{A}_{r, i}}  \\
    =\, & \frac1n  \Erc{ \sup_{\mat W: \norm{\vW-\vW(0)}_F\le B}  \vectorize{\mat W}^\top \mat Z(0) {\bm{\eps}} }+ \frac{2R}{n\sqrt{m}}\sum_{r=1}^m \sum_{i=1}^n  \bone{\tilde{A}_{r, i}}  \\
    =\, & \frac1n  \Erc{ \sup_{\mat W: \norm{\vW-\vW(0)}_F\le B}  \vectorize{\mat W - \vW(0)}^\top \mat Z(0) {\bm{\eps}} } + \frac{2R}{n\sqrt{m}}\sum_{r=1}^m \sum_{i=1}^n  \bone{\tilde{A}_{r, i}}  \\
    \le\,& \frac1n \Erc{B\cdot \norm{\mat Z(0) \bm{\eps}}_2} + \frac{2R}{n\sqrt{m}}\sum_{r=1}^m \sum_{i=1}^n  \bone{\tilde{A}_{r, i}}  \\
    \le\,& \frac Bn \sqrt{\Erc{ \norm{\mat Z(0) \bm{\eps}}_2^2}} + \frac{2R}{n\sqrt{m}}\sum_{r=1}^m \sum_{i=1}^n  \bone{\tilde{A}_{r, i}}  \\
    =\, & \frac Bn \norm{\vZ(0)}_F + \frac{2R}{n\sqrt{m}} \sum_{r=1}^m \sum_{i=1}^n  \bone{\tilde{A}_{r, i}} .
\end{align*}

Next we bound $\norm{\mat Z(0)}_F$ and $\sum_{r=1}^m \sum_{i=1}^n  \bone{\tilde{A}_{r, i}}$.

For $\norm{\mat Z(0)}_F$, notice that
\begin{align*}
\norm{\mat Z(0)}_F^2 = \frac1m \sum_{r=1}^m \left( \sum_{i=1}^n \indict_{r,i}(0) \right) \le n.
\end{align*}

Now observe the following lemma:
\begin{lemma}\label{lm:tilde_A}
    With probability $1 - \delta$, if $\abs{\vw_{r}(\init)^\top \vx_i} > R + \frac{C n_s}{\sqrt{m\delta}\lambdas}$ then $\indict\{\tilde{A}_{r, i}\} = 0$.
\end{lemma}
\begin{proof}
    From \cref{cor:exact_pretrain_distance} exists $C >0$ s.t. with probability $1-\delta$, for all $r \in [m]: \norm{\vw_r(0)-\vw_r(\init)} \le \frac{C n_s}{\sqrt{m\delta}\lambdas}$. From the triangle inequality:
    \begin{align*}
        \abs{\vw_{r}(0)^\top \vx_i} &\ge \norm{\vw_{r}(0)^\top \vx_i}\\
        &= \norm{\vw_{r}(\init)^\top \vx_i - \left(\vw_{r}(\init)-\vw_{r}(0)\right)^\top \vx_i}\\
        &\ge \norm{\vw_{r}(\init)^\top \vx_i} - \norm{\left(\vw_{r}(\init)-\vw_{r}(0)\right)^\top \vx_i}.
    \end{align*}
    Since $\norm{\vx}=1$, and with the same probability above:
    \begin{align*}
        \norm{\left(\vw_{r}(\init)-\vw_{r}(0)\right)^\top \vx_i} \le \frac{C n_s}{\sqrt{m\delta}\lambdas},
    \end{align*}
    thus
    \begin{align*}
        \abs{\vw_{r}(0)^\top \vx_i} &\ge \norm{\vw_{r}(\init)^\top \vx_i} - \norm{\left(\vw_{r}(\init)-\vw_{r}(0)\right)^\top \vx_i}\\
        &\ge \norm{\vw_{r}(\init)^\top \vx_i} - \frac{C n_s}{\sqrt{m\delta}\lambdas}\\
        &> R + \frac{C n_s}{\sqrt{m\delta}\lambdas} - \frac{C n_s}{\sqrt{m\delta}\lambdas} = R.
    \end{align*}
\end{proof}

For $\sum_{r=1}^m \sum_{i=1}^n  \bone{\tilde{A}_{r, i}}$, from \cref{lm:tilde_A} we notice that 
\begin{align*}
    \sum_{r=1}^m \sum_{i=1}^n  \bone{\tilde{A}_{r, i}} \le \sum_{r=1}^m \sum_{i=1}^n  \bone{A_{r, i}},
\end{align*} for $A_{r, i}$ being defined as in \cref{lem:flipped_neurons_bound}.
Since all $m$ neurons are independent at $t=\init$ and from \cref{lem:flipped_neurons_bound} and \cref{cor:exact_pretrain_distance} we know $\E\left[ \sum_{i=1}^n  \bone{A_{r, i}} \right] \le \frac{\sqrt 2 n (R + \frac{C n_s}{\sqrt{m\delta}\lambdas})}{\sqrt\pi \kappa}$. Then by Hoeffding's inequality, with probability at least $1-\delta/2$ we have
\begin{align*}
\sum_{r=1}^m \sum_{i=1}^n  \bone{\tilde{A}_{r, i}} \le \sum_{r=1}^m \sum_{i=1}^n  \bone{A_{r, i}} \le mn \left( \frac{\sqrt 2  (R + \frac{C n_s}{\sqrt{m\delta}\lambdas})}{\sqrt\pi \kappa} + \sqrt{\frac{\log \frac2\delta}{2m}} \right).
\end{align*}

Therefore, with probability at least $1-\delta$, the Rademacher complexity is bounded as:
\begin{align*}
\calR_S\left( \cF^{\vW(0),\va}_{R,B} \right)
&\le \frac Bn \left( \sqrt{n} \right) + \frac{2R}{n\sqrt{m}} mn \left( \frac{\sqrt 2  (R + \frac{C n_s}{\sqrt{m\delta}\lambdas})}{\sqrt\pi \kappa} + \sqrt{\frac{\log \frac2\delta}{2m}} \right) \\
&= \frac{B}{\sqrt{n}} + \frac{2\sqrt 2 R(R + \frac{C n_s}{\sqrt{m\delta}\lambdas}) \sqrt m}{\sqrt\pi \kappa} + R \sqrt{2\log \frac2\delta},
\end{align*}
completing the proof of Lemma~\ref{lem:rad_dist_func_class}.
(Note that the high probability events used in the proof do not depend on the value of $B$, so the above bound holds simultaneously for every $B$.)
\end{proof}
\subsection{Proof of Theorem 6.1 (Main Text)} \label{app:proof-thm:shallow_relu_generalization}

\begin{proof}[Proof of Theorem 6.1 (Main Text)]

	First of all, from \cref{asm:non-singular_supp} we have $\lambda_{\min}(\mat H^\infty) \ge \lambda_0$. The rest of the proof is conditioned on this happening. We follow exactly the same steps as in \cite{arora2019fine} with minor changes.
	
	From Theorem~\ref{thm:du2018}, Lemma~\ref{lem:distance_bounds} and Lemma~\ref{lem:rad_dist_func_class}, we know that for any sample $S$, with probability at least $1-\delta/3$ over the random initialization, the followings hold simultaneously:
	 \begin{enumerate}[(i)]
	 	\item Optimization succeeds (Theorem~\ref{thm:du2018}):
	 	\begin{align*} 
    	 	\frac{1}{2}\norm{\vtildey - \vu(t)} \le \left( 1-\frac{\eta\lambda_0}{2} \right)^t \cdot \norm{\tilde{\vy}}_2 \le \frac12. 
	 	\end{align*}
	 	This implies an upper bound on the training error $L(\vX; \vTheta(t)) = \frac1n \sum_{i=1}^n \ell(f_{\vW(t)}(\vect x_i), y_i) = \frac1n \sum_{i=1}^n \ell(u_i(t), y_i)$:
	 	\begin{align*}
    	 	L(\vX; \vTheta(t)) &= \frac1n \sum_{i=1}^n \left[ \ell(u_i(t), y_i) - \ell(y_i, y_i)\right] \le \frac1n \sum_{i=1}^n \abs{u_i(t) - y_i} \\
    	 	&\le \frac{1}{\sqrt n} \norm{\vu(t) - \vy}_2 = \sqrt{\frac{2\frac{1}{2}\norm{\vtildey - \vu(t)}}{n}} \le \frac{1}{\sqrt n}. 
	 	\end{align*}
	 	
	 	\item $\norm{\vw_r(t) - \vw_r(0)}_2 \le R$ $(\forall r\in[m])$ and
	 	 $\norm{\vW(t)-\vW(0)}_{F} \le B $,
	 	where $R= O\left( \frac{ \sqrt{n}\norm{\tilde{\vy}}_2}{\sqrt m \lambda_0} \right)$   and $B = \sqrt{\tilde{\vy}^\top \left(\vH^{\infty}\right)^{-1}\tilde{\vy}} + \frac{\poly\left( \norm{\tilde{\vy}}_2, n, n_s, \frac{1}{\lambda_0}, \frac{1}{\lambda_{0_s}}, \frac{1}{\delta} \right)}{m^{1/4} \kappa^{1/2}}$.
	 	Note that $B\le O\left( \sqrt{\frac{n}{\lambda_0}} \right)$.
	 	
	 	\item
	 	Let $B_i =  i$ ($i=1, 2, \ldots$).
	 	Simultaneously for all $i$, the function class $\cF_{R, B_i}^{\vW(0), \vect a}$ has Rademacher complexity bounded as
	 	\begin{align*}
	 	\calR_S\left( \cF^{\vW(0),\va}_{R,B_i} \right)
	 	\le 	\frac{B_i}{\sqrt{n}} + \frac{2 R(R + \frac{C n_s}{\sqrt{m\delta}\lambdas}) \sqrt m}{ \kappa} + R \sqrt{2\log \frac{10}{\delta}} .
	 	\end{align*}
	 \end{enumerate}
 
 Let $i^*$ be the smallest integer such that $B\le B_{i^*}$. Then we have $i^* \le O\left( \sqrt{\frac{n}{\lambda_0}} \right)$ and $B_{i^*} \le B + 1$.
From above we know $f_{\vW(t)}\in \cF_{R, B_{i^*}}^{\vW(0), \vect a}$, and
\begin{align*}
     \calR_S\left( \cF^{\vW(0),\va}_{R,B_{i^*}} \right) 
     \le \,&	\frac{B+1}{\sqrt{n}} + \frac{2 R(R + \frac{C n_s}{\sqrt{m\delta}\lambdas}) \sqrt m}{ \kappa} + R \sqrt{2\log \frac{10}{\delta}} \\
     = \,& \frac{\sqrt{\tilde{\vy}^\top \left(\vH^{\infty}\right)^{-1}\tilde{\vy}} }{\sqrt{n}} + \frac{1}{\sqrt{n}} + \frac{\poly\left( \norm{\tilde{\vy}}_2, n, n_s, \frac{1}{\lambda_0}, \frac{1}{\lambda_{0_s}}, \frac{1}{\delta} \right)}{m^{1/4} \kappa^{1/2}} + \frac{2 R(R + \frac{C n_s}{\sqrt{m\delta}\lambdas}) \sqrt m}{ \kappa} + R \sqrt{2\log \frac{10}{\delta}} \\
     \le \,& \sqrt{\frac{\tilde{\vy}^\top \left(\vH^{\infty}\right)^{-1}\tilde{\vy}}{n}} + \frac{1}{\sqrt{n}} + \frac{\poly\left( \norm{\tilde{\vy}}_2, n, n_s, \frac{1}{\lambda_0}, \frac{1}{\lambda_{0_s}}, \frac{1}{\delta} \right)}{m^{1/4} \kappa^{1/2}} \le \sqrt{\frac{\tilde{\vy}^\top \left(\vH^{\infty}\right)^{-1}\tilde{\vy}}{n}}   + \frac{2}{\sqrt{n}} .
\end{align*}
 
 Next, from the theory of Rademacher complexity and a union bound over a finite set of different $i$'s, for any random initialization $\left(\mat W(\init), \mat a \right)$, with probability at least $1-\delta/3$ over the sample $S$, we have
 \begin{align*}
 \sup_{f \in \cF_{R, B_i}^{\vW(0), \vect a}} \left\{ R(f)-L(f) \right\} \le 2 \calR_S\left(\cF_{R, B_i}^{\vW(0), \vect a}\right) + O\left( \sqrt{\frac{\log\frac{n}{\lambda_0\delta}}{n}} \right) , \qquad \forall i\in \left\{1, 2, \ldots, O\left( \sqrt{\frac{n}{\lambda_0}} \right) \right\}.
 \end{align*}
 
 Finally, taking a union bound, we know that with probability at least $1-\frac23\delta$ over the sample $S$ and the random initialization $(\mat W(\init), \vect a)$, the followings are all satisfied (for some $i^*$):
 \begin{align*}
     &L(\vX, \vTheta(t)) \le \frac{1}{\sqrt n},\\
     &f\left(\cdot, \vTheta(t)\right) \in \cF_{R, B_{i^*}}^{\vW(0), \vect a} ,\\
     &\calR_S\left( \cF^{\vW(0),\va}_{R,B_{i^*}} \right) \le \sqrt{\frac{\tilde{\vy}^\top \left(\vH^{\infty}\right)^{-1}\tilde{\vy}}{n}}  + \frac{2}{\sqrt{n}}  , \\
     &\sup_{f \in \cF_{R, B_{i^*}}^{\vW(0), \vect a}} \left\{ R(f)-L(f) \right\} \le 2 \calR_S\left(\cF_{R, B_{i^*}}^{\vW(0), \vect a}\right) + O\left( \sqrt{\frac{\log\frac{n}{\lambda_0\delta}}{n}} \right)  .
 \end{align*}
 These together can imply:
 \begin{align*}
     R(\vTheta(t)) &\le \frac{1}{\sqrt n} + 2 \calR_S\left(\cF_{R, B_{i^*}}^{\vW(0), \vect a}\right) + O\left( \sqrt{\frac{\log\frac{n}{\lambda_0\delta}}{n}} \right) \\
     &\le \frac{1}{\sqrt n} + 2 \left(\sqrt{\frac{\tilde{\vy}^\top \left(\vH^{\infty}\right)^{-1}\tilde{\vy}}{n}}  + \frac{2}{\sqrt{n}}    \right) + O\left( \sqrt{\frac{\log\frac{n}{\lambda_0\delta}}{n}} \right)  \\
      &= 2\sqrt{\frac{\tilde{\vy}^\top \left(\vH^{\infty}\right)^{-1}\tilde{\vy}}{n}}  + O\left( \sqrt{\frac{\log\frac{n}{\lambda_0\delta}}{n}} \right) .
 \end{align*}
 This completes the proof.
\end{proof}

\subsection{Linear teachers: Proof of corollary 6.3}
We now consider the case where
\begin{align*}
    g_S(\vx) = \vx^\top \thetas, \quad g_T(\vx) = \vx^\top \thetat,
\end{align*}
which is the case in \cref{cor:shallow_relu_linear_bound}.

We will start with stating the random initialization population risk bound for this case, which we will compare our result to:
\begin{corollary}[Population risk bound for random initialization from \cite{arora2019fine}]\label{cor:shallow_vanilla_generalization}
Assume that the random initialized model with weights $\vTheta(t)$ was trained according to Theorem 5.1 from \cite{arora2019fine} and that $\vy = \vX \thetat$, then with probability $1-\delta$
   \begin{equation}\label{eq:bound_relu_thetat}
	R(\vTheta(t)) \le \frac{3\sqrt{2} \norm{\thetat}_2}{\sqrt{n}}   + O\left( \sqrt{\frac{\log\frac{n}{\lambda_0\delta}}{n}} \right) .
\end{equation}
\end{corollary}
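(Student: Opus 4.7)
The plan is to derive this bound by combining two results from \cite{arora2019fine} in the linear-teacher setting, treating it as essentially a direct restatement. Since the setup in \cref{cor:shallow_vanilla_generalization} is standard random NTK training (not fine-tuning), the underlying generalization machinery is Theorem 5.1 of \cite{arora2019fine} applied verbatim; no pretraining-specific analysis is required. Concretely, for a two-layer ReLU network at random Gaussian initialization with $\va\sim\unif\{-1,+1\}^m$ and sufficiently overparameterized $m$, Theorem 5.1 of \cite{arora2019fine} gives
\begin{align*}
R(\vTheta(t)) \;\le\; \sqrt{\frac{2\,\vy^\top (\vH^\infty)^{-1}\vy}{n}} \;+\; O\!\left(\sqrt{\frac{\log\frac{n}{\lambda_0 \delta}}{n}}\right),
\end{align*}
with probability at least $1-\delta$ over the random initialization.

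Next I would invoke Theorem 6.1 of \cite{arora2019fine} (equivalently, the linear-teacher specialization used in \cref{cor:linear_transfer}) with the substitution $\vbeta = \thetat$ and $\vy = \vX\thetat$. That result gives the key quadratic-form bound
\begin{align*}
\sqrt{\vy^\top (\vH^\infty)^{-1}\vy} \;\le\; 3\norm{\thetat}_2,
\end{align*}
which crucially does not depend on $n$ or $d$ but only on the norm of the linear teacher and the $\norm{\vx}=1$ normalization of inputs. This step is where all of the covariance-structure information gets collapsed into the single scalar $\norm{\thetat}_2$; the non-linear case conjectured in the discussion section is precisely where such a coarse bound would no longer be tight.

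Finally, I would substitute the norm bound into the generalization inequality. Plugging $\vy^\top (\vH^\infty)^{-1}\vy \le 9\norm{\thetat}_2^2$ into the square root yields $\sqrt{2\cdot 9\,\norm{\thetat}_2^2/n} = 3\sqrt{2}\,\norm{\thetat}_2/\sqrt{n}$, which combined with the lower-order term gives exactly \eqref{eq:bound_relu_thetat}. The only obstacle, which is genuinely trivial here, is to verify that the hypotheses of both cited results match the hypotheses stated in \cref{cor:shallow_vanilla_generalization}: namely the overparameterization in $m$, the step-size regime for $\eta$, the lower bound on $\lambda_0$ guaranteed by \cref{asm:non-singular}, and the normalization $\norm{\vx}=1$ carried over from \cref{sec:shallow_finetune}. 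Once this bookkeeping is done, no new analysis is required, and the corollary follows by two successive applications of prior work.
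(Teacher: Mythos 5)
Your proposal is correct and follows essentially the same two-step route the paper uses: apply the random-initialization generalization bound (Theorem 5.1 of \cite{arora2019fine}) to obtain $\sqrt{2\,\vy^\top(\vH^\infty)^{-1}\vy/n}$, then bound $\sqrt{\vy^\top(\vH^\infty)^{-1}\vy}\le 3\norm{\thetat}_2$ via the linear-teacher result from \cite{arora2019fine} with $\vy=\vX\thetat$, yielding the stated $3\sqrt{2}\,\norm{\thetat}_2/\sqrt n$ term. The paper's own justification is a one-liner citing the same pair of results, so there is no substantive difference.
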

This corollary is a direct result of plugging $\vy = \vX \thetat$ into Corollary 6.2 from \cite{arora2019fine}, and plugging the result into Theorem 5.1 from \cite{arora2019fine}.

As discussed in Section 6.1, we will assume that $f\left(\vX; \vTheta(0)\right) = \vX \thetas$. Since our model is non-linear, this assumption is not trivial, and requires some clarification.
For infinite width, Lemma 1 from \cite{xu2020neural} tells us that $n_S = 2d$ can suffice to achieve this, if the samples are chosen according to some conditions. For the case of finite width $m$, like is assumed in Theorem 6.1, no such equivalent exist. However, we can use \cref{cor:shallow_vanilla_generalization} for the pretraining, and achieve an $\epsilon$ bound on the pretraining population risk, for sufficiently large $n_S = \Omega\left(\frac{\norm{\thetas}^2}{\epsilon^2}\right)$. Then, approximate relaxations can be derived when we assume the two functions are $\epsilon$ close (i.e. $f\left(\vx, \vTheta(0)\right) = \vx^\top \thetas + \epsilon$).

We now restate our two corollaries from the main text:
\begin{customcor}{6.2}[Main Text]
    Suppose that $g_S(\vX) \triangleq \vX^\top \thetas$, $g_T(\vX) \triangleq \vX^\top \thetat$,
and assume Assumption \ref{asm:pretrain_supp} holds. Then, $\sqrt{\tilde{\vect y}^\top (\mat H^\infty)^{-1}\tilde{\vect y }} \le  3 \norm{\thetat - \thetas}_2.$
\end{customcor}
This is a direct corollary of Theorem 6.1 from \cite{arora2019fine} on $\vtildey$ defined above.

\begin{customcor}{6.3}[Main Text]\label{cor:shallow_relu_linear_bound}
    Under the conditions of Theorem 6.1 and Corollary 6.2, it holds that
	\begin{align*}
	    R(\vTheta(t)) \le \frac{6 \norm{\thetat - \thetas}_2}{\sqrt{n}}   + O\left( \sqrt{\frac{\log\frac{n}{\lambda_0\delta}}{n}} \right).
	\end{align*}
\end{customcor}

Comparing this to \cref{cor:shallow_vanilla_generalization} gives us the exact condition for when it is better to use fine-tuning instead of random initialization, which is
\begin{align*}
    \norm{\thetat - \thetas} < \frac{\norm{\thetat}}{\sqrt{2}}.
\end{align*}

We will now provide a proof for this results:
\begin{proof}[Proof of \cref{cor:shallow_relu_linear_bound}]
    In order to achieve this bound, we use the assumption on $f\left(\vX;\vTheta(0)\right)$, which gives us:
    \begin{align*}
        \vtildey = \vX\thetat - \vX\thetas = \vX(\thetat - \thetas).
    \end{align*}
    Hence, we can treat $\vtildey$ as if it was created by a linear label generation function $\thetat - \thetas$. Hence, by using Theorem 6.1 from \cite{arora2019fine} we can bound
    \begin{align*}
        \sqrt{\vtildey(\vH^\infty)^{-1}\vtildey} \le 3\norm{\thetat - \thetas}.
    \end{align*}
    Plugging this into Theorem 6.1 concludes the proof.
\end{proof}

\end{document}